\documentclass{article}
\usepackage{graphicx} 
\usepackage[letterpaper,top=2cm,bottom=2cm,left=2cm,right=1.8cm,marginparwidth=1.75cm]{geometry}
\usepackage{amsthm}
\newtheorem{theorem}{Theorem}
\newtheorem{lemma}{Lemma}

\usepackage{amssymb}
\usepackage{xcolor}
\usepackage{amsmath}
\usepackage{comment}
\usepackage{soul}
\usepackage{authblk}

\usepackage{pifont}

\newcommand{\cmark}{\textcolor{green}{\ding{51}}} 
\newcommand{\omark}{\textcolor{orange}{\ding{51}}} 
\newcommand{\xmark}{\textcolor{red}{\ding{55}}}   

\usepackage{microtype}
\usepackage{subcaption}
\usepackage{todonotes}
\usepackage{float}

\usepackage{multirow} 
\usepackage{rotating}
\usepackage{pdflscape}
\usepackage{hyperref}

\title{\textbf{FEKAN: Feature-Enriched Kolmogorov-Arnold Networks}}
\author[]{Sidharth S. Menon}
\author[]{Ameya D. Jagtap\thanks{Corresponding author: Ameya D. Jagtap (ajagtap@wpi.edu, ameyadjagtap@gmail.com)}}

\affil[]{\textit{\small{Aerospace Engineering Department, Worcester Polytechnic Institute, Worcester, MA 01609, USA.}}}
\date{}

\begin{document}

\maketitle
\begin{abstract}
Kolmogorov–Arnold Networks (KANs) have recently emerged as a compelling alternative to multilayer perceptrons, offering enhanced interpretability via functional decomposition. However, existing KAN architectures, including spline-, wavelet-, radial-basis variants, etc., suffer from high computational cost and slow convergence, limiting scalability and practical applicability. Here, we introduce \textit{Feature-Enriched Kolmogorov–Arnold Networks} (FEKAN), a simple yet effective extension that preserves all the advantages of KAN while improving computational efficiency and predictive accuracy through feature enrichment, without increasing the number of trainable parameters. By incorporating these additional features, FEKAN accelerates convergence, increases representation capacity, and substantially mitigates the computational overhead characteristic of state-of-the-art KAN architectures. We investigate FEKAN across a comprehensive set of benchmarks, including function-approximation tasks, physics-informed formulations for diverse partial differential equations (PDEs), and neural operator settings that map between input and output function spaces. For function approximation, we systematically compare FEKAN against a broad family of KAN variants, FastKAN, WavKAN, ReLUKAN, HRKAN, ChebyshevKAN, RBFKAN, and the original SplineKAN. Across all tasks, FEKAN demonstrates substantially faster convergence and consistently higher approximation accuracy than the underlying baseline architectures.
Building on this foundation, we introduce a physics-informed extension, PI-FEKAN, and evaluate it on multiple PDE systems. PI-FEKAN outperforms existing physics-informed KAN formulations, achieving higher solution fidelity while simultaneously reducing training cost. To this end, in neural operator settings, FEKAN exhibits a pronounced advantage in learning high-frequency mappings, accurately capturing fine-scale structures that challenge existing operator-learning approaches. We also establish the theoretical foundations for FEKAN, showing its superior representation capacity compared to KAN, which contributes to improved accuracy and efficiency. These findings highlight feature enrichment as a powerful and general principle for improving the efficiency of KAN-based models, extending their applicability beyond core machine learning to scientific machine learning and computational physics, where interpretability and computational tractability are critical.
\end{abstract}

\vspace{0.2cm}

 \begin{small}Keywords: \textit{Kolmogorov-Arnold Networks}; \textit{Physics-Informed Neural Networks}; \textit{Feature Enrichment}; \textit{Feature-Enriched Kolmogorov-Arnold Networks}.
\end{small}

\section{Introduction}

\begin{figure}[ht]
\centering
{\label{fig:1}
\centering
\includegraphics[width=\linewidth, clip]{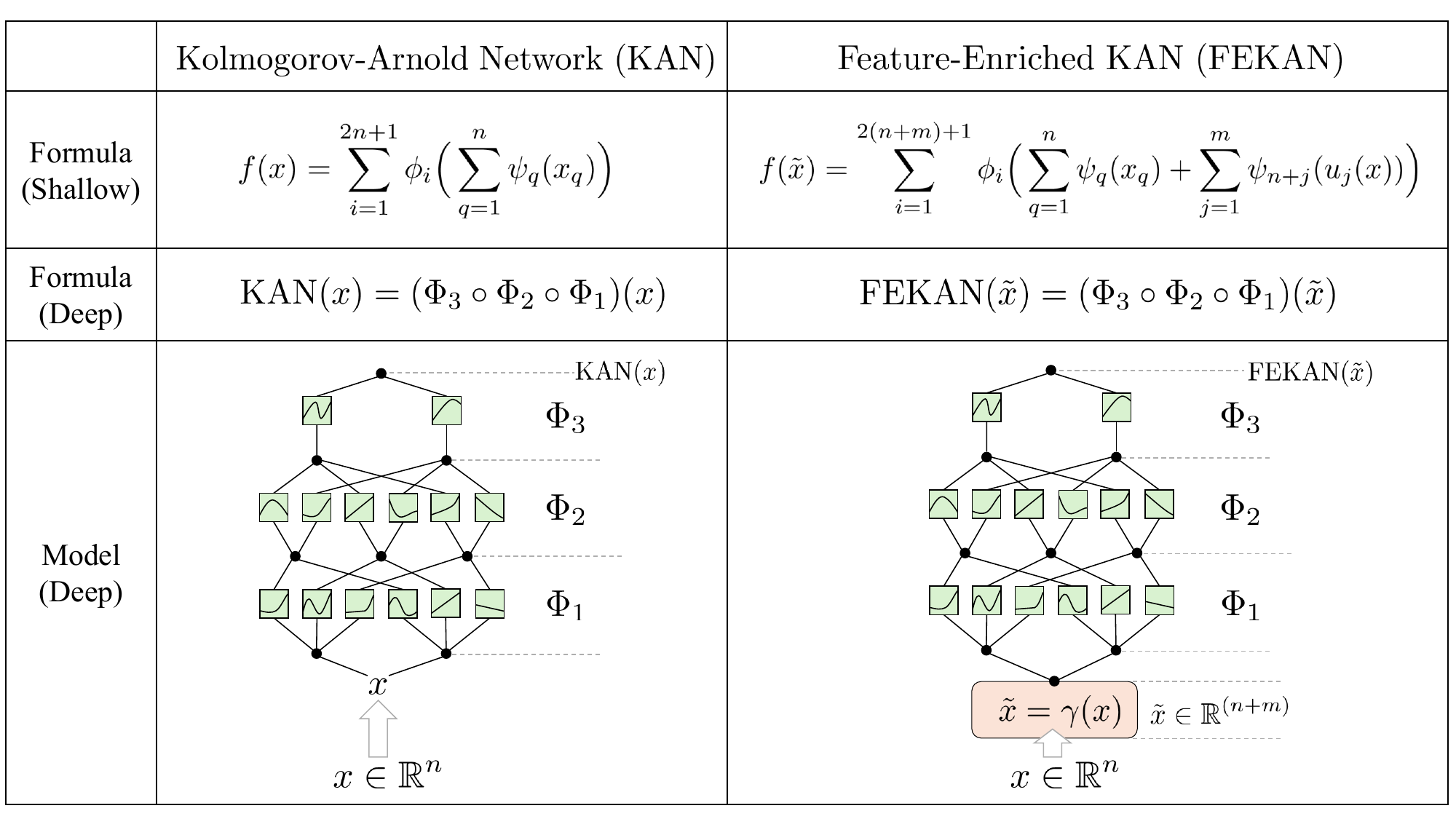}
}
\caption{Schematic representation of KAN vs. FEKAN architectures. In FEKAN, $\gamma(x)$ is the input feature map, $\gamma:x \xrightarrow{} \tilde{x}$.}
\label{fig:FEKANfig}
\end{figure}

Feature-space enrichment has long been a central theme in machine learning, with wide applicability to both regression and classification tasks. In its simplest form, linear regression models the response variable as a linear function of input vectors $x \in \mathbb{R}^n$. While effective in many settings, this formulation becomes inadequate when the underlying relationships between variables are nonlinear. In such cases, linear models fail to capture higher-order interactions and complex dependencies present in the data. A principled yet practical remedy is to introduce a feature map $\gamma: \mathbb{R}^n \rightarrow \mathbb{R}^{n+m}$ that lifts the original input space into a higher-dimensional representation. This transformation augments the feature set with nonlinear or higher-order terms, which can then be treated as independent predictors within a linear modelling framework. By operating in the transformed space, linear regression can effectively approximate nonlinear relationships without altering its fundamental structure. An analogous rationale applies to classification problems. Through suitable feature enrichment, data that are not linearly separable in the original space may become separable by a linear decision boundary in the transformed space. In this sense, feature encoding may be viewed as a change of basis, projecting the data onto a representation that more faithfully captures the structure of the underlying problem. Common choices of basis expansions include polynomial functions, Fourier series and radial basis functions. The selection of basis is often guided by prior knowledge of the data-generating process. For example, when the data exhibit periodic structure, Fourier bases provide a natural and efficient representation, whereas problems characterised by localised structure may benefit from radial basis expansions.

Recent years have witnessed a marked resurgence in the use of multi-layer perceptrons (MLPs), driven largely by advances in hardware accelerators that enable efficient large-scale training via backpropagation. Despite their expressive capacity, MLPs exhibit an inherent spectral bias, favouring the learning of low-frequency components while struggling to represent high-frequency structure. This limitation can be detrimental in tasks where fine-scale detail is critical. Feature enrichment has emerged as an effective strategy to mitigate this shortcoming. Tancik et al.~\cite{tancik2020fourier} introduced Fourier feature mappings as a simple yet powerful mechanism for enabling coordinate-based MLPs to learn high-frequency functions in low-dimensional domains. Through a series of image regression and reconstruction experiments, they demonstrated that augmenting inputs with Fourier features substantially improves the model’s ability to capture high-frequency content, without requiring modifications to the underlying network architecture. Building on this framework of positional encoding using fixed Fourier features, Sun et al.~\cite{sun2024learning} proposed sinusoidal positional encoding (SPE), in which the frequencies are learned adaptively rather than specified a priori as hyperparameters. This formulation enhances flexibility and reduces manual tuning. The authors validated the effectiveness of SPE across a range of applications, including speech synthesis and image reconstruction, highlighting its capacity to model complex, high-frequency signals.

The Kolmogorov-Arnold Network (KAN)~\cite{liu2024kan} is a recently introduced neural network architecture that replaces conventional linear weights with learnable univariate functions. This design is motivated by the Kolmogorov-Arnold representation theorem and aims to model nonlinear relationships through structured functional decomposition rather than fixed affine transformations. Within the KAN framework, feature enrichment offers a potentially valuable complement to the underlying basis functions that constitute the model. By transforming the input representation into a more expressive feature space, the functional components of KAN may be relieved of modelling highly intricate structure directly, thereby facilitating faster convergence and reducing overall training time. Furthermore, such enrichment may enhance parametric efficiency, enabling comparatively lightweight models to attain performance on par with larger architectures trained in the original feature space without transformation. Although KAN exhibits appealing properties, including interpretability and favourable parameter scaling, its training cost remains substantially higher than that of conventional MLPs, limiting its practicality in large-scale settings. To date, feature enrichment strategies have not been systematically explored in conjunction with KAN-based architectures. We posit that integrating such transformations could provide a simple yet effective means of improving computational efficiency, particularly in scientific and engineering applications, without introducing additional architectural complexity.

Compared to conventional MLPs, KAN exhibits several favourable properties, notably interpretability and strong parametric efficiency, rendering it particularly attractive for scientific and engineering applications. Building on these advantages, KAN has rapidly been extended beyond tabular settings to a range of data modalities, including unstructured data such as graphs~\cite{kiamari2024gkan,de2024kolmogorov}, as well as structured domains such as images~\cite{abd2024ckan}, spectral data~\cite{jamali2024learn}, and time series~\cite{vaca2024kolmogorov}. In the mathematical sciences, KAN has been integrated with physics-based constraints by incorporating partial differential equations (PDEs) directly into the loss function to obtain unique and physically consistent solutions~\cite{liu2024kan}. This formulation was subsequently termed physics-informed KAN (PI-KAN)~\cite{wang2025kolmogorov,zhao2025pikan,guo2025physics,shukla2024comprehensive}, with further computational refinements such as SPI-KAN~\cite{jacob2025spikans} introducing separable architectures to efficiently solve PDEs in up to three spatial dimensions. Owing to its interpretability, KAN has also recently been applied to high-dimensional PDEs~\cite{menon2025anant}. In the context of operator learning, KAN has been combined with continual learning strategies to mitigate spectral bias when modelling high-frequency dynamical systems~\cite{Zhang2025BubbleOKAN}, while related approaches have incorporated Gaussian radial basis functions within the KAN framework to address mechanics problems. Despite these promising developments, KAN remains computationally demanding, with training times substantially exceeding those of standard MLPs. In this work, we introduce \textit{Feature-Enriched Kolmogorov-Arnold Networks} (FEKAN), a simple yet powerful extension of KAN and its variants. FEKAN is designed to mitigate these computational limitations while preserving the interpretability and parametric efficiency of the original KAN architecture. Figure \ref{fig:FEKANfig} shows the schematic representation of FEKAN and its comparison with KAN. The main contributions of this study are summarised as follows:
\begin{itemize}
    \item We introduce FEKAN, an efficient extension of the vanilla KAN architecture that enhances computational efficiency and predictive performance while preserving core attributes, including interpretability and parametric efficiency.
    \item We formulate a theoretical framework extending the superposition theorem to the FEKAN setting and provide a rigorous analysis of the associated gains in representational capacity, clarifying the mechanisms by which feature enrichment contributes to improved generalization.
    \item We evaluate the proposed framework across tasks of increasing complexity, including \textit{nonlinear regression, the numerical solution of diverse classes of PDEs}, and \textit{neural operator learning} of input–output mappings in high-frequency regimes. This comprehensive evaluation enables a systematic assessment of robustness across problem settings and architectural configurations.
    The regression benchmarks involve test functions exhibiting challenging characteristics, including high-frequency components and discontinuities. The differential equation benchmarks encompass a broad spectrum of dynamical regimes, including chaotic dynamical systems (systems of ordinary differential equations (ODEs)), as well as steady and time-dependent PDEs defined over two- and three-dimensional spatio-temporal domains.This selection spans low- and high-dimensional settings, nonlinear and multiscale behaviour, and both transient and equilibrium phenomena, thereby enabling a systematic evaluation of model robustness across diverse mathematical structures and physical regimes.

    \item We perform a comprehensive comparison between FEKAN and several state-of-the-art KAN variants, including the original KAN, FastKAN~\cite{li2024kolmogorov}, Chebyshev KAN~\cite{ss2024chebyshev}, WaveKAN~\cite{bozorgasl2405wav}, ReLUKAN~\cite{qiu2024relu}, and HRKAN~\cite{so2025higher}. This extensive evaluation demonstrates that FEKAN is not limited to a specific architecture, but rather provides a general framework for enhancing stability and robustness across diverse KAN formulations.

    \item Although KAN has demonstrated continual learning capabilities in relatively simple regression settings, its applicability to scientific problems, particularly in the context of PDE solving, has not been systematically investigated. We therefore examine catastrophic forgetting within a representative boundary value problem and empirically demonstrate that FEKAN exhibits improved continual learning performance relative to the original KAN architecture.

\end{itemize}
If KAN is to serve as a foundational building block for scientific applications, motivated by its favourable properties relative to MLPs, addressing the limitations of the vanilla architecture is essential to ensure scalability to real-world problems of practical relevance. In this regard, scalable KAN formulations could ultimately be adopted as core components for the development of scientific foundation models (SciFMs)~\cite{menon2026scientific}, offering an alternative to MLP-based designs.
Although several recent studies have explored hybrid MLP–KAN architectures to improve computational efficiency, such substantial architectural modifications often compromise key attributes of the original KAN framework, including interpretability and parametric efficiency. In contrast, the proposed FEKAN framework offers a unified solution for alleviating the computational bottlenecks of vanilla KAN while preserving its desirable characteristics. The efficacy of this approach is substantiated through comprehensive empirical evaluations and theoretical analyses presented in the subsequent sections.

The remainder of this paper is organized as follows. Section~2 introduces the proposed FEKAN architecture and outlines its theoretical foundations. Section~3 presents a comprehensive set of numerical experiments, including function approximation, the solution of various types of PDEs, and FEKAN-based neural operator results for learning input–output mappings, along with comparisons to state-of-the-art KAN-based solvers. Finally, Section~4 summarizes the principal findings and discusses broader implications.

\section{Methodology -- FEKAN}
Feature-space enrichment has been widely employed in classical machine learning, including nonlinear regression and support vector machines (SVMs). A seminal contribution by Rahimi and Recht~\cite{rahimi2007random} introduced random Fourier feature mappings to approximate kernel functions by explicitly projecting data into a low-dimensional Euclidean inner-product space, enabling the use of efficient linear learning algorithms in place of nonlinear kernel methods. More recently, feature mappings have been integrated into deep learning frameworks. Xu et al.~\cite{xu2024chebyshev} proposed Chebyshev feature maps, combined with multi-stage training~\cite{wang2024multi}, to achieve machine-precision function approximation for both smooth and non-smooth targets. Related approaches have employed random Fourier features~\cite{tancik2020fourier} and adaptive sinusoidal encodings~\cite{sun2024learning} to mitigate spectral bias in computer graphics and image reconstruction. Although orthogonal to the present study, Fourier feature encodings have also been used to impose periodic boundary conditions exactly in physics-informed neural networks for PDEs~\cite{dong2021method}. In contrast, for PDE problems considered here, boundary and initial conditions are enforced through data sampling, allowing us to isolate and assess the impact of feature enrichment within the KAN framework.

\begin{figure}[ht]
\centering
{\label{fig:1}
\centering
\includegraphics[width=0.5\linewidth, clip]{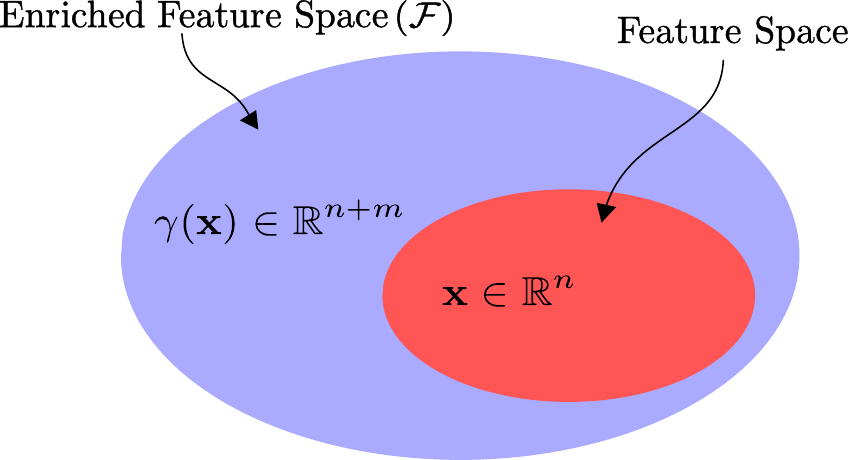}
}
\caption{Schematic representation of feature and enriched feature space.}
\label{fig:featurespace}
\end{figure}

In many predictive modeling tasks, the relationship between inputs and outputs is inherently nonlinear. Let $\mathbf{x} = (x_1, \dots, x_n) \in \mathbb{R}^n$ denote the \textit{spatio-temporal} input. To capture higher-order structure, we embed $\mathbf{x}$ into an enriched feature space $\mathcal{F} \subset \mathbb{R}^{n+m}$ via the mapping
\begin{equation}
\gamma: \mathbb{R}^n \to \mathbb{R}^{n+m}.
\end{equation}
The mapping $\gamma(\cdot)$ is typically realized by applying a set of nonlinear basis functions $\{u_1, u_2, \dots, u_m\}$ to the original inputs:  
\begin{equation}
\gamma(\mathbf{x}) = [\mathbf{x}, u_1(\mathbf{x}), u_2(\mathbf{x}), \dots, u_m(\mathbf{x})]^\top.
\end{equation}
Common choices for $u_j(\mathbf{x})$ include \emph{polynomial terms} ($x_i^p$), \emph{trigonometric functions} ($\sin(x_i), \cos(x_i)$), and \emph{interaction terms} ($x_i x_j$, $x_i^2 x_j$, etc.).
In the enriched space $\mathcal{F}$, a linear model  
\begin{equation}
y = \mathbf{w}^\top \gamma(\mathbf{x}) + b, \quad \mathbf{w} \in \mathbb{R}^{n+m}, \; b \in \mathbb{R},
\end{equation}
defines a function $f_{\mathbf{w},b}: \mathcal{X} \to \mathcal{Y}$ that is generally \textit{nonlinear in the original input space} $\mathcal{X}$ due to the nonlinear transformation $\gamma(\cdot)$. The proposed work takes inspiration from the \textit{kernel trick} which is prominently utilized in classical machine learning methods like the SVMs for multiclass classification problems where it uses an appropriate nonlinear function for embedding its original low-dimensional features in a high-dimensional feature space where the classification could be achieved using a simple linear decision boundary as opposed to relying on a complex non-linear decision boundary. However in the current work, we utilize nonlinear functions $\gamma(\cdot)$ as a feature map with an expectation to de-clutter the original input feature space for efficient learning of complex functions using KAN. Interaction and higher-order terms in $\gamma(\mathbf{x})$ enable the model to capture complex feature dependencies, while inner products in $\mathcal{F}$, $\langle \gamma(x_i), \gamma(x_j) \rangle_{\mathcal{F}}$, measure similarity in the enriched representation

\subsection{Feature Expansion in Spatio-temporal Learning}
Feature expansion enhances the expressive capacity of linear models by embedding inputs into a higher-dimensional feature space $\mathcal{F}$, enabling linear operations to approximate highly nonlinear functions without altering the underlying modelling framework. This increased flexibility, however, may incur modest computational overhead and elevate the risk of overfitting when the augmented dimension becomes large relative to the available data. In physics-informed machine learning, the notion of features naturally extends to spatiotemporal coordinates, comprising spatial variables $(x,y,z)$ and time $t$. Prior work has demonstrated the effectiveness of feature mappings in purely spatial settings, including coordinate-based MLPs with Fourier encodings~\cite{tancik2020fourier} and Chebyshev feature maps for high-precision approximation~\cite{xu2024chebyshev}. In contrast, we consider problems in which both space and time serve as independent inputs. Importantly, we retain the temporal coordinate $t$ in its original form, applying feature expansion only to spatial variables. This design enables the model to represent complex spatial structure in an enriched space while evolving solutions in time in a manner analogous to a time-marching scheme. The efficacy of this strategy is evaluated empirically in the subsequent sections. The choice of feature map is guided by the structure of the underlying problem. Polynomial feature maps capture nonlinear interactions through higher-order and cross terms, but their dimensionality grows combinatorially with input dimension, limiting scalability. For bounded or periodic phenomena, common in physical systems governed by PDEs-Fourier feature maps provide a natural and compact representation. Chebyshev feature maps, constructed from orthogonal polynomial bases, offer another effective alternative and have demonstrated near-optimal approximation properties in high-precision settings~\cite{xu2024chebyshev}.

The Kolmogorov–Arnold superposition theorem asserts that any multivariate continuous function can be expressed as a composition of univariate continuous functions, given an appropriate spline discretization. Although FEKAN is structurally related to KAN (Fig.~\ref{fig:FEKANfig}), we formally establish a Feature-Enriched Kolmogorov superposition theorem (Theorem~\ref{th1:main}), extending the classical result with minimal modification.

\begin{theorem} [Feature-Enriched Kolmogorov Superposition Theorem] \label{th1:main} 
Let \(\mathbf{x} = (x_1, \dots, x_n) \in [0,1]^n\) be the original inputs, and let
\[
f: [0,1]^n \times \mathbb{R}^m \to \mathbb{R}
\] 
be a continuous function of both the original inputs and \(m\) additional continuous features
\[
u_1(\mathbf{x}), u_2(\mathbf{x}), \dots, u_m(\mathbf{x}),
\]
where each \(u_j: [0,1]^n \to \mathbb{R}\) is continuous (for example, \(u_j(\mathbf{x}) = \sin(x_i), \cos(x_i), x_i^p \), etc.). Then there exist continuous functions
\[
\phi_i: \mathbb{R} \to \mathbb{R}, \quad i = 1, \dots, 2(n+m)+1
\]
and continuous functions
\[
\psi_q: \mathbb{R} \to \mathbb{R}, \quad q = 1, \dots, n+m
\]
such that\\
\begin{equation}
\label{eq:kolmogorov-features-explicit}
\boxed{
f\big(\mathbf{x}, u_1(\mathbf{x}), \dots, u_m(\mathbf{x})\big) 
= \sum_{i=1}^{2(n+m)+1} 
\phi_i \Bigg(
\underbrace{\sum_{q=1}^{n} \psi_q(x_q)}_{\text{original inputs}} \;+\; 
\underbrace{\sum_{j=1}^{m} \psi_{n+j}(u_j(\mathbf{x}))}_{\text{additional features}}
\Bigg)
}
\end{equation}
More details on proof of theorem is described in Appendix \ref{AppenA}.
\end{theorem}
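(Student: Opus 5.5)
The plan is to reduce the statement to the classical Kolmogorov--Arnold superposition theorem, applied to a function of $n+m$ independent variables on a compact cube, and then substitute the feature map $\gamma$. The main difficulty is not this reduction but reading the displayed formula correctly, discussed at the end.

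\textbf{Step 1: compactness.} Since each $u_j$ is continuous on the compact set $[0,1]^n$, its image lies in a bounded interval $[a_j,b_j]$. Set $K=[0,1]^n\times\prod_{j=1}^m[a_j,b_j]$. Then $\gamma([0,1]^n)\subset K$, and only the restriction of $f$ to $K$ matters.

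\textbf{Step 2: rescaling.} Apply the affine change of variables $z_{n+j}=(v_j-a_j)/(b_j-a_j)$ for $j=1,\dots,m$, and leave $z_q=x_q$ for $q\le n$. If $a_j=b_j$, the feature is constant; I would either absorb it into $f$ or set $z_{n+j}\equiv 0$. This yields a continuous function $F:[0,1]^{n+m}\to\mathbb{R}$ with $F(z(\mathbf{x},\mathbf{v}))=f(\mathbf{x},\mathbf{v})$ on $K$.

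\textbf{Step 3: the classical theorem.} Apply the classical Kolmogorov theorem in dimension $N=n+m$. This gives continuous outer functions $\phi_i$ and continuous inner functions $\psi_{q,i}$, for $i=1,\dots,2N+1$ and $q=1,\dots,N$, with
\[
F(z)=\sum_{i=1}^{2N+1}\phi_i\Big(\sum_{q=1}^{N}\psi_{q,i}(z_q)\Big).
\]
To avoid tracking $F$'s outer functions, I would use the Lorentz/Sprecher refinement with a single outer function $\Phi$. There the inner terms take the form $\lambda_q\psi(z_q+i\varepsilon)$, so the only dependence on $i$ is through a shift. Composing each inner function with the affine rescaling of Step 2 gives continuous functions $\tilde\psi_{n+j,i}(v)=\psi_{n+j,i}\big((v-a_j)/(b_j-a_j)\big)$ of the raw feature value. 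These can be extended continuously to all of $\mathbb{R}$ by constants outside $[a_j,b_j]$, as the statement requires domain $\mathbb{R}$.

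\textbf{Step 4: substitution.} Substitute $v_j=u_j(\mathbf{x})$. Because the identity of Step 3 holds pointwise on all of $K$, it holds in particular on the graph of $\gamma$. This yields
\[
f(\mathbf{x},u_1(\mathbf{x}),\dots,u_m(\mathbf{x}))=\sum_{i=1}^{2(n+m)+1}\phi_i\Big(\sum_{q=1}^n\psi_{q,i}(x_q)+\sum_{j=1}^m\tilde\psi_{n+j,i}(u_j(\mathbf{x}))\Big),
\]
with $2(n+m)+1$ outer terms as claimed. No property of the $u_j$ beyond continuity is used.

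\textbf{Main obstacle: how to read the inner functions.} The displayed formula \eqref{eq:kolmogorov-features-explicit} writes $\psi_q$ without the index $i$. Read literally, the inner sum $S(\mathbf{x})$ is then the same for every $i$, so $\sum_i\phi_i(S)$ is a single continuous function of one sum of univariate continuous functions. That cannot represent every continuous function of $(x_1,\dots,x_n)$ when $n\ge 2$, since $S$ has connected, non-singleton level sets that $f$ need not respect. So I would prove the statement with inner functions $\psi_{q,i}$, or in the Sprecher form $\lambda_q\psi(\cdot+i\varepsilon)$, and interpret the boxed notation as this suppressed $i$-dependence. Everything else is a direct transfer of the classical theorem.
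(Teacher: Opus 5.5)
Your proposal is correct and uses the same reduction as the paper's proof in Appendix~\ref{AppenA}: regard $(\mathbf{x},u_1(\mathbf{x}),\dots,u_m(\mathbf{x}))$ as a point in $n+m$ dimensions, apply the classical Kolmogorov theorem, and split the inner sum. It also repairs two defects of that proof. You apply the classical theorem to $f$ itself on a rescaled compact box containing the graph of $\gamma$. The paper instead applies it to the composite $F(v)=f(\mathbf{x},u_1(\mathbf{x}),\dots,u_m(\mathbf{x}))$ and asserts continuity on $[0,1]^{n+m}$, although $v$ only ranges over the $n$-dimensional graph of $\gamma$ and the $u_j$ need not take values in $[0,1]$. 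You also keep the index $i$ on the inner functions, which the paper silently drops when quoting the classical theorem.

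Your objection to the literal $i$-independent reading is right, but the level-set remark is only a heuristic, since the $\psi_q$ are chosen after $f$. A concrete counterexample is $n=2$, $m=1$, $u_1\equiv 0$, $f=x_1x_2$. Suppose continuous $a,b,\Phi$ satisfied $x_1x_2=\Phi(a(x_1)+b(x_2))$ on $[0,1]^2$. Then $a$ and $b$ must be strictly monotone, and $\Phi$ must vanish on $a(0)+b([0,1])$ and on $a([0,1])+b(0)$. If $a,b$ have the same monotonicity, the union of these intervals contains $a(s)+b(t)$ for all small $s,t>0$. If they have opposite monotonicity, it contains $a(s)+b(t)$ for all $s,t$. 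Either way this contradicts $st>0$.
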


With the existence of Theorem \ref{th1:main} that closely shares similarities to the Kolmogorov-Arnold superposition theorem, it is reasonable to state that FEKAN using spline basis function with a finite-grid size can very well approximate a continuous multivariate function $f$ with an error bound independent of the dimension that in turn leads to overcoming the curse of dimensionality.

\begin{theorem}[Complementary Gains in Representation Capacity]
Feature enrichment increases representation capacity in two distinct ways:

\begin{enumerate}
    \item \textbf{Structural enlargement:}
    \[
    \mathcal{R}_{n} \subsetneq \mathcal{R}_{n,m},
    \]
    where $\mathcal{R}_{n}$ and $\mathcal{R}_{n,m}$ are the family of functions representable using the classical Kolmogorov and feature-enriched Kolmogorov form, respectively.
    \item \textbf{Approximation efficiency:}
    There exist continuous targets \(F\) and continuous features \(u_j\)
    such that, for some \(\varepsilon>0\),
    \[
    \mathrm{Comp}_{n,m}(F,\varepsilon)
    <
    \mathrm{Comp}_{n}(F,\varepsilon),
    \]
   where $\mathrm{Comp}_{n}$ and $\mathrm{Comp}_{n,m}$ represent the complexity measures corresponding to the original input and the enriched input, respectively.
\end{enumerate}
\noindent
Thus, adding continuous features both enlarges the expressible family
of functions and reduces the complexity required to approximate
certain target functions. 
More details on proof of theorem is described in Appendix \ref{AppenB_Th2}.
\end{theorem}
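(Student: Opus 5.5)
The plan is to fix precise definitions first, since the statement leaves $\mathcal{R}_n$, $\mathcal{R}_{n,m}$ and $\mathrm{Comp}$ informal, and then prove each item by a short explicit construction. I take $\mathcal{R}_n$ to be the set of functions $g:[0,1]^n\to\mathbb{R}$ of the form $\sum_{i}\phi_i\big(\sum_{q=1}^n\psi_q(x_q)\big)$ in which the inner and outer univariate functions come from a fixed admissible class $\mathcal{A}$. The natural choice is splines of bounded grid size, or continuous functions with a bounded modulus or Lipschitz constant, which is the setting relevant to KANs. $\mathcal{R}_{n,m}$ is the analogous class built from the arguments $(x_1,\dots,x_n,u_1(\mathbf{x}),\dots,u_m(\mathbf{x}))$, as in \eqref{eq:kolmogorov-features-explicit}. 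Both classes are read as subsets of functions of $\mathbf{x}$ on $[0,1]^n$. I define $\mathrm{Comp}_n(F,\varepsilon)$ as the minimal total number of spline parameters (grid points summed over all univariate maps) needed to reach $\|F-g\|_\infty\le\varepsilon$ with $g\in\mathcal{R}_n$, and $\mathrm{Comp}_{n,m}$ in the same way. This restriction matters. With arbitrary continuous $\phi_i,\psi_q$, the classical theorem gives $\mathcal{R}_n=C([0,1]^n)$, and by Theorem~\ref{th1:main} the same holds for $\mathcal{R}_{n,m}$ restricted to $[0,1]^n$, so a strict inclusion would be false. I will state this explicitly.

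For \textbf{structural enlargement}, the inclusion $\mathcal{R}_n\subseteq\mathcal{R}_{n,m}$ follows by setting $\psi_{n+j}\equiv 0$, provided $0\in\mathcal{A}$. Strictness requires a witness. I take $n=1$, $m=1$, $u_1(x)=\sin(2\pi Kx)$ with $K$ large, and $F(x)=u_1(x)$. In the enriched form, $F$ is realised by $\psi_2=\mathrm{id}$, $\psi_1=0$, $\phi_1=\mathrm{id}$, and all remaining $\phi_i=0$, so every component has grid size $O(1)$. In the classical form with $n=1$, any representation collapses to $\Phi(\psi_1(x))$ with $\Phi=\sum_i\phi_i$. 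A composition of a spline with $G$ knots and another spline with $G$ knots has at most $O(G^2)$ monotone pieces. $F$ has $2K$ monotone pieces, so once $K$ exceeds this bound, $F\notin\mathcal{R}_1$. The same argument extends to general $n$ through the restriction to a coordinate line. The one delicate point is bounding the number of monotonicity changes of a sum of compositions of piecewise-linear or piecewise-polynomial maps, and I expect this to be the \textbf{main obstacle}. I would handle it with the standard piece-counting argument for compositions, as in depth-separation results: each $\psi_q(x_q)$ is piecewise polynomial along the line, the inner sum has $O(nG)$ pieces, and each outer $\phi_i$ multiplies the piece count by at most $O(G)$.

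For \textbf{approximation efficiency}, I reuse the same pair $(F,u_1)$ with some fixed $\varepsilon<1/2$. The enriched representation is exact with $O(1)$ parameters, so $\mathrm{Comp}_{n,m}(F,\varepsilon)=O(1)$. For the lower bound on $\mathrm{Comp}_n$, I note that any $g$ with $\|F-g\|_\infty<1/2$ must oscillate at least $\sim 2K$ times: $g$ is positive near each peak of $F$ and negative near each trough. The piece-counting bound from the previous step then forces $\mathrm{Comp}_n(F,\varepsilon)\gtrsim\sqrt{K}$. Taking $K$ large gives the strict inequality. Both parts therefore reduce to the same counting lemma, which I would prove once and invoke twice.
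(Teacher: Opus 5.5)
Your argument is sound under the definitions you fix. It departs from the paper exactly where the paper's own argument fails.

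\textbf{The paper's argument.} The paper keeps $\phi_i,\psi_q\in C(\mathbb{R})$ unrestricted and lets $\mathrm{Comp}$ count only the number $K$ of outer functions. Its strictness proof is the remark that $\mathbf{x}\mapsto(\mathbf{x},u_1(\mathbf{x}),\dots,u_m(\mathbf{x}))$ ``separates more points'' than the identity. Its complexity proof takes $u_1=\sin(N(x_1+\cdots+x_n))$ and $F=g(u_1)$, represents $F$ exactly with one enriched outer term, and asserts that $K\ll N$ classical terms cannot follow the $O(N)$ diagonal oscillations.

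With those definitions neither step goes through. First, the identity already separates all points. Second, every $\phi_i$ receives the same inner sum, so $\sum_i\phi_i$ collapses to one continuous $\Phi$. Hence $F=\Phi(\sum_q x_q)$ with $\Phi=g\circ\sin(N\,\cdot)$, and $\mathrm{Comp}_n(F,\varepsilon)=1$ as well. Third, for $n=1$, or for per-coordinate features such as $\sin(a x_q)$ (the ones used in the experiments), $\mathcal{R}_{n,m}=\mathcal{R}_n$ outright, since $\psi_q(x_q)+\psi_{n+j}(\sin(a x_q))$ is again a continuous function of $x_q$.

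\textbf{Your route.} Restricting the univariate maps to bounded-grid splines and measuring $\mathrm{Comp}$ in grid points is precisely what stops a univariate map from absorbing the oscillation. Your piece-counting lemma then makes rigorous the paper's informal ``too few terms to oscillate'' intuition, and it yields a genuine $O(1)$ versus $\gtrsim\sqrt{K}$ gap. The price is that you prove a modified, parameter-counting statement (arguably the one relevant to KANs), not the paper's literal lemmas. Indeed, your per-coordinate witness $\sin(2\pi Kx)$ would give no separation at all in the paper's $C(\mathbb{R})$ setting.

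\textbf{Smaller points.}

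\emph{Your caveat.} The claim that unrestricted continuous maps give $\mathcal{R}_n=C([0,1]^n)$ is right for $n=1$. It is also right for the genuine Kolmogorov form, whose inner functions $\psi_{q,i}$ depend on the outer index. It is not right for the shared-inner-sum formula you wrote, nor for Theorem~\ref{th1:main}, which uses the same formula. For $n\ge2$, $(x_1-\tfrac12)(x_2-\tfrac12)$ is not of the form $\Phi(\sum_q\psi_q(x_q))$: the line $x_1=\tfrac12$ forces $\Phi$ to vanish on a nondegenerate interval of inner-sum values, and the inner sum still reaches that interval at points off the zero set.

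\emph{Item 1.} Here the counting lemma is unnecessary. Every element of your $\mathcal{R}_n$ is piecewise polynomial along a coordinate line, while $\sin(2\pi Kx_1)$ is not, so non-membership holds for every $K$. The counting is needed only for the $\varepsilon$-approximation in item 2.

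\emph{Item 2.} Keep the outer terms charged, as your $\mathrm{Comp}$ does. With spline degree $k$ and $P$ grid intervals in total, $\sum_i\phi_i\circ\psi_1$ has $O(k^2P^2)$ monotone pieces, so $\mathrm{Comp}_1(F,\varepsilon)\gtrsim\sqrt{K}/k$. If you use the Lipschitz class instead, fix the number of outer terms (say $2n+1$), since sums of arbitrarily many $L$-Lipschitz maps give arbitrary Lipschitz functions.
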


\begin{theorem}
[Rademacher Complexity Under Feature Augmentation] Let $\mathcal{F}_d$ be a class of real-valued functions defined on 
$\mathcal{X} \subset \mathbb{R}^d$, and let $\hat{\mathfrak{R}}_n(\mathcal{F}_d)$ 
denote its empirical Rademacher complexity over samples 
$\{x_i\}_{i=1}^n$. Consider the augmented input space 
$\mathcal{X}' = \mathcal{X} \times \mathbb{R}^m$, and define
\[
\mathcal{F}_{d+m}
=
\left\{
f(x,x') = g(x) + h(x') 
\;\middle|\;
g \in \mathcal{F}_d,\;
h \in \mathcal{H}
\right\},
\]
where $\mathcal{H}$ is a function class over the additional $m$ features 
and contains the zero function, i.e., $0 \in \mathcal{H}$.

Then the following hold:

\begin{enumerate}
    \item (Monotonicity)
    \[
    \hat{\mathfrak{R}}_n(\mathcal{F}_d)
    \le
    \hat{\mathfrak{R}}_n(\mathcal{F}_{d+m}).
    \]

    \item (Additive Upper Bound)
    \[
    \hat{\mathfrak{R}}_n(\mathcal{F}_{d+m})
    \le
    \hat{\mathfrak{R}}_n(\mathcal{F}_d)
    +
    \hat{\mathfrak{R}}_n(\mathcal{H}).
    \]
\end{enumerate}
More details on proof of theorem is described in Appendix \ref{AppenB_Th3}.
\end{theorem}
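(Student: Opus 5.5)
We prove both parts directly from the definition of the empirical Rademacher complexity, $\hat{\mathfrak{R}}_n(\mathcal{G}) = \mathbb{E}_{\sigma}\big[\sup_{f\in\mathcal{G}} \frac{1}{n}\sum_{i=1}^n \sigma_i f(z_i)\big]$ with i.i.d.\ Rademacher signs $\sigma_i$. First we fix the sample convention. The augmented complexities are evaluated on the samples $z_i=(x_i,x_i')$, where $x_i$ are the same points used for $\hat{\mathfrak{R}}_n(\mathcal{F}_d)$ and $x_i'\in\mathbb{R}^m$ are the added coordinates (e.g.\ $x_i' = (u_1(x_i),\dots,u_m(x_i))$ in the FEKAN setting). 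The quantity $\hat{\mathfrak{R}}_n(\mathcal{H})$ is evaluated on $\{x_i'\}$. Each $g\in\mathcal{F}_d$ is regarded as a function on $\mathcal{X}'$ that ignores $x'$.

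\textbf{Monotonicity.} Since $0\in\mathcal{H}$, every $g\in\mathcal{F}_d$ gives the element $f(x,x')=g(x)+0\in\mathcal{F}_{d+m}$. Hence, for each fixed $\sigma$,
\[
\sup_{g\in\mathcal{F}_d}\frac1n\sum_i\sigma_i g(x_i)\le \sup_{f\in\mathcal{F}_{d+m}}\frac1n\sum_i\sigma_i f(x_i,x_i').
\]
Taking the expectation over $\sigma$ preserves this inequality, which gives part 1.

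\textbf{Additive bound.} For fixed $\sigma$,
\[
\sup_{g,h}\frac1n\sum_i\sigma_i\big(g(x_i)+h(x_i')\big)\le \sup_{g}\frac1n\sum_i\sigma_i g(x_i)+\sup_{h}\frac1n\sum_i\sigma_i h(x_i'),
\]
because the supremum of a sum is at most the sum of the suprema. Linearity of expectation then yields part 2.

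\textbf{Main obstacle.} The only delicate point is measurability and finiteness: the suprema must be measurable and have finite expectation so that they can be split. Since $\sigma$ takes only $2^n$ values, the expectation is a finite average and measurability is automatic. The remaining issue is to exclude $+\infty-\infty$. We will assume the classes are bounded on the sample, as is standard. Under this assumption the right-hand side of part 2 is either finite or $+\infty$, and in both cases the inequality holds. We will also note that each $\hat{\mathfrak{R}}_n$ is nonnegative whenever the class contains a function, by Jensen's inequality and symmetry of $\sigma$, although the argument does not require this.
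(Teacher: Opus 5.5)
Your proof is correct and follows essentially the same route as the paper: monotonicity comes from the inclusion $\mathcal{F}_d \subseteq \mathcal{F}_{d+m}$ guaranteed by $0\in\mathcal{H}$, and the additive bound comes from $\sup(u+v)\le \sup u+\sup v$ followed by linearity of expectation. Your explicit sample convention $z_i=(x_i,x_i')$ and your finiteness remarks are careful additions that the paper leaves implicit; in fact no boundedness assumption is needed, since each supremum over a nonempty class lies in $(-\infty,+\infty]$ and $\sigma$ takes only finitely many values.
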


\subsection{NTK Evaluation of the FEKAN architecture}
The Neural Tangent Kernel (NTK), introduced by Jacot et al.~\cite{jacot2018neural}, provides a framework to analyze the training dynamics of neural networks under gradient descent. In the infinite-width limit, the training of an artificial neural network (ANN) can be characterized by a deterministic kernel $K(\tau)$ that depends only on the network depth, activation function, and initialization variance, thereby establishing a connection between neural networks and kernel methods. While this characterization holds cleanly for standard ANNs, extending NTK theory to KANs with spline or other basis functions is more subtle due to architectural differences. Unlike ANNs, which possess only external degrees of freedom through layer compositions, KANs exhibit both external structure and internal degrees of freedom arising from learned univariate functions. Consequently, developing an NTK theory for KANs requires careful treatment of the internal parameters of these univariate functions and their interactions across layers. Motivated by this challenge, Mostajeran et al.~\cite{mostajeran2025scaled} initiated an analysis of the NTK for cKAN using a simplified, non-nested approximation that enabled analytical study. Their work examined the effect of scaling spatial input features on training stability in PDE problems with Chebyshev basis functions. Subsequently, Faroughi et al.~\cite{faroughi2025neural} analyzed the NTK of cKAN without such simplifications, fully accounting for its compositional structure. They introduced the notion of NTK drift to describe the temporal evolution of the kernel in finite-width settings, showing how input feature scaling enhances training stability for PDE applications.

Consider a function approximation problem with a squared loss defined as follows:
\begin{align}
\label{eq:mse}
    \mathcal{L}(\theta) = \frac{1}{N} \sum_{i=1}^{N} \|y_i - f(x_i;\theta)\|_2^2~,
\end{align}
where $\{x_i, y_i\} \in X$ denotes samples drawn from the training dataset, $f$ represents the KAN model parameterized by $\theta$. In the limit of an infinitesimally small learning rate, discrete gradient descent can be approximated by a continuous-time gradient flow, expressed as follows:
\begin{align}
\label{eq:grad_flow}
    \frac{d\theta(\tau)}{d\tau} = -\nabla_\theta \mathcal{L}(\theta(\tau))~.
\end{align}
Here, $\tau$ denotes the continuous-time variable describing the evolution of the parameters $\theta$ under gradient flow. Substituting Equation~\ref{eq:mse} into Equation~\ref{eq:grad_flow} yields:
\begin{align}
    \frac{d\theta(\tau)}{d\tau} = -\sum_{i=1}^{N} (y_i - f(x_i; \theta)) \nabla_{\theta} f(x_i;\theta)~.
\end{align}
Using the chain rule of differentiation, the evolution of the network output in continuous time can be represented as follows:
\begin{align}
    \frac{df(x_i;\theta)}{d\tau} = \bigg[\frac{df(x_i;\theta)}{d\theta} \bigg] \cdot \bigg[\frac{\theta(\tau)}{d\tau} \bigg] = -\sum_{i=1}^{N}(y_i - f(x_i;\theta))[\nabla_\theta f(x_i;\theta)]^T[\nabla_\theta f(x_i;\theta)]~.
\end{align}
The NTK matrix can be formulated as follows:
\begin{align}
    (K(\tau))_{i,j} = \bigg\langle \frac{\partial f(x_i;\theta(\tau))}{\partial \theta}, \frac{\partial f(x_j;\theta(\tau))}{\partial \theta} \bigg\rangle.
\end{align}
where $K(\tau)$ denotes a positive semi-definite matrix whose entries $K_{i,j}$ measure the alignment between the parameter gradients of the outputs corresponding to two samples $(x_i, x_j)$. Following Jacot et al.~\cite{jacot2018neural}, in the infinite-width limit the kernel can be assumed to remain effectively constant during training:
\begin{align}
K(0) = K(\tau) = K^* \quad \forall ~\tau     
\end{align}
However, in a practical sense one would always deal with a finite-width architecture and in this regard, we could find that the kernel converges as $\tau \rightarrow \infty$:
\begin{align}
\label{eq:limiting_kernel}
K(0) \xrightarrow{} K(\tau) = K^* \quad \text{as} ~\tau \rightarrow \infty     
\end{align}
\begin{figure}
\centering
{\label{fig:1}
\centering
\includegraphics[width=\linewidth]{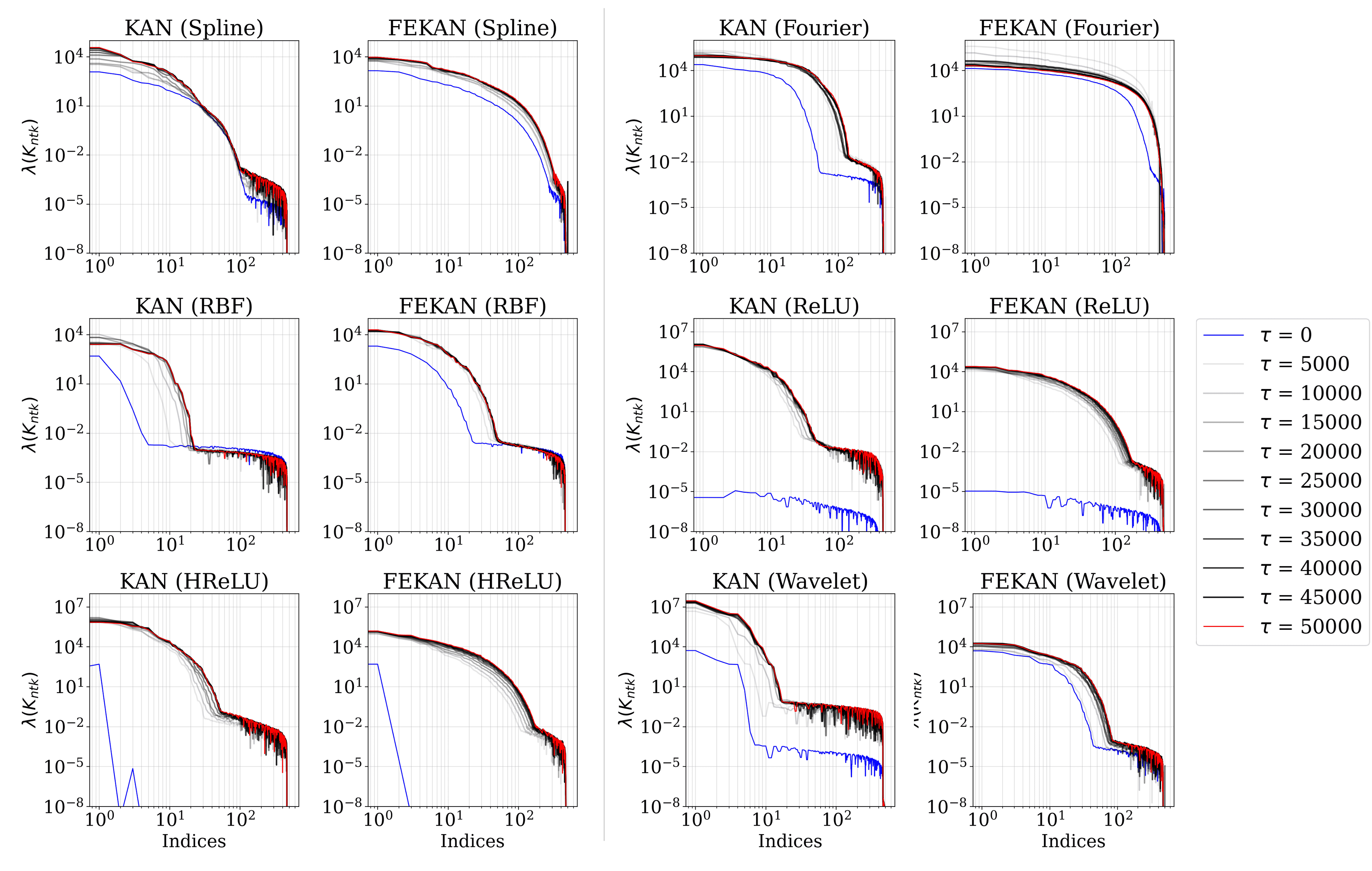}
}
\caption{Evolution of the NTK eigenvalue spectra during training for KAN and FEKAN in the function approximation task across different basis functions.
}
\label{fig:ntk_drift}
\end{figure}

In this work, we empirically investigate the NTK drift in function space as a proxy for the learning dynamics of KAN under gradient descent across different basis functions. For the function approximation task (Section \ref{subsec:disc_func_approx}), we analyze the NTK drift of both KAN and FEKAN using spline~\cite{liu2024kan}, RBF~\cite{li2024kolmogorov}, Fourier, ReLU~\cite{qiu2024relu}, HReLU~\cite{so2025higher}, and Wavelet~\cite{bozorgasl2405wav} bases. The Chebyshev basis is excluded due to its observed instability during training. The main observations are summarized as follows:

\begin{itemize}
    \item \textcolor{blue}{Explaining Training Instabilities using NTK:} 
     Figure~\ref{fig:ntk_drift} shows that, for all basis functions except the Fourier basis, the eigenvalue spectrum of the NTK smoothly converges from its initialization $K(0)$ to a deterministic limiting kernel $K^*$. In contrast, the Fourier basis exhibits pronounced back-and-forth drift before eventually approaching its limiting kernel. 
    \item \textcolor{blue}{Assessing Convergence using NTK:} 
     Although convergence behavior can be visually inspected from Figure~\ref{fig:funfit_compare_uq}, such inspection may not provide a complete characterization. To obtain a quantitative measure, we employ the average convergence rate (ACR = $(1/N)\cdot\sum_{i=1}^{N}\lambda_i$ where $\Lambda = [\lambda_1, \lambda_2, \dots\, \lambda_N]$ is the diagonal matrix of eigenvalues obtained through eigen decomposition, $K^* = Q\Lambda Q^T$), defined via the limiting kernel in Equation~\ref{eq:limiting_kernel}, to compare the convergence properties of KAN and FEKAN across different basis functions in the function approximation task:
     \begin{align*}
     \text{ACR}_{(\text{Spline, Fourier, ReLU, HReLU, Wavelet})}^{\text{FEKAN}} 
     &< 
     \text{ACR}_{(\text{Spline, Fourier, ReLU, HReLU, Wavelet)}}^{\text{KAN}}, \\
     \text{ACR}_{(\text{RBF})}^{\text{FEKAN}} 
     &> 
     \text{ACR}_{(\text{RBF})}^{\text{KAN}}.
    \end{align*}
    Overall, the ACR suggests faster convergence for KAN with most basis functions compared to FEKAN. However, this should be interpreted with caution: NTK-based metrics primarily capture training dynamics and do not fully reflect improvements in generalization or accuracy, where FEKAN demonstrates advantages over KAN, as reported in Table~\ref{tab:funfit_compare}.
    \item \textcolor{blue}{Explaining Spectral Bias using NTK:} We recall the conditions stated in Lee et al. \cite{lee2019wide} which state that the output of the model for a test dataset, $X_{test}$ can be approximately represented as follows:
    \begin{align}
    \label{eq:exp_ntk_converge}
        \hat{y}^{(\tau)} \approx K_{test} K^{-1}(I - e^{-\eta K \tau})y~. 
    \end{align}
    Here, $\hat{y}^{(\tau)} = f(X_{test}; \theta)$ are the model's prediction at training time, $\tau$, $K$ is the NTK matrix computed between all the training data points while $K_{test}$ is the NTK matrix computed between all points in the testing dataset ($X_{test}$) and the training dataset ($X$).

    Consider $\hat{y}^{(\tau)}_{train} - y$ represents the training error for the model, where $\hat{y}^{(\tau)}_{train}$ is the prediction at training time $(\tau)$. If the the kernel matrix, $K$ is positive-semi definite, the eigen decomposition of $K = Q \Lambda Q^T$ where $Q$ is an orthogonal matrix of eigenvectors and $\Lambda$ is the diagonal matrix of eigenvalues with the each eigenvalue along the diagonal, $\lambda_i \ge 0$. To this end, based on Equation \ref{eq:exp_ntk_converge}, we can represent the decay of the training error as follows:
    \begin{align}
    \label{eq:abs_ntk_converge}
        Q^T (\hat{y}^{(\tau)}_{train} - y) \approx Q^T \big(\big(\text{I} - e^{-\eta K\tau}\big)y - y) = -e^{-\eta\Lambda t} Q^T y~.
    \end{align}

    Here, Equation \ref{eq:abs_ntk_converge} provides a perspective about the training dynamics of the model in the eigenbasis of the NTK. It means that the $i^{\text{th}}$-component of the absolute training error, $|Q^T (\hat{y}^{(\tau)}_{train} - y)|_i$ will decay exponentially at the convergence rate, $\eta\lambda_i$. Therefore, it is reasonable to conclude that the components of the target function ($y$) that correspond to the eigenvectors with larger eigenvalues will be learned faster with high priority compared to other components corresponding to the smaller eigenvalues. Therefore, having an eigenvalue spectra with a \textit{slow rate of decay} is \textit{highly desirable} to overcome the spectral bias.

    \begin{figure}[H]
    \centering
    {\label{fig:1}
    \centering
    \includegraphics[width=0.6\linewidth]{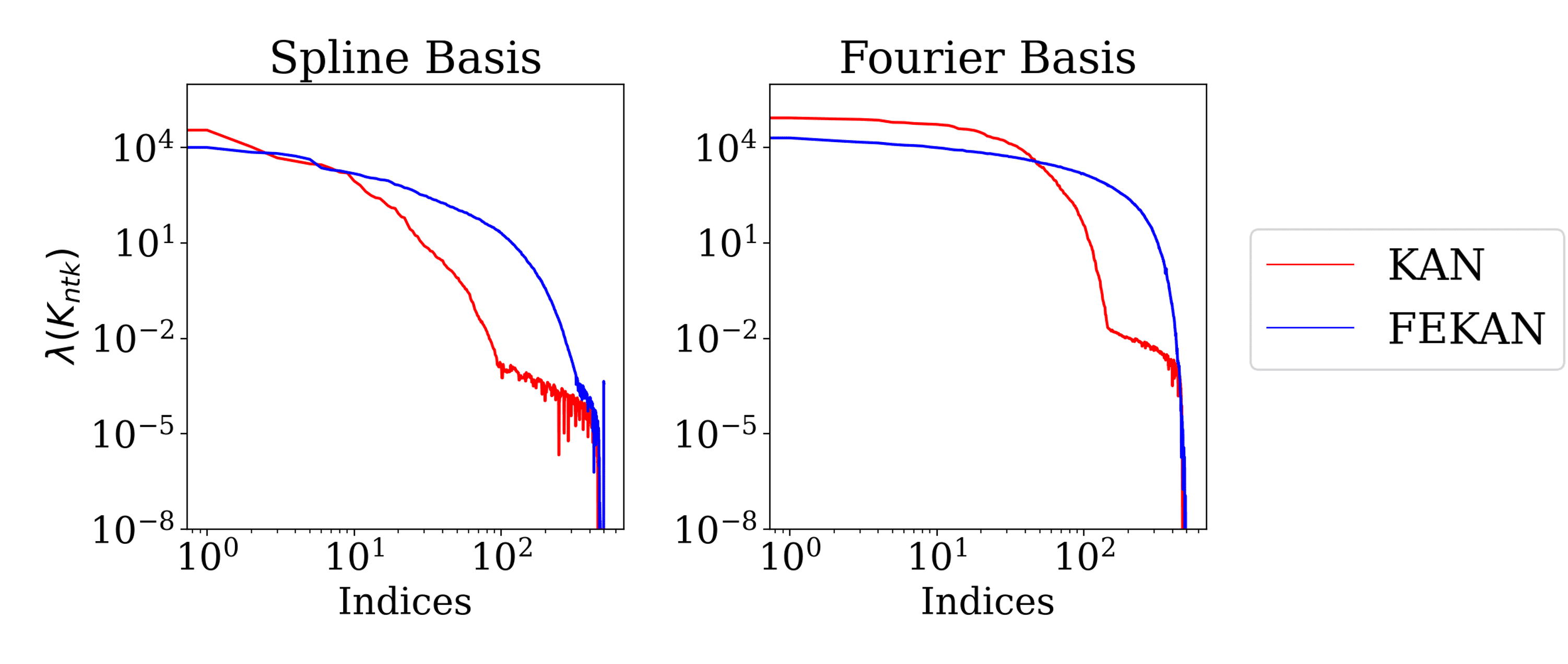}
    }
    \caption{NTK eigenvalue spectra at $\tau = 50,000$ for KAN and FEKAN in the function approximation task using spline and Fourier basis functions. It shows faster decay of eigenvalues for KAN while a comparatively slower decay for FEKAN. 
    }
    \label{fig:ntk_spline_fourier}
    \end{figure}
    
    As observed in Figure \ref{fig:ntk_spline_fourier}, the eigenvalues of the NTK decay rapidly for KAN which in turn leads to slow convergence for the training absolute error while trying to learn the high frequency components of the target function to an extent that the model avoids learning these high frequency components. This leads to spectral bias in KAN-based architectures.
    However, FEKAN can overcome spectral bias for any chosen basis function and this can be explained by comparing the decay of the eigenvalues in Figure \ref{fig:ntk_spline_fourier}. For example, faster decay of the eigenvalues of the NTK leads to the high frequency components of the target function not being learned by the KAN-architecture. However, a slower decay of the eigenvalues of the NTK enables FEKAN to carefully prioritize and learn the high-frequency components thereby overcoming the inherent spectral bias present in KAN for spline and Fourier basis functions. 
\end{itemize}

While NTK theory offers useful tools to characterize convergence and training dynamics, it also has notable limitations. In particular, tracking the evolution of the NTK does not directly explain a model’s generalization or predictive accuracy. Moreover, computing the NTK matrix can become computationally prohibitive for large-scale problems or datasets. Scaling such analyses therefore requires efficient strategies to approximate or evaluate the kernel at reduced cost. 

\section{Results}
In the following sections, we provide empirical evidence demonstrating the superior performance of FEKAN relative to KAN across a range of basis functions. First, we consider function approximation tasks, where KAN is used to model complex functions with various basis expansions and its performance is compared to FEKAN. Next, we focus on solving ODEs/PDEs using Physics-Informed KAN (PI-KAN) and compare results with PI-FEKAN of equivalent representational capacity. To this end, we also apply FEKAN to neural operator problems; a schematic overview of FEKAN across different tasks is shown in Figure~\ref{fig:FEKANre}. Across all settings, we employ multiple basis function formulations to show that FEKAN is agnostic to the choice of basis and consistently improves performance regardless of problem type or architecture.
\begin{figure}[H]
\centering
{\label{fig:1}
\centering
\includegraphics[scale=0.75, clip=true, trim=0mm 0mm 0mm 0mm]{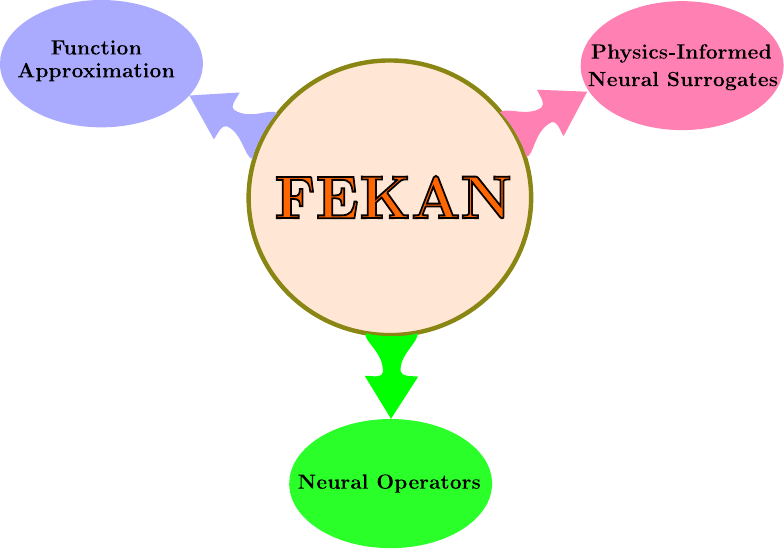}
}
\caption{FEKAN applied to diverse problems, including function approximation, partial differential equations, and operator learning tasks.}
\label{fig:FEKANre}
\end{figure}

All experiments were implemented in PyTorch and JAX. Training was performed using the Adam optimizer with an initial learning rate of $1\times10^{-3}$, unless stated otherwise, on a high-performance computing environment with fixed computational resources. Hardware specifications and total training epochs are provided for each problem in the respective sections. Model accuracy is quantified using the relative $L_2$ error, while computational cost is measured in seconds per iteration (sec/iter).

\subsection{Function Approximation}
\label{sec:fekan}
\subsubsection{High-Frequency and Discontinuous Test Function}
\label{subsec:disc_func_approx}
We consider the function  
\begin{equation}
\label{eqn:test_function}
f(x) =
\begin{cases}
20\sin(2 \pi \omega_1 x) + 1.5\sin(2 \pi \omega_2 x) + 70, & x < 0.01 \\
10\sin(2 \pi \omega_3 x) + 30, & x \ge 0.01
\end{cases}    
\end{equation}  
with $\omega_1 = 350$, $\omega_2 = 6000$, and $\omega_3 = 150$. The high-frequency component is confined to $x < 0.01$, the low-frequency component to $x \ge 0.01$, and a discontinuity exists at $x = 0.01$. This test function allows a comprehensive evaluation of KAN and FEKAN. The high-frequency region probes the ability to overcome spectral bias across different basis functions, while the discontinuity tests approximation capabilities in regions where classical results indicate potential limitations.  
We conduct empirical studies examining neural scaling laws and convergence of KAN and FEKAN for different basis formulations. This analysis further elucidates the sensitivity of parameters such as basis grid size ($G$) on performance metrics, including accuracy and computational cost, when approximating complex functions of the type defined in Equation~\ref{eqn:test_function}.

\paragraph{Spline Basis \cite{liu2024kan}}
This section discusses the performance enhancement with and without feature enrichment for KAN using spline basis while learning to approximate the high frequency discontinuous test function in Equation \ref{eqn:test_function}.

\begin{figure}[H]
\centering
{\label{fig:1}
\centering
\includegraphics[width=\linewidth, trim={0mm 5mm 0mm 0mm}]{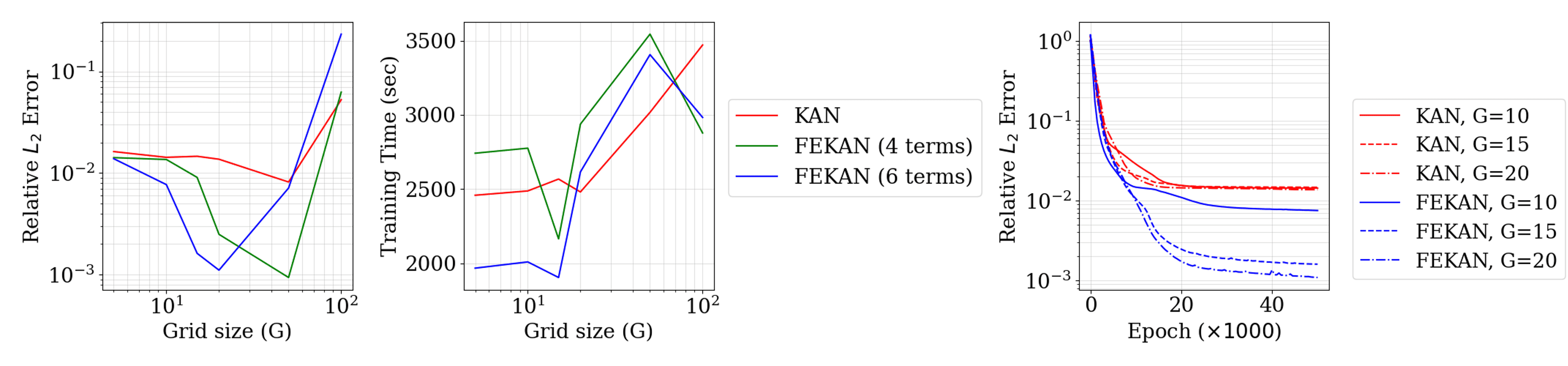}
}
\caption{\textbf{Spline Basis:} (Left) Scaling law of KAN and FEKAN with increasing grid size $G$ using the spline basis and (Center) Computational cost as a function of grid size $G$ for the spline basis. (Right) Convergence of KAN and FEKAN on a high-frequency test function using the spline basis.
}
\label{fig:spline_scaling}
\end{figure}

\begin{table}[H]
\centering
\begin{tabular}{|c||c|c||c|c|}
\hline
\multirow{2}{*}{\textbf{Grid size ($G$)}} & \multicolumn{2}{c|}{\textbf{KAN}} & \multicolumn{2}{c|}{\textbf{FEKAN}} \\ \cline{2-5}
 & \textbf{Rel. $L_2$ Err.} & \textbf{Time (sec/iter)} & \textbf{Rel. $L_2$ Err.} & \textbf{Time (sec/iter)} \\ \hline
10 & 0.01430 $\pm$ 0.00004 & 0.04874 & 0.00741 $\pm$ 0.00098 & 0.05080 \\ \hline
15 & 0.01458 $\pm$ 0.00061 & 0.05142 & 0.00171 $\pm$ 0.00018 & 0.05178 \\ \hline
20 & 0.01378 $\pm$ 0.00032 & 0.05448 & \textbf{0.00100 $\pm$ 0.00007} & 0.05348 \\ \hline
\end{tabular}
\caption{\textbf{Spline Basis Training Performance:} Comparison of accuracy and computational time for KAN and FEKAN using the spline basis with fixed polynomial order $k = 2$ across varying grid sizes $G$. FEKAN incorporates a Fourier feature map consisting of orthogonal $\sin(ax)$ and $\cos(ax)$ pairs with varying frequencies $a$.}
\label{tab:spline_scaling}
\end{table}

At the very glance of the results in Figure \ref{fig:spline_scaling} (Right), we observe clear advantages of using FEKAN over KAN with spline basis for varying magnitudes of gird size (G). While the enhancement in the performance is moderate for G = 10 using FEKAN, there is an order of magnitude reduction in the relative $L_2$ error for G = 15 and G = 20 along with faster convergence rates. It is also worthwhile to note that, the rate of enhancement between G = 15 and G = 20 is very moderate. To this end, we would look at the scaling laws in Figure \ref{fig:spline_scaling} (Left) to find that for increasing magnitude of G, the relative $L_2$ error of FEKAN reaches an optimal lower bound and increases steeply for any further increase in G. However, we note a similar trend also for KAN although at higher magnitudes of relative $L_2$ error. Another interesting observation is that, as we reduce the feature enrichment for FEKAN by reducing the number of terms, we could delay the occurrence of this lower bound to a later stage at higher values of G as shown in Figure \ref{fig:spline_scaling} (Left). With regards to computational cost, we observe comparable computational cost for both KAN and FEKAN for a fixed value of gird size, G. Although for $G < 10$ we observe a plateau denoting almost constant computational cost, for $G > 10$ we start seeing a steady rise. We attribute this to saturation of the hardware at $G > 10$ leading to increase in number of trainable parameters. Hence, increase in the overall training time is only observed at $G > 10$. For the scaling law in Fig.~\ref{fig:spline_scaling} (Left), KAN and FEKAN show poor generalization at large grid sizes. Values of $G \gtrsim 100$ are beyond practical use and were tested only to illustrate architectural scaling. It is also worthwhile to note that FEKAN achieves superior performance while maintaining almost same computational cost as KAN and this is shown in Table \ref{tab:spline_scaling}.

\paragraph{Fourier Basis:}
This section evaluates KAN with and without feature enrichment using a Fourier basis to approximate a high-frequency, discontinuous test function.

\begin{figure}[H]
\centering
{\label{fig:1}
\centering
\includegraphics[width=\linewidth, trim={0mm 5mm 0mm 0mm}]{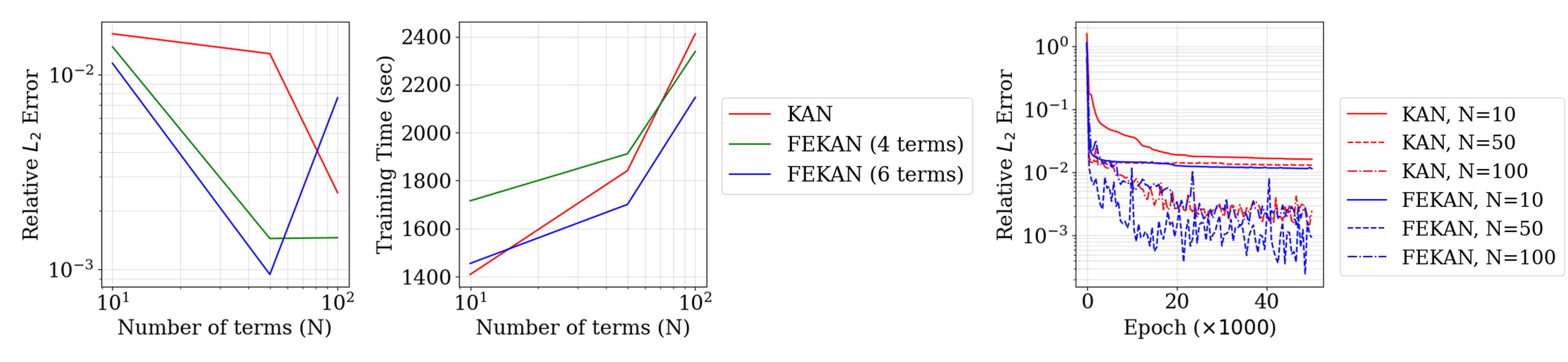}
}
\caption{\textbf{Fourier Basis:} (Left) Scaling behavior of KAN and FEKAN with increasing number of terms $N$ using the Fourier basis and (Center) Computational cost as a function of $N$ for the Fourier basis. (Right) Convergence of KAN and FEKAN on a high-frequency test function using the Fourier basis.}
\label{fig:fourier_scaling}
\end{figure}

\begin{table}[h!]
\centering
\begin{tabular}{|c||c|c||c|c|}
\hline
\multirow{2}{*}{\textbf{Number of terms ($N$)}} & \multicolumn{2}{c|}{\textbf{KAN}} & \multicolumn{2}{c|}{\textbf{FEKAN}} \\ \cline{2-5}
 & \textbf{Rel. $L_2$ Err.} & \textbf{Time (sec/iter)} & \textbf{Rel. $L_2$ Err.} & \textbf{Time (sec/iter)} \\ \hline
10 & 0.01605 $\pm$ 0.00010 & 0.02828 & 0.01148 $\pm$ 0.00055 & 0.03028 \\ \hline
50 & 0.01324 $\pm$ 0.00044 & 0.03456 & \textbf{0.00082 $\pm$ 0.00065} & 0.0366 \\ \hline
100 & 0.00160 $\pm$ 0.00013 & 0.03928 & 0.00454 $\pm$ 0.00251 & 0.05472 \\ \hline
\end{tabular}
\caption{\textbf{Fourier Basis Training Performance:} Accuracy and computational time of KAN and FEKAN using the Fourier basis with varying number of terms $N$. FEKAN is augmented with a Fourier feature map of orthogonal $\sin(ax)$ and $\cos(ax)$ pairs at multiple frequencies $a$.}
\label{tab:fourier_scaling}
\end{table}

Transitioning from spline to Fourier bases, we observe similar trends in accuracy and convergence (Fig.~\ref{fig:fourier_scaling}, right). For any fixed number of Fourier terms, FEKAN reduces the relative $L_2$ error by an order of magnitude compared to KAN of equivalent capacity. As with splines, error saturates beyond a certain number of terms. Notably, at $N = 100$, KAN and FEKAN achieve comparable errors at nearly the same computational cost. For the scaling law in Fig.~\ref{fig:fourier_scaling} (Left), FEKAN shows poor generalization at large numbers of terms. Values of $N \gtrsim 100$ exceed practical use and were tested only to illustrate architectural scaling. Besides achieving an order of magnitude improvement in the relative $L_2$ error, it is worthwhile to note that FEKAN maintains almost the same computational cost as KAN and this is shown in Table \ref{tab:fourier_scaling}.

\paragraph{Radial Basis Function \cite{abueidda2025deepokan}:}
This section examines KAN with and without feature enrichment using RBFs to approximate a high-frequency, discontinuous test function.

\begin{figure}[H]
\centering
{\label{fig:1}
\centering
\includegraphics[width=\linewidth, trim={0mm 5mm 0mm 0mm}, clip]{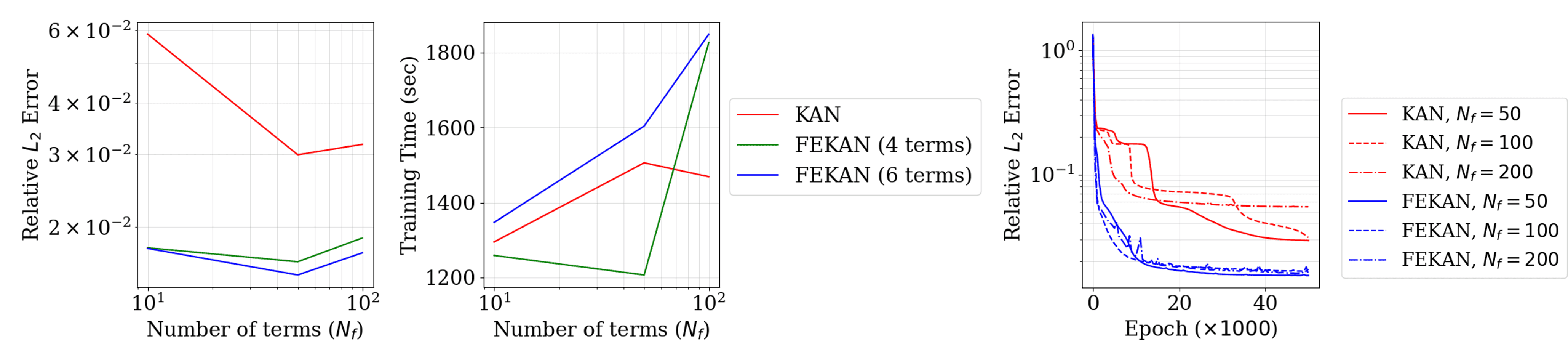}
}
\caption{\textbf{Radial Basis:} (Left) Scaling behavior of KAN and FEKAN with increasing number of terms $N_f$ using the RBF basis and (Center) Computational cost as a function of $N_f$ for the RBF. (Right) Convergence of KAN and FEKAN on a high-frequency test function using the Fourier basis.
}
\label{fig:rbf_scaling}
\end{figure}

\begin{table}[h!]
\centering
\begin{tabular}{|c||c|c||c|c|}
\hline
\multirow{2}{*}{\textbf{Number of terms ($N_f$)}} & \multicolumn{2}{c|}{\textbf{KAN}} & \multicolumn{2}{c|}{\textbf{FEKAN}} \\ \cline{2-5}
 & \textbf{Rel. $L_2$ Err.} & \textbf{Time (sec/iter)} & \textbf{Rel. $L_2$ Err.} & \textbf{Time (sec/iter)} \\ \hline
50 & 0.03283 $\pm$ 0.00507 & 0.03052 & \textbf{0.01554 $\pm$ 0.00035} & 0.03352 \\ \hline
100 & 0.03802 $\pm$ 0.00840 & 0.03682 & 0.01589 $\pm$ 0.00076 & 0.03340 \\ \hline
200 & 0.04761 $\pm$ 0.00982 & 0.03480 & 0.01507 $\pm$ 0.00013 & 0.03634 \\ \hline
\end{tabular}
\caption{\textbf{RBF Training Performance:} Comparison of accuracy and computational time for KAN and FEKAN using the RBF basis with varying number of terms $N_f$. FEKAN is enhanced with a Fourier feature map consisting of orthogonal $\sin(ax)$ and $\cos(ax)$ pairs at different frequencies $a$.}
\label{tab:rbf_scaling}
\end{table}

Although FEKAN slightly outperforms KAN with RBFs, accuracy remains largely constant and insensitive to the number of terms ($N_f$). As shown in Fig.~\ref{fig:rbf_scaling} and Table~\ref{tab:rbf_scaling}, RBFs are inadequate for approximating the high-frequency, discontinuous test function. This limitation aligns with previous observations in operator learning for high-frequency bubble dynamics \cite{Zhang2025BubbleOKAN}.

\paragraph{Performance comparison for other basis functions:}
\label{subsubsec:func_appr_others}
We further examine the impact of feature enrichment on KAN architectures using bases beyond B-splines. Recent variants such as ChebyKAN \cite{ss2024chebyshev}, ReLUKAN \cite{qiu2024relu}, HRKAN \cite{so2025higher}, FastKAN \cite{li2024kolmogorov}, and WavKAN \cite{bozorgasl2405wav} leverage alternative basis formulations to improve computational efficiency. Their performance under feature enrichment is discussed in the following sections.

\begin{table}[h!]
    \centering
    \resizebox{\columnwidth}{!}{%
    \begin{tabular}{||c||c|c||c|c||c|c|c||}
    \hline
    \textbf{Architecture} & \textbf{Basis} & \textbf{Parameters} & \textbf{Rel. $L_2$ Err.} & \textbf{Time (sec/iter)} & \textbf{High fr.} & \textbf{Disc.} & \textbf{Low fr.}\\ \hline
    KAN & Spline & $k = 2, \hspace{0.1cm} G = 15$ & 0.01468 $\pm$ 0.00072 & 0.0514 & \xmark & \omark & \cmark\\ \hline
    FEKAN & Spline & $k = 2, \hspace{0.1cm} G = 15$ & \textbf{0.00175 $\pm$ 0.00026} & 0.0517 & \cmark & \cmark & \cmark\\ \hline\hline
    KAN & Fourier & $N = 50$ & 0.01344 $\pm$ 0.00041 & 0.0345 & \xmark & \cmark & \cmark\\ \hline
    FEKAN & Fourier & $N = 50$ & \textbf{0.00103 $\pm$ 0.00069} & 0.0366 & \cmark & \cmark & \cmark\\ \hline\hline
    ChebyKAN \cite{ss2024chebyshev} & Chebyshev & $k = 4$ & NaN & 0.0264 & \xmark & \xmark & \xmark\\ \hline
    FE-ChebyKAN & Chebyshev & $k = 4$ & \textbf{0.01475 $\pm$ 0.00035} & 0.0264 & \xmark & \cmark & \cmark\\ \hline\hline
    FastKAN \cite{li2024kolmogorov} & RBF & $N_f = 50$ & 0.03628 $\pm$ 0.00752 & 0.0305 & \xmark & \xmark & \cmark\\ \hline
    FE-FastKAN & RBF & $N_f = 50$ & 0.01567 $\pm$ 0.00049 & 0.0335 & \xmark & \omark & \cmark \\ \hline\hline
    ReLUKAN \cite{qiu2024relu} & ReLU & $k = 2, \hspace{0.1cm} G = 15$ & 0.01644 $\pm$ 0.00224 & 0.0108 & \xmark & \omark & \cmark\\ \hline
    FE-ReLUKAN & ReLU & $k = 2, \hspace{0.1cm} G = 15$ & 0.01246 $\pm$ 0.00034 & 0.00934 & \omark & \cmark & \cmark \\ \hline\hline
    HRKAN \cite{so2025higher} & ReLU$^n$ & $k = 2, \hspace{0.1cm} G = 15, \hspace{0.1cm} n = 3$ & 0.01591 $\pm$ 0.00317 & 0.0061 & \xmark & \omark & \cmark \\ \hline
    FE-HRKAN & ReLU$^n$ & $k = 2, \hspace{0.1cm} G = 15, \hspace{0.1cm} n = 3$ & 0.01136 $\pm$ 0.00036 & 0.0243 & \omark & \cmark & \cmark \\ \hline\hline
    WavKAN \cite{bozorgasl2405wav} & DoG & - & 0.33938 $\pm$ 0.19026 & 0.0054 & \xmark & \xmark & \xmark\\ \hline
    FE-WavKAN & DoG & - & \textbf{0.07479 $\pm$ 0.03881} & 0.0144 & \xmark & \cmark & \cmark \\ \hline\hline
    \end{tabular}%
    }
    \caption{Comparison of accuracy and computational time for KAN and FEKAN across different basis functions with architecture $[n_p, 6, 1]$. FEKAN includes a Fourier feature map of orthogonal $\sin(ax)$ and $\cos(ax)$ pairs. \textbf{High fr.}, \textbf{Disc.}, and \textbf{Low fr.} indicate whether the model captures high-frequency, discontinuous, or low-frequency features, respectively. (\cmark) = fully learned, (\omark) = partially learned, (\xmark) = not learned. See Appendix~\ref{appx:func_approx} for details.}
    \label{tab:funfit_compare}
\end{table}
\noindent
Based on the results in Table \ref{tab:funfit_compare}, we can infer the following about the benefits of feature enrichment via FEKAN:
\begin{enumerate}
    \item \textcolor{blue}{\textbf{Extreme Parametric Efficiency:}} While KAN \cite{liu2024kan} is already more parameter-efficient than MLPs, feature enrichment pushes this efficiency to new extremes.  
    \item \textcolor{blue}{\textbf{Faster Convergence:}} FEKAN consistently exhibits accelerated convergence across all basis functions considered (Table~\ref{tab:funfit_compare}).

     \item \textcolor{blue}{\textbf{Stability:}} ChebyKAN \cite{ss2024chebyshev} often diverges during training, consistent with prior reports. Feature enrichment in FEKAN mitigates this instability, enabling stable training without divergence.  
    \item \textcolor{blue}{\textbf{Interpretability:}} KAN promotes interpretability through additive 1D basis functions but struggles with computational efficiency. Hybrid MLP-KAN solutions improve efficiency at the cost of interpretability. FEKAN enhances efficiency and parametric performance while preserving the interpretable structure of the original KAN.

    \item \textcolor{blue}{\textbf{Overcoming Spectral Bias:}} Few works address spectral bias in KAN. While continual learning was used to mitigate spectral bias in high-frequency bubble dynamics \cite{Zhang2025BubbleOKAN}, FEKAN overcomes spectral bias architecturally, as demonstrated for spline and Fourier bases (Table~\ref{tab:funfit_compare}).  
    \item \textcolor{blue}{\textbf{Continual Learning:}} Hybrid MLP-KAN architectures compromise KAN’s inherent continual learning. FEKAN preserves this property by retaining the original architecture, with feature enrichment implemented as a simple input encoding modification.

\end{enumerate}

\begin{figure}[ht]
\centering
{\label{fig:1}
\centering
\includegraphics[width=0.92\linewidth, trim={0mm 0mm 0mm 0mm}, clip]{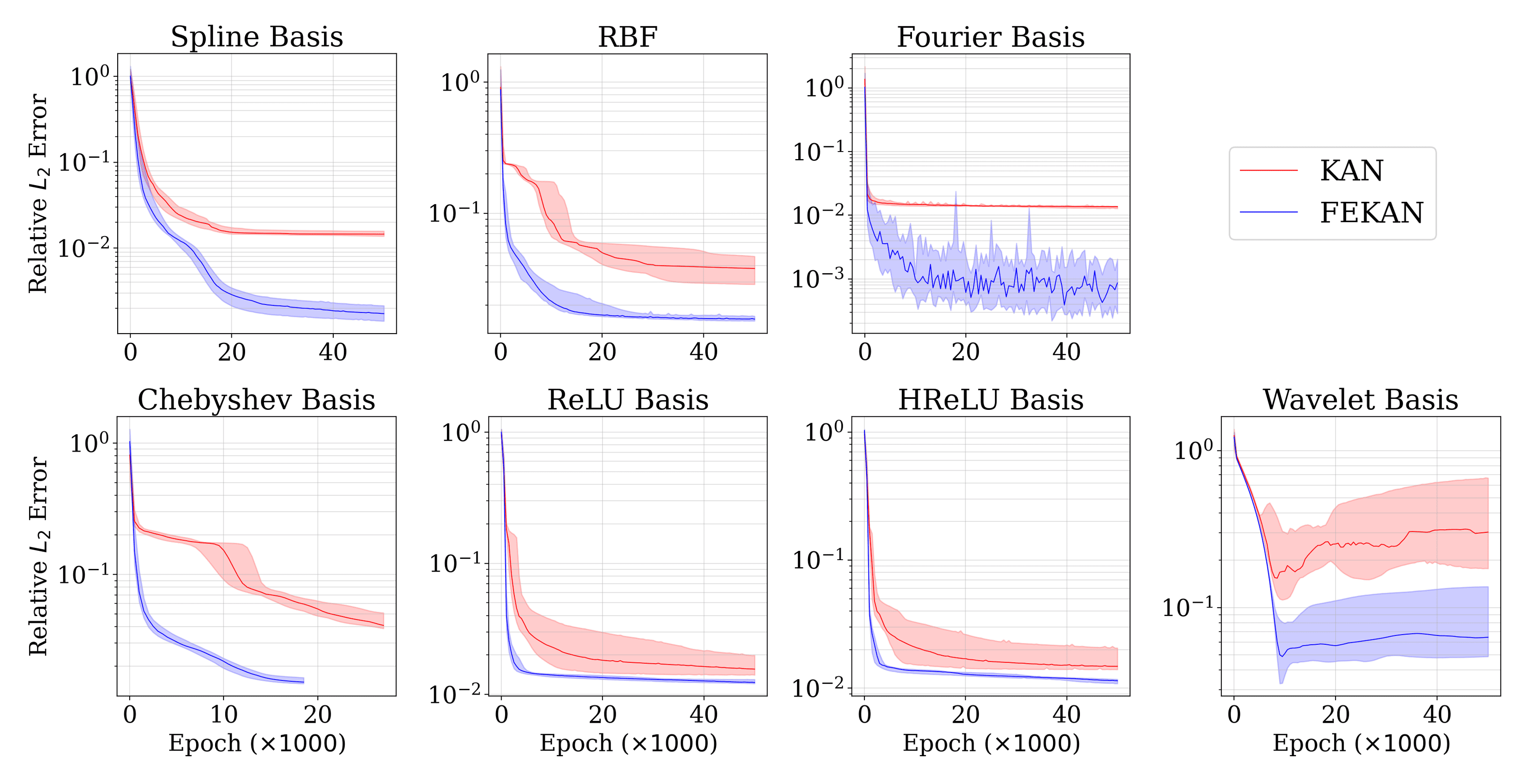}
}
\caption{Relative $L_2$ error for KAN across different basis functions as in Table~\ref{tab:funfit_compare}, evaluated over 10 random seeds. 
For the Chebyshev basis, KAN diverged in all 10 runs, while FEKAN diverged in 4/10 runs. Despite the instability of Chebyshev training, FEKAN stabilizes learning, achieving better accuracy and faster convergence. For Chebyshev, training stops at divergence, and uncertainty quantification is performed using the earliest divergence to maintain consistent dimensions.
}
\label{fig:funfit_compare_uq}
\end{figure}

\noindent
We observe consistent improvements in accuracy across various KAN architectures that employ different basis functions when feature enrichment is added. However, empirical studies reveal additional insights. While some basis functions, such as FastKAN \cite{li2024kolmogorov}, ReLUKAN \cite{qiu2024relu}, HRKAN \cite{so2025higher}, and WavKAN \cite{bozorgasl2405wav}, offer computational efficiency, they are insufficient for capturing high-frequency and discontinuous components of the test function (Table~\ref{tab:funfit_compare} and Figure~\ref{fig:funfit_compare_uq}). Feature enrichment enables these architectures to approximate discontinuities and low-frequency components, yet high-frequency features remain challenging. In contrast, spline bases combined with feature enrichment demonstrate universality in function approximation via KAT, despite their relatively lower computational efficiency.


\subsubsection{Dynamical Systems} 
\label{subsec:dynamical}
In this section, we approximate the dynamics of a chaotic system (using only data), namely the Lorenz attractor (Fig.~\ref{fig:lorenz_comparison}, left), and compare the performance of KAN and FEKAN to assess the impact of feature enrichment within both architectures. The Lorenz system is governed by the following set of ordinary differential equations:
\begin{align}
\label{eq:lorenz1}
\frac{dx}{dt} &= \sigma (y - x), \\
\label{eq:lorenz2}
\frac{dy}{dt} &= x(\rho - z) - y, \\
\label{eq:lorenz3}
\frac{dz}{dt} &= xy - \beta z.
\end{align}
The parameters $\sigma$, $\rho$, and $\beta$ are fixed and govern the degree of chaotic behaviour in the system. Importantly, the Lorenz dynamics are highly sensitive to initial conditions, such that small perturbations can lead to markedly divergent trajectories, rendering long-term prediction challenging. Accordingly, KAN and FEKAN are trained on multiple solution trajectories of the Lorenz system (Fig.~\ref{fig:lorenz_comparison}, left) generated from random initial conditions. Model performance is subsequently evaluated on an unseen trajectory (Fig.~\ref{fig:lorenz_comparison}, centre and right). The networks are trained to approximate one-step ODE integration of the form
\begin{equation}
\{x_t, y_t, z_t\} \xrightarrow{u_{\theta}} \{x_{t+1}, y_{t+1}, z_{t+1}\}.
\end{equation}

Although the model is trained to approximate the Lorenz attractor, the training procedure differs from that used for the high-frequency test functions in Section~\ref{subsec:disc_func_approx}. For the Lorenz system, each training epoch corresponds to learning a full trajectory generated from a given initial condition over a fixed number of time steps. In subsequent epochs, the model is exposed to trajectories arising from different initial conditions. This procedure continues until all trajectories in the training dataset (Fig.~\ref{fig:lorenz_comparison}, left) have been exhausted.

\begin{figure}[H]
\centering
{\label{fig:1}
\centering
\includegraphics[width=\linewidth, trim={0mm 0mm 0mm 0mm}, clip]{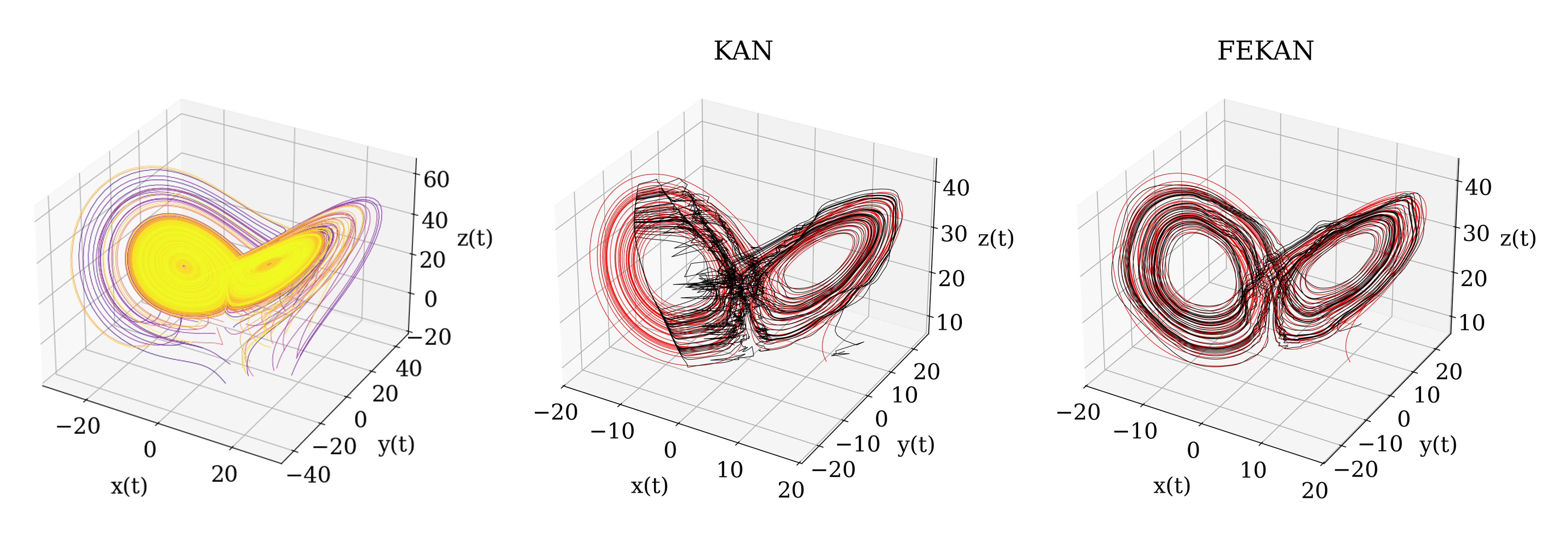}
}
\caption{(Left) Solutions of the Lorenz system with parameters $\sigma = 10$, $\rho = 28$, $\beta = 8/3$ for multiple random initial conditions. (Center) Approximation for an unseen initial condition using KAN, and (Right) FEKAN, both with the spline basis of polynomial order $k=2$ and grid size $G=7$.}
\label{fig:lorenz_comparison}
\end{figure}

Figure~\ref{fig:lorenz_comparison} shows that, for an unseen test trajectory, the spline-based KAN prediction exhibits noticeable distortion, whereas FEKAN remains well aligned with the ground truth. Figure~\ref{fig:loren_converge_spline} shows the convergence behavior of KAN and FEKAN. These results highlight the advantage of feature enrichment when approximating complex dynamical systems such as the Lorenz attractor.

\begin{figure}[H]
\centering
{\label{fig:1}
\centering
\includegraphics[width=0.7\linewidth]{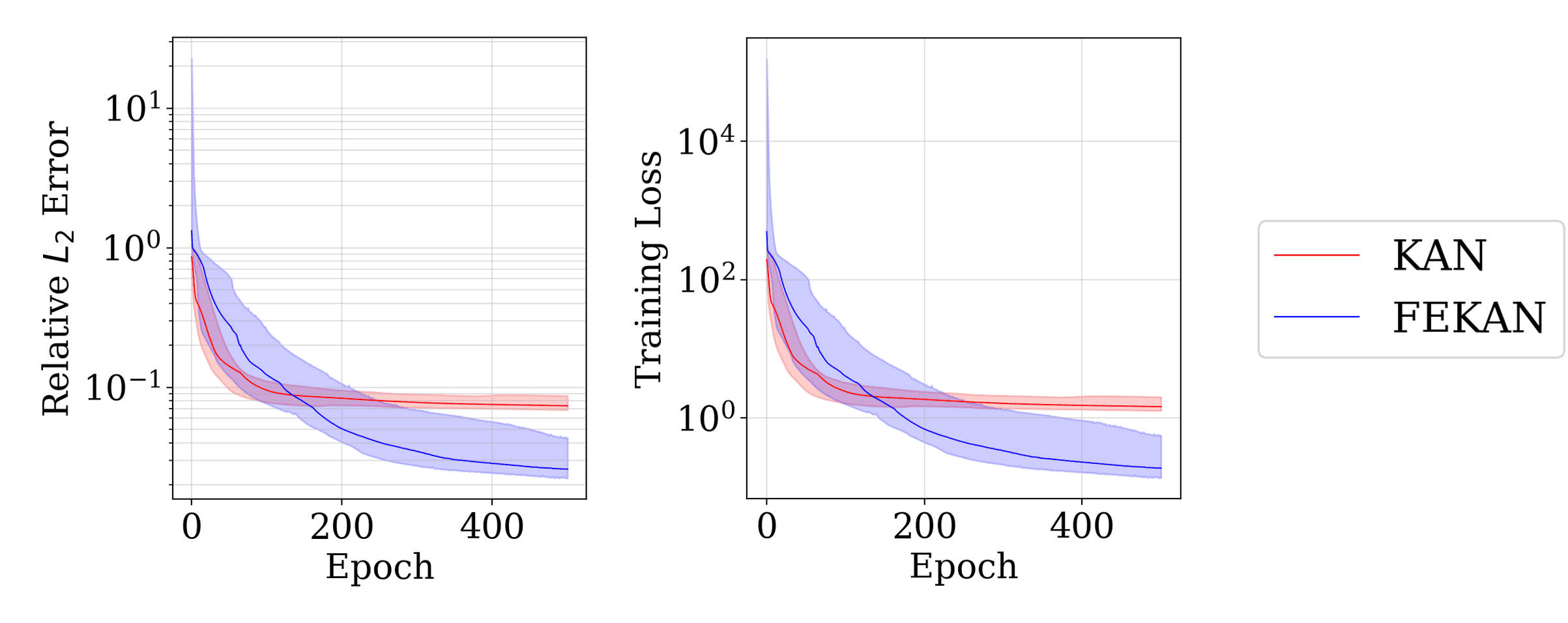}
}
\caption{Convergence of the Lorenz system using KAN and FEKAN. KAN is trained without feature enrichment, while FEKAN uses feature enrichment with polynomial order $k=2$ and grid size $G=7$.}
\label{fig:loren_converge_spline}
\end{figure}

\subsection{Physics-Informed FEKAN (PI-FEKAN)}
\label{sec:pifekan}
Physics-informed neural networks (PINNs)~\cite{raissi2019physics,jagtap2020adaptive,jagtap2022deepknn,jagtap2020conservative,abbasi2025history,jagtap2022deepWW,penwarden2023unified,shukla2021parallel,mao2020physics,shukla2021physics,jagtap2022physics,hu2021extended,jagtap2020locally,jagtap2023important,jagtap2020extended} incorporate physical laws into neural network training by embedding governing equations directly into the loss function. This physics-based regularization constrains the admissible solution space and enables accurate modeling even with limited or noisy data. Consequently, PINNs have emerged as a powerful framework for solving forward and inverse problems in complex physical systems. Physics-informed KAN (PI-KAN) architectures have recently been introduced in several studies~\cite{wang2025kolmogorov,zhao2025pikan,guo2025physics,shukla2024comprehensive}. In the present work, we extend this framework to PI-FEKAN. We consider a PDE of the general form
\begin{align}
\label{eq:pde}
u_t + \mathcal{N}[u] = 0, \quad t \in [0,\Gamma], \; x \in \Omega,
\end{align}
subject to the initial and boundary conditions
\begin{align}
u(x,0) &= g(x), \quad x \in \Omega, \\
\mathcal{B}[u] &= 0, \quad t \in [0,\Gamma], \; x \in \partial\Omega,
\end{align}
where $\mathcal{N}[\cdot]$ denotes a (possibly nonlinear) differential operator and $\mathcal{B}[\cdot]$ represents a boundary operator corresponding to Dirichlet, Neumann, or Robin conditions. The solution $u \in \mathbb{R}^d$ is approximated by a neural network surrogate. The unknown solution $u(x,t)$ is approximated by a parametric model $u_{\theta}(x,t)$, where $\theta$ denotes the trainable parameters. In PI-KAN, the conventional deep neural network is replaced by a KAN, such that $\theta$ corresponds to the internal parameters of the one-dimensional spline basis functions. The model is trained by minimizing a physics-informed loss function of the form:

\begin{align}
\mathcal{L}(\theta) = \lambda_{\rm res} \mathcal{L}_{\rm res}(\theta) + \lambda_{\rm bc} \mathcal{L}_{\rm bc}(\theta) + \lambda_{\rm ic} \mathcal{L}_{\rm ic}(\theta),
\end{align}
where
\begin{align}
\mathcal{L}_{\rm res}(\theta) &= \frac{1}{N_{\rm res}} \sum_{i=1}^{N_{\rm res}} \Big|\frac{\partial u_{\theta}}{\partial t} (x^i_{\rm res}, t^i_{\rm res}) + \mathcal{N}[u_{\theta}](x^i_{\rm res}, t^i_{\rm res}) \Big|^2, \\
\mathcal{L}_{\rm bc}(\theta) &= \frac{1}{N_{\rm bc}} \sum_{i=1}^{N_{\rm bc}} \Big|\mathcal{B}[u_{\theta}](x^i_{\rm bc}, t^i_{\rm bc})\Big|^2, \\
\mathcal{L}_{\rm ic}(\theta) &= \frac{1}{N_{\rm ic}} \sum_{i=1}^{N_{\rm ic}} \Big|u_{\theta}(x^i_{\rm ic},0) - g(x^i_{\rm ic})\Big|^2.
\end{align}

Here, $\{x^i_{\rm res}, t^i_{\rm res}\}_{i=1}^{N_{\rm res}}$, $\{x^i_{\rm bc}, t^i_{\rm bc}\}_{i=1}^{N_{\rm bc}}$, and $\{x^i_{\rm ic}\}_{i=1}^{N_{\rm ic}}$ denote collocation points, boundary points, and initial condition points sampled from the computational domain. As in conventional deep neural networks, the trainable parameters $\theta$ are optimized via backpropagation using automatic differentiation.

The training procedure described above for PI-KAN applies equally to PI-FEKAN. The architectures are identical except for the additional feature-enrichment layer in PI-FEKAN (Fig.~\ref{fig:FEKANfig}). This layer transforms the input spatiotemporal coordinates $(x, y, z, t)$ into a richer representation before passing them to the KAN backbone. 
For all test cases considered here, PI-FEKAN employs Fourier feature enrichment of the form
\begin{align}
\label{eq:feature_enrich}
\gamma(x) = [1, \cos(a_1 x), \sin(a_1 x), \dots, \cos(a_m x), \sin(a_m x)],
\end{align}
where $2m$ denotes the total number of enrichment terms. For high-dimensional inputs, the enriched features for each dimension are concatenated. For example, in a two-dimensional problem, the feature-enrichment layer is given by
\begin{align}
\label{eq:feature_enrich2}
\gamma(x,y) = [
\{1, \cos(a_1 x), \sin(a_1 x), \dots, \cos(a_m x), \sin(a_m x)\}, \;
\{1, \cos(a_1 y), \sin(a_1 y), \dots, \cos(a_m y), \sin(a_m y)\}].
\end{align}
Selecting an appropriate function family and the number of enrichment terms is important for optimal performance and can be treated as additional hyperparameters of the PI-FEKAN architecture.

\subsubsection{Test Case 1: Helmholtz Equation}
\label{subsec:pifekan_helm}
In this section, we consider the Helmholtz equation, a steady-state wave equation, given by
\begin{align}
\Delta u + k^2 u &= q, \quad x \in \Omega,\\
u(x) &= 0, \quad x \in \partial\Omega,
\end{align}
where the spatial domain is $\Omega = [-1,1]^2$ and the parameters are set to $k = 1$, $a_1 = 4$, and $a_2 = 4$. 

We compare the performance of PI-KAN and PI-FEKAN across different basis functions to assess the robustness of PI-FEKAN in solving PDEs. 
Further details on the training procedure are provided in Appendix~\ref{appx:helmholtz}.

\paragraph{Spline Basis \cite{liu2024kan}}
This section examines the performance gains of PI-FEKAN over PI-KAN using the spline basis for the Helmholtz equation. As a preliminary study, we first assess the sensitivity of PI-FEKAN to the number of feature-enrichment terms. Figure~\ref{fig:helm_abserror_spline} illustrates the improvement in accuracy of PI-FEKAN relative to PI-KAN as the number of enrichment terms is varied.
\begin{figure}[ht]
\centering
{\label{fig:1}
\centering
\includegraphics[width=0.9\linewidth]{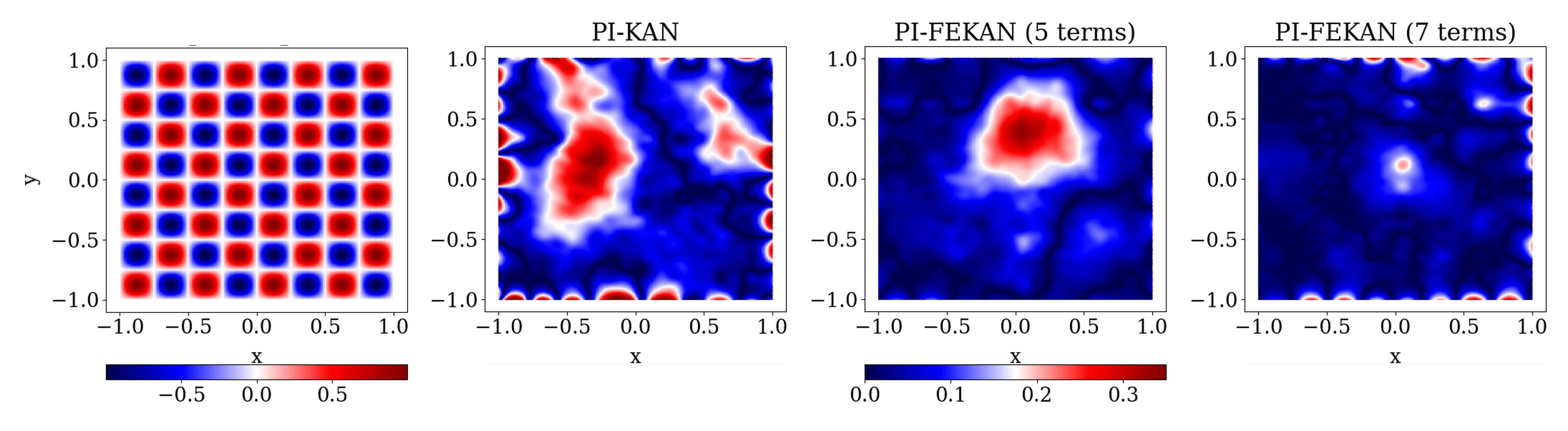}
}
\caption{Absolute error for the Helmholtz equation using the spline basis: (Left) PI-KAN, (Center) PI-FEKAN with 5 terms, and (Right) PI-FEKAN with 7 terms.}
\label{fig:helm_abserror_spline}
\end{figure}
With a fixed grid size $G$ and polynomial order $k$, PI-FEKAN achieves over 50\% reduction in the relative $L_2$ error compared to PI-KAN, with only a modest increase in computational cost (Table~\ref{tab:helm_spline_feature_terms}). Figure~\ref{fig:helm_converge_spline} further shows that the convergence rate steadily improves as the number of feature-enrichment terms is increased.

\begin{table}[h!]
\centering
\begin{tabular}{|c|c||c|c||}
\hline
\textbf{Architecture} & \textbf{Feature Enrichment} & \textbf{Rel. $L_2$ Err.} & \textbf{Time (sec/iter)} \\ \hline\hline
PI-KAN & None & 0.27160 $\pm$ 0.06727 & 0.09199 \\ \hline
PI-FEKAN & 5 terms & 0.24710 $\pm$ 0.10925 & 0.09563 \\ \hline
PI-FEKAN & 7 terms & \textbf{0.12358 $\pm$ 0.05403} & 0.11219 \\ \hline
\end{tabular}
\caption{\textbf{Helmholtz Equation Training Performance:} Accuracy and computational time for PI-KAN and PI-FEKAN using the spline basis with polynomial order $k=3$ and grid size $G=5$. The architecture has configuration $[n, 7, 7, 1]$, where $n$ is the number of input features. PI-FEKAN is enriched with a Fourier feature map of orthogonal $\sin(ax)$ and $\cos(ax)$ pairs at varying frequencies $a$.
}
\label{tab:helm_spline_feature_terms}
\end{table}

\begin{figure}[H]
\centering
{\label{fig:1}
\centering
\includegraphics[width=0.8\linewidth]{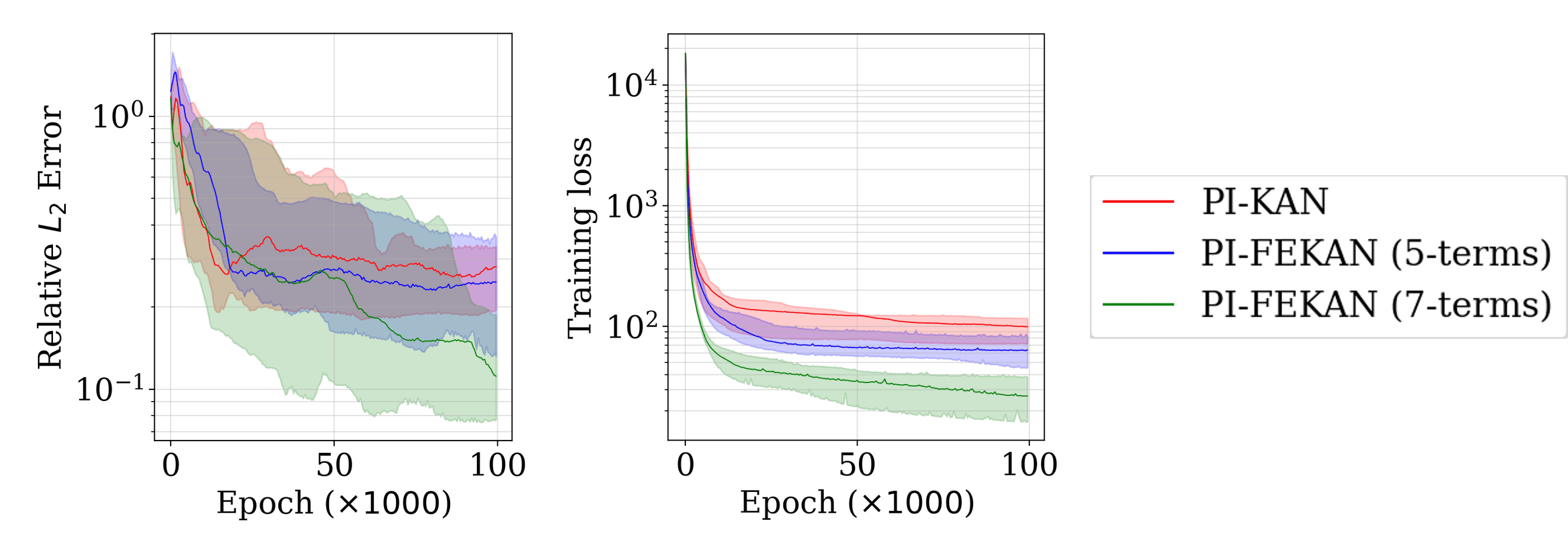}
}
\caption{Convergence for the Helmholtz equation using PI-KAN, PI-FEKAN with 5 feature enrichment terms, and PI-FEKAN with 7 feature enrichment terms.
}
\label{fig:helm_converge_spline}
\end{figure}

\paragraph{Chebyshev Basis \cite{ss2024chebyshev}}
This section examines the performance of PI-FEKAN relative to PI-KAN using the Chebyshev basis for the Helmholtz equation. For the spline basis, increasing the grid size $G$ generally improves accuracy, but at the cost of longer training times, creating a computational bottleneck. To address this, Chebyshev polynomials have recently been employed in KAN architectures, offering an orthogonal basis and significantly reducing the number of trainable parameters compared to splines. These properties make Chebyshev polynomials an attractive choice for large-scale PDE problems. However, they remain prone to instability during training, which limits their practical performance. 

Recent studies have highlighted the prevalence of instability in Chebyshev-based architectures, particularly for PDEs. Several works~\cite{shukla2024comprehensive,mostajeran2025scaled} have proposed strategies to mitigate this issue. Here, we compare the accuracy and stability of PI-FEKAN with PI-KAN using the Chebyshev basis and benchmark our approach against recent methods designed to improve stability, such as the modified Chebyshev formulation~\cite{shukla2024comprehensive}.
\begin{figure}[ht]
\centering
{\label{fig:1}
\centering
\includegraphics[width=0.9\linewidth]{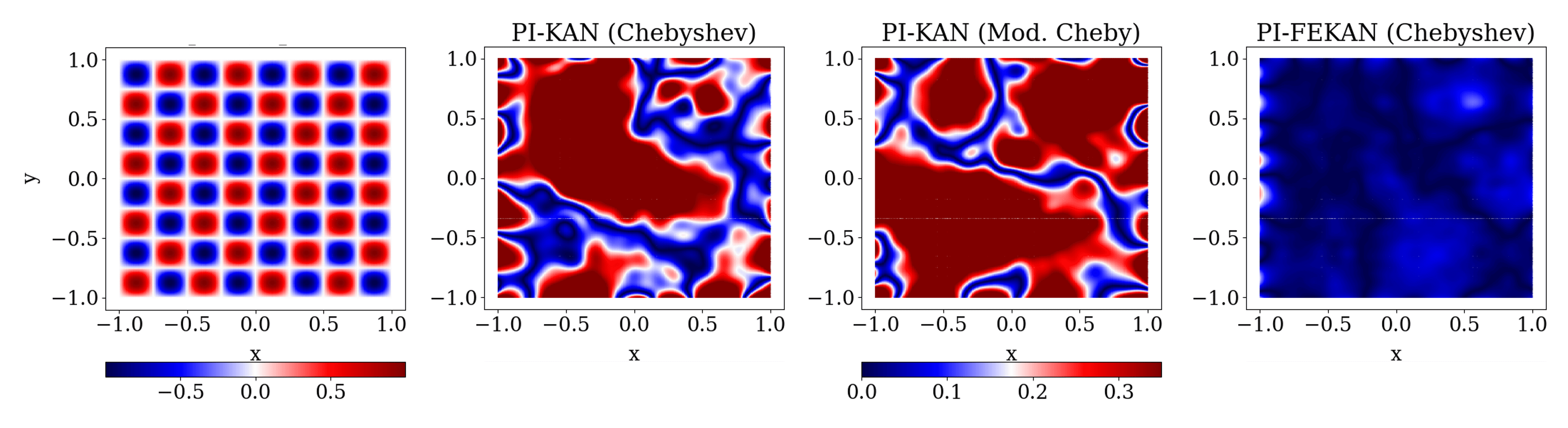}
}
\caption{Absolute error for the Helmholtz equation: (Left) PI-KAN with Chebyshev~\cite{ss2024chebyshev}, (Center) PI-FEKAN with modified Chebyshev~\cite{shukla2024comprehensive}, and (Right) PI-FEKAN with Chebyshev.
}
\label{fig:helm_abserror_cheby}
\end{figure}

The absolute error distributions in Figure~\ref{fig:helm_abserror_cheby} indicate that PI-FEKAN achieves superior accuracy for the Helmholtz equation compared to PI-KAN using either Chebyshev or modified Chebyshev polynomials. Table~\ref{tab:helm_cheby_feature_terms} shows that PI-FEKAN reduces the relative $L_2$ error by over 50\% with a computational cost comparable to PI-KAN with Chebyshev basis. In contrast, the modified Chebyshev basis provides little improvement in either accuracy or efficiency.
\begin{table}[h!]
\centering
\begin{tabular}{|c|c||c|c||}
\hline
\textbf{Architecture} & \textbf{Feature Enrichment} & \textbf{Rel. $L_2$ Err.} & \textbf{Time (sec/iter)} \\ \hline\hline
PI-KAN (Chebyshev \cite{ss2024chebyshev}) & None & 0.70608 $\pm$ 0.19506 & 0.03714 \\ \hline
PI-KAN (Mod. Chebyshev \cite{shukla2024comprehensive}) & None & 0.55833 $\pm$ 0.26008 & 0.09798 \\ \hline
PI-FEKAN (Chebyshev) & 7 terms & \textbf{0.22368 $\pm$ 0.43223} & 0.03475 \\ \hline
\end{tabular}
\caption{\textbf{Helmholtz Equation Training Performance:} Accuracy and computational time for PI-KAN and PI-FEKAN using the Chebyshev basis with polynomial order $k=4$. The architecture has configuration $[n, 7, 7, 1]$, where $n$ is the number of input features. PI-FEKAN is enriched with a Fourier feature map of orthogonal $\sin(ax)$ and $\cos(ax)$ pairs at varying frequencies $a$.}
\label{tab:helm_cheby_feature_terms}
\end{table}
\begin{figure}[H]
\centering
{\label{fig:1}
\centering
\includegraphics[width=0.8\linewidth]{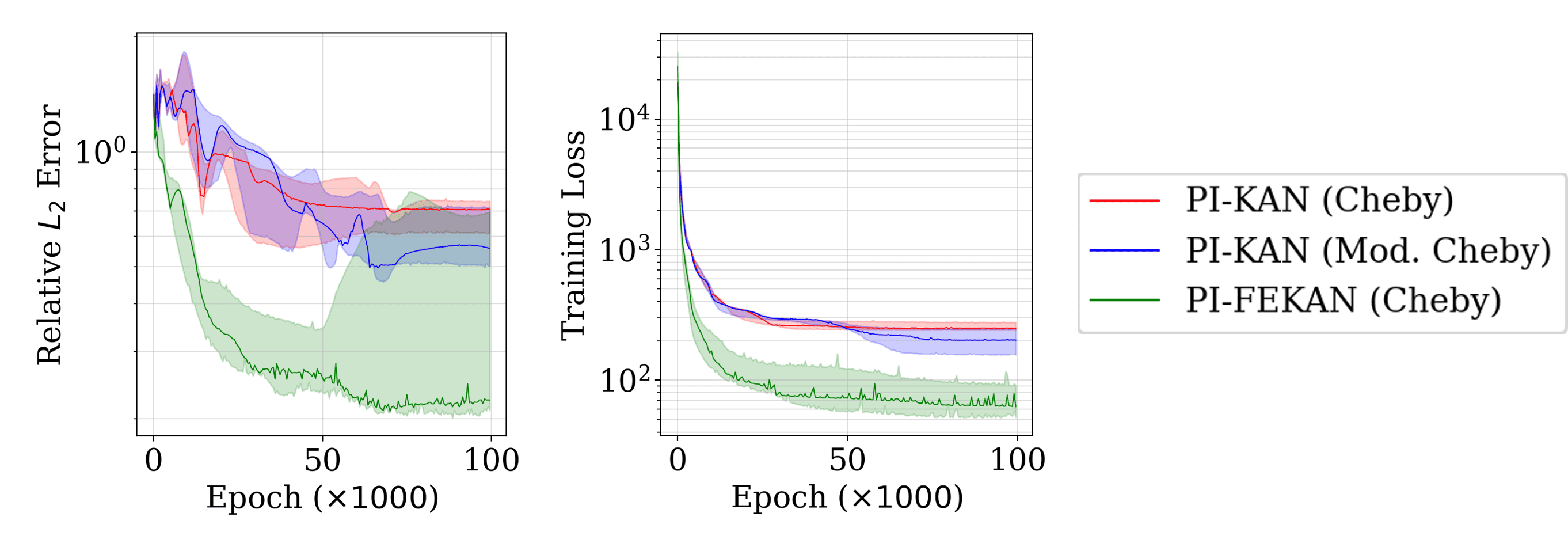}
}
\caption{Convergence for the Helmholtz equation using PI-KAN with Chebyshev~\cite{ss2024chebyshev}, PI-FEKAN with modified Chebyshev~\cite{shukla2024comprehensive}, and PI-FEKAN with Chebyshev.}
\label{fig:helm_converge_cheby}
\end{figure}
Consistent with the spline-basis results, PI-FEKAN using the Chebyshev basis exhibits faster convergence than PI-KAN of equivalent representational capacity, as shown in Table~\ref{tab:helm_cheby_feature_terms}.

\paragraph{Performance comparison for other basis functions:}
We further investigate the impact of feature enrichment on PI-KAN architectures employing basis functions beyond B-splines. While recent studies introducing ChebyKAN~\cite{ss2024chebyshev}, ReLUKAN~\cite{qiu2024relu}, HRKAN~\cite{so2025higher}, FastKAN~\cite{li2024kolmogorov}, and WavKAN~\cite{bozorgasl2405wav} have focused primarily on function fitting, here we extend their use to solving PDEs and compare the performance of PI-FEKAN with the corresponding PI-KAN architectures. These comparisons are discussed in detail below.

\begin{table}[h!]
    \centering
    \resizebox{0.75\columnwidth}{!}{%
    \begin{tabular}{||c||c|c||c|c||}
    \hline
    \textbf{Architecture} & \textbf{Basis} & \textbf{Parameters} & \textbf{Rel. $L_2$ Err.} & \textbf{Time (sec/iter)}\\ \hline
    PI-KAN & Spline & $k = 3, \hspace{0.1cm} G = 5$ & 0.26230 $\pm$ 0.10690 & 0.0977 \\ \hline
    PI-FEKAN & Spline & $k = 3, \hspace{0.1cm} G = 5$ & \textbf{0.08246 $\pm$ 0.00989} & 0.0787 \\ \hline\hline
    PI-KAN & Fourier & $N = 10$ & 0.09442 $\pm$ 0.05181 & 0.0899 \\ \hline
    PI-FEKAN & Fourier & $N = 10$ & \textbf{0.08001 $\pm$ 0.03074} & 0.1176 \\ \hline\hline
    PI-KAN & Chebyshev & $k = 4$ & 0.70856 $\pm$ 0.19506 & 0.0360 \\ \hline
    PI-FEKAN & Chebyshev & $k = 4$ & \textbf{0.48404 $\pm$ 0.43223} & 0.0275 \\ \hline\hline
    Fast-PI-KAN & RBF & $N_f = 10$ & 1.04524 $\pm$ 0.54576 & 0.0228 \\ \hline
    Fast-PI-FEKAN & RBF & $N_f = 10$ & \textbf{0.44326 $\pm$ 0.43738} & 0.0340 \\ \hline\hline
    ReLU-PI-KAN & ReLU & $k = 3, \hspace{0.1cm} G = 5$ & 0.92362 $\pm$ 0.04461 & 0.153 \\ \hline
    ReLU-PI-FEKAN & ReLU & $k = 3, \hspace{0.1cm} G = 5$ & \textbf{0.03471 $\pm$ 0.01937} & 0.2430 \\ \hline\hline
    HR-PI-KAN & ReLU$^n$ & $k = 3, \hspace{0.1cm} G = 5, \hspace{0.1cm} n = 3$ & 0.65051 $\pm$ 0.28624 & 0.1212 \\ \hline
    HR-PI-FEKAN & ReLU$^n$ & $k = 3, \hspace{0.1cm} G = 5, \hspace{0.1cm} n = 3$ & \textbf{0.02676 $\pm$ 0.02407} & 0.3463 \\ \hline\hline
    Wav-PI-KAN & DoG & - & 1.97574 $\pm$ 2.13629 & 0.1068 \\ \hline
    Wav-PI-FEKAN & DoG & - & 6.72893 $\pm$ 11.82200 & 0.1402 \\ \hline\hline
    \end{tabular}%
    }
    \caption{Comparison of accuracy and computational time for PI-KAN and PI-FEKAN across different basis functions with architecture $[n_p, 7, 7, 1]$. PI-FEKAN is enriched with a Fourier feature map of 7 orthogonal $\sin(ax)$ and $\cos(ax)$ pairs per input dimension, giving $n_p = 14$ for 2D Helmholtz, compared to $n_p = 2$ for PI-KAN. See Appendix~\ref{appx:helmholtz} for training details.}
    \label{tab:helm_compare}
\end{table}

Table~\ref{tab:helm_compare} highlights the following benefits of feature enrichment through FEKAN:

\begin{enumerate}
    \item \textcolor{blue}{\textbf{Extreme Parametric Efficiency: }} Consistent with the function-approximation results, PI-FEKAN demonstrates remarkable parametric efficiency compared to PI-KAN. Specifically, it achieves superior generalization and markedly higher accuracy for a given PDE while using substantially fewer trainable parameters than its PI-KAN counterpart.
    \item \textcolor{blue}{\textbf{Faster Convergence:}} Feature enrichment enables FEKAN to achieve consistently faster convergence across all basis functions listed in Table~\ref{tab:helm_compare}. This trend is further illustrated in Figure~\ref{fig:helm_compare_uq}, which shows that enrichment accelerates convergence regardless of the chosen basis.

    \begin{figure}[ht]
    \centering
    {\label{fig:1}
    \centering
    \includegraphics[width=0.92\linewidth, trim={0mm 0mm 0mm 0mm}, clip]{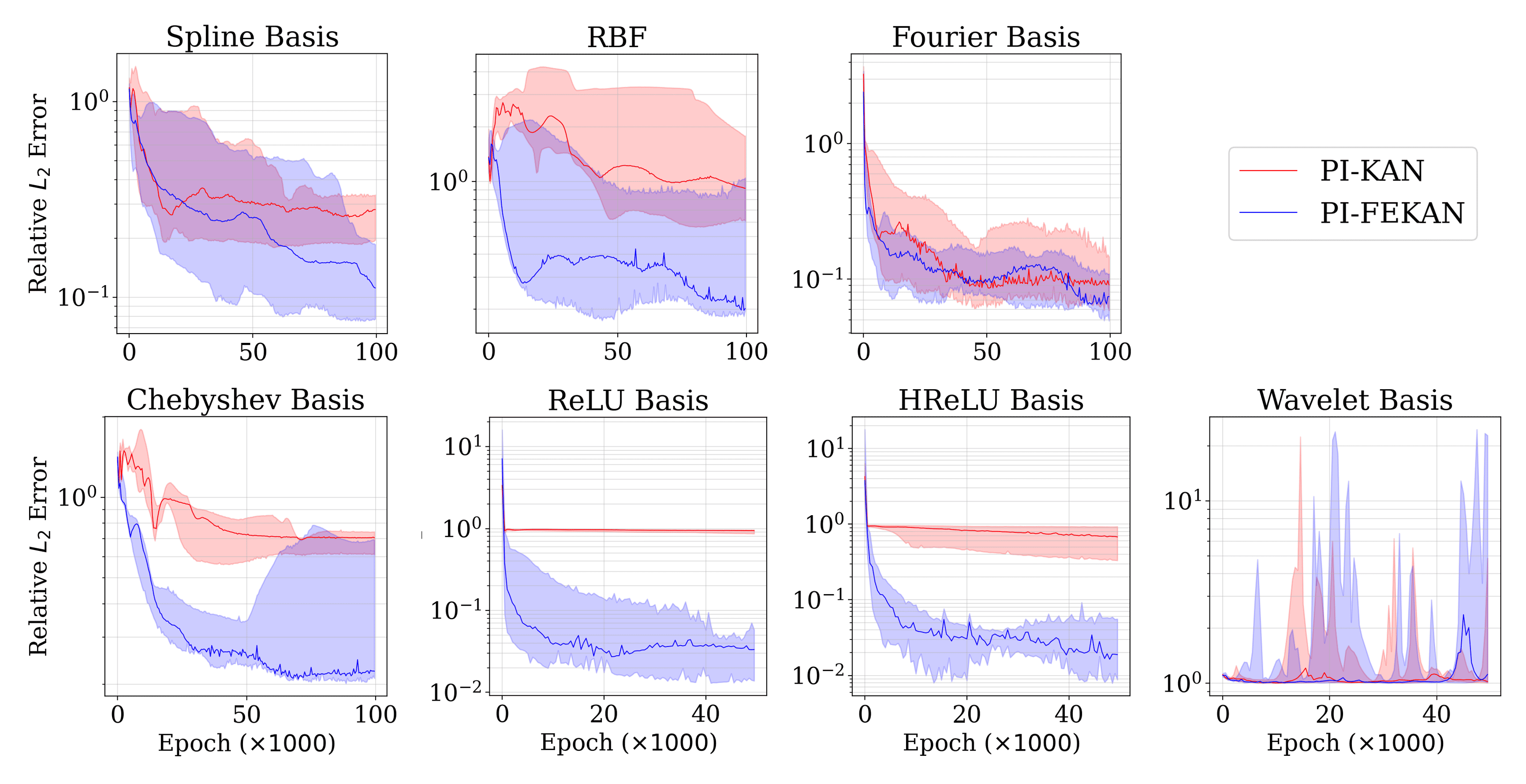}
    }
    \caption{Relative $L_2$ error for KAN across different basis functions as in Table~\ref{tab:helm_compare}, evaluated over 5 random seeds. ReLU and HReLU bases were trained for 50,000 epochs with early stopping upon convergence, while Wavelet basis was stopped at 50,000 epochs due to poor generalization of both KAN and FEKAN using DoG.}
    \label{fig:helm_compare_uq}
    \end{figure}

    \item \textcolor{blue}{\textbf{Computational Cost:}}  While FEKAN and KAN have comparable computational costs for function approximation (Table~\ref{tab:funfit_compare}), PI-FEKAN incurs higher cost than PI-KAN for certain basis functions, such as ReLU and HReLU (Table~\ref{tab:helm_compare}). At first glance, this may appear to be a drawback. However, the additional cost arises primarily from the extra operations required by the feature-enrichment layer, which performs the enrichment of the input features.

    \begin{figure}[H]
    \centering
    {\label{fig:1}
    \centering
    \includegraphics[width=0.6\linewidth, trim={0mm 0mm 0mm 0mm}, clip]{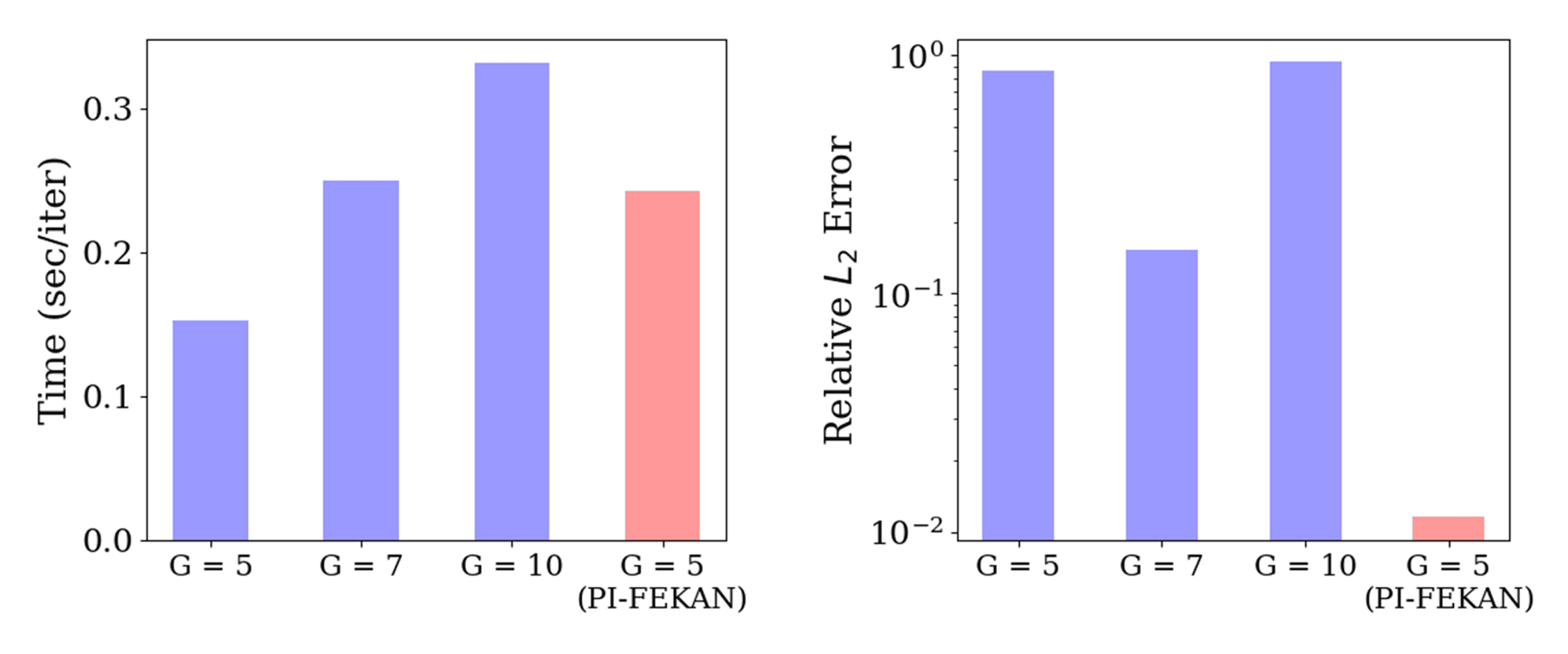}
    }
    \caption{(Left) Time per iteration for PI-KAN (blue columns) and PI-FEKAN using the ReLU basis. (Right) Relative $L_2$ error for PI-KAN (blue columns) and PI-FEKAN with ReLU basis, polynomial order $k=3$, and varying grid size $G$.}
    \label{fig:helm_relu}
    \end{figure}

    Figure~\ref{fig:helm_relu} highlights the benefits of feature enrichment via PI-FEKAN, despite its slightly higher computational cost compared to PI-KAN with the ReLU basis. Increasing the grid size $G$ from 5 to 7 for PI-KAN reduces the relative $L_2$ error but also increases computational cost. Importantly, the cost of PI-KAN at $G=7$ is comparable to that of PI-FEKAN at $G=5$, yet the relative $L_2$ error for PI-FEKAN is an order of magnitude lower. This demonstrates that the additional cost of PI-FEKAN is justified. A similar argument applies to HReLU, as it is analogous to ReLU except for its higher-order construction.

    \item \textcolor{blue}{\textbf{Stability:}}  
     Unlike in the function-approximation experiments discussed in Section~\ref{subsubsec:func_appr_others}, training with the Chebyshev basis does not produce divergence or NaNs for either PI-KAN or PI-FEKAN. Nevertheless, Figure~\ref{fig:helm_compare_uq} demonstrates that feature enrichment helps the model avoid local minima, a trend consistently observed across most basis functions. Scaled-cPIKAN~\cite{mostajeran2025scaled} was recently proposed to mitigate instabilities arising from Chebyshev polynomials by standardizing spatial inputs to the residual PDEs. However, this approach requires explicit scaling of the residual PDEs during training. By contrast, PI-FEKAN achieves stable training solely through feature enrichment, without any such scaling.

    \item \textcolor{blue}{\textbf{Interpretability:}}  
    Although interpretability is more straightforward in function approximation using KAN, it remains a key feature of PI-KAN. Each node corresponds to a dedicated basis function, making it possible to track the model’s internal representations even when solving PDEs. PI-FEKAN preserves this property, as the addition of the feature-enrichment layer does not introduce extra trainable parameters and leaves the underlying architecture unchanged.

    \item \textcolor{blue}{\textbf{Limitations of Certain Basis Functions:}}  
    While most basis functions generalize well for PDEs within the KAN architecture, certain exceptions remain. Figure~\ref{fig:helm_compare_uq} and Table~\ref{tab:helm_compare} show that both PI-KAN and PI-FEKAN struggle when using the Derivative-of-Gaussian (DoG) wavelet basis. Although DoG wavelets performed well in image-classification tasks~\cite{bozorgasl2405wav}, they exhibit limited effectiveness for PDEs despite being continuous, smooth, and infinitely differentiable. Future work could explore alternative wavelet types better suited for PDE solving.
\end{enumerate}

Tables~\ref{tab:helm_compare} (for PDEs) and~\ref{tab:funfit_compare} (for function approximation) demonstrate the consistent superiority of FEKAN over the corresponding KAN architectures across a variety of basis functions. These results highlight the clear advantages of feature enrichment in addressing several limitations of the standard KAN architecture. In the following sections, we evaluate FEKAN on different PDEs and architectures, focusing on spline and Chebyshev bases due to their established efficiency in the literature for PDE solving.

\subsubsection{Helmholtz Equation with Random Feature Enrichment} \label{subsec:pifekan_helmRFF}
Although Fourier feature enrichment improves the performance of FEKAN over KAN for PDEs such as the Helmholtz equation, it introduces a key practical challenge: the selection of frequency parameters. In the preceding sections, these frequencies were fixed prior to training. However, as the number of sinusoidal enrichment terms increases, identifying appropriate frequency values for each term becomes increasingly nontrivial. This raises a fundamental question: how many feature enrichment terms are actually necessary?
To this end, we propose sampling these frequency parameters from a sampling distribution such as the Gaussian normal distribution, $\mathcal{N}(0, \sigma^2)$ where the hyperparameter $\sigma$ could be selected with a single hyperparameter sweep as opposed choosing multiple frequency hyperparameters for each of the feature enrichment terms. If we lack prior knowledge about the different frequency components embedded in the target test function, one could simply use an isotropic Gaussian distribution.
In the section, we use random Fourier feature enrichment for FEKAN and compare its performance with FEKAN with deterministic selection of frequency terms prior to model training. Further, we fix the number of feature enrichment terms for fair comparison of the different variants of FEKAN. 

\begin{figure}[ht]
\centering
{\label{fig:1}
\centering
\includegraphics[width=0.7\linewidth]{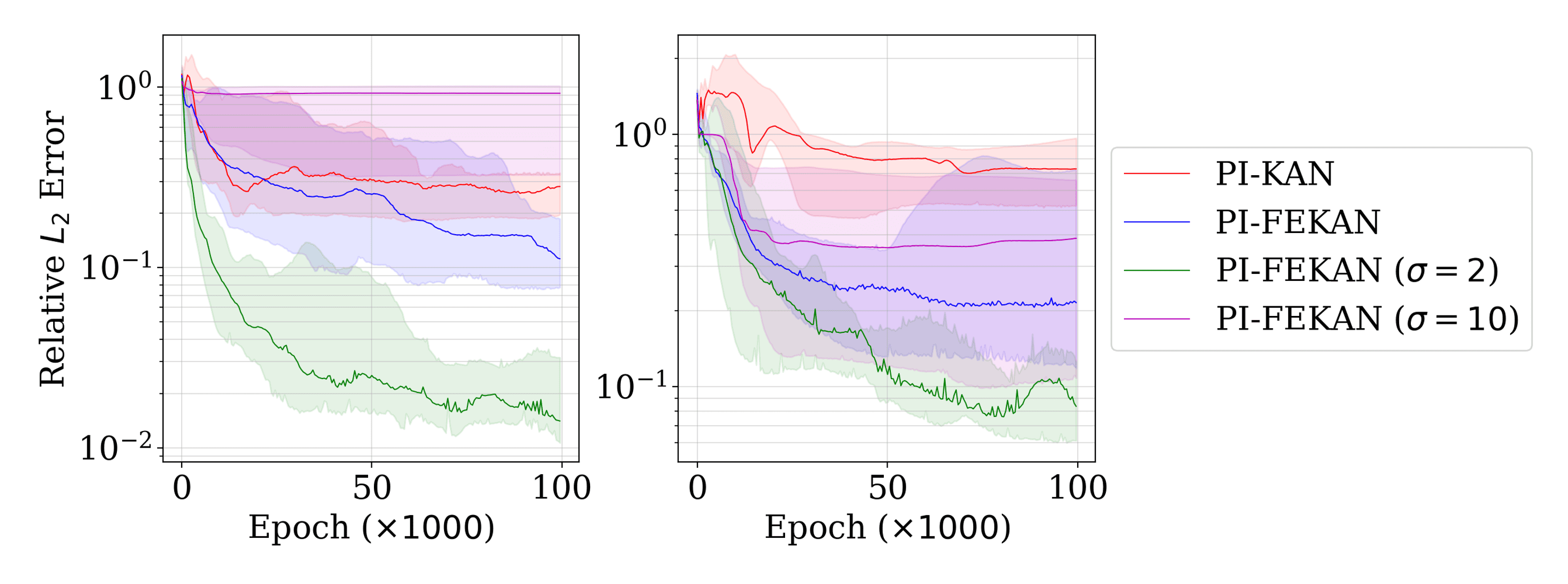}
}
\caption{Relative $L_2$ error for KAN and FEKAN for  (Left) spline and (Right) Chebyshev basis functions, evaluated over 5 random seeds. Note that all FEKAN variants have 7 feature enrichment terms per dimension of the PDE for fair comparison.}
\label{fig:helm_spline_cheby_rff}
\end{figure}

\begin{table}[h!]
\centering
\begin{tabular}{|c|c|c||c|c||}
\hline
\textbf{Architecture} & \textbf{Basis} & \textbf{Std. Dev. $(\sigma)$} & \textbf{Relative $L_2$ Error} & \textbf{Time (sec/iter)}\\
\hline\hline
PI-KAN & \multirow{4}{*}{Spline} & - & 0.26230 $\pm$ 0.10690 & 0.0977 \\ \cline{1-1} \cline{3-5}
PI-FEKAN & & - & 0.08246 $\pm$ 0.00989 & 0.0787 \\ 
\cline{1-1} \cline{3-5}
PI-FEKAN & & 2 & \textbf{0.01826 $\pm$ 0.01108} & 0.1067 \\ 
\cline{1-1} \cline{3-5}
PI-FEKAN & & 10 & 0.77561 $\pm$ 0.36992 & 0.1059 \\ 
\hline\hline
PI-KAN & \multirow{4}{*}{Chebyshev} & - & 0.70856 $\pm$ 0.19506 & 0.0360 \\ \cline{1-1} \cline{3-5}
PI-FEKAN & & - & 0.21538 $\pm$ 0.32079 & 0.0275 \\
\cline{1-1} \cline{3-5}
PI-FEKAN & & 2 & \textbf{0.08356 $\pm$ 0.03147} & 0.0393 \\
\cline{1-1} \cline{3-5}
PI-FEKAN & & 10 & 0.38736 $\pm$ 0.26151 & 0.0409 \\
\hline
\end{tabular}
\caption{\textbf{Performance on Helmholtz Equation with Random Fourier Features:} Accuracy and computational time for KAN and FEKAN using the spline basis with polynomial order $k=3$ and grid size $G=5$. The Chebyshev basis uses polynomial order $k_c=4$. The architecture has configuration $[n, 7, 7, 1]$, where $n$ is the number of input features. FEKAN is enriched with a Fourier feature map of orthogonal $\sin(ax)$ and $\cos(ax)$ pairs at varying frequencies $a$. Here, $a$ is sampled from an isotropic Gaussian distribution, $a \sim \mathcal{N}$(0,$\sigma^2$) for the stochastic case. More details on the comparison of the absolute error can be found in Appendix \ref{appx:helmholtz}.}
\label{tab:helm_spline_cheby_rff}
\end{table}

From Figure \ref{fig:helm_spline_cheby_rff}, we observe an better enhancement in the performance of PI-FEKAN while using the frequencies sampled from an isotropic Gaussian distribution ($\mathcal{N}(0, \sigma^2)$) with $\sigma = 2$. We also observe an order of magnitude reduction in the relative $L_2$ error compared to PI-KAN and $8\times$ reduction compared to PI-FEKAN with predefined (deterministic) frequencies of the feature enrichment terms based on the results in Table \ref{tab:helm_spline_cheby_rff}. This also emphasizes the advantage of sampling the frequencies from a stochastic distribution that leads to reduction in the number of hyperparameters for the feature enrichment terms to just a single parameter, $\sigma$ which can be fixed based on a single hyperparameter sweep. We also note the necessity of carefully choosing an appropriate value for $\sigma$ since any suboptimal values could adversely affect the generalization of the model as suggested by the results for $\sigma = 10$ in Table \ref{tab:helm_spline_cheby_rff}.

\subsubsection{Test Case 2: Allen-Cahn Equation}
\label{subsec:pifekan_ac}
In this section, we consider the Allen–Cahn equation, an unsteady nonlinear PDE, given by
\begin{align}
u_t - 0.0001\,u_{xx} + 5 u^3 - 5 u &= 0, \quad x \in \Omega, \; t \in \Gamma,\\
u(x,0) &= x^2 \cos(\pi x), \quad x \in \partial\Omega,
\end{align}
where the spatial domain is $\Omega = [-1,1]$ and the temporal domain is $\Gamma = [0,1]$.  

We compare the performance of PI-KAN and PI-FEKAN across different basis functions to assess the robustness of FEKAN in solving PDEs. Using the spline basis, we first investigate the sensitivity of PI-FEKAN to the number of collocation points, both for the Allen–Cahn equation and, more generally, for PDEs.
\begin{figure}[ht]
\centering
{\label{fig:1}
\centering
\includegraphics[width=0.8\linewidth]{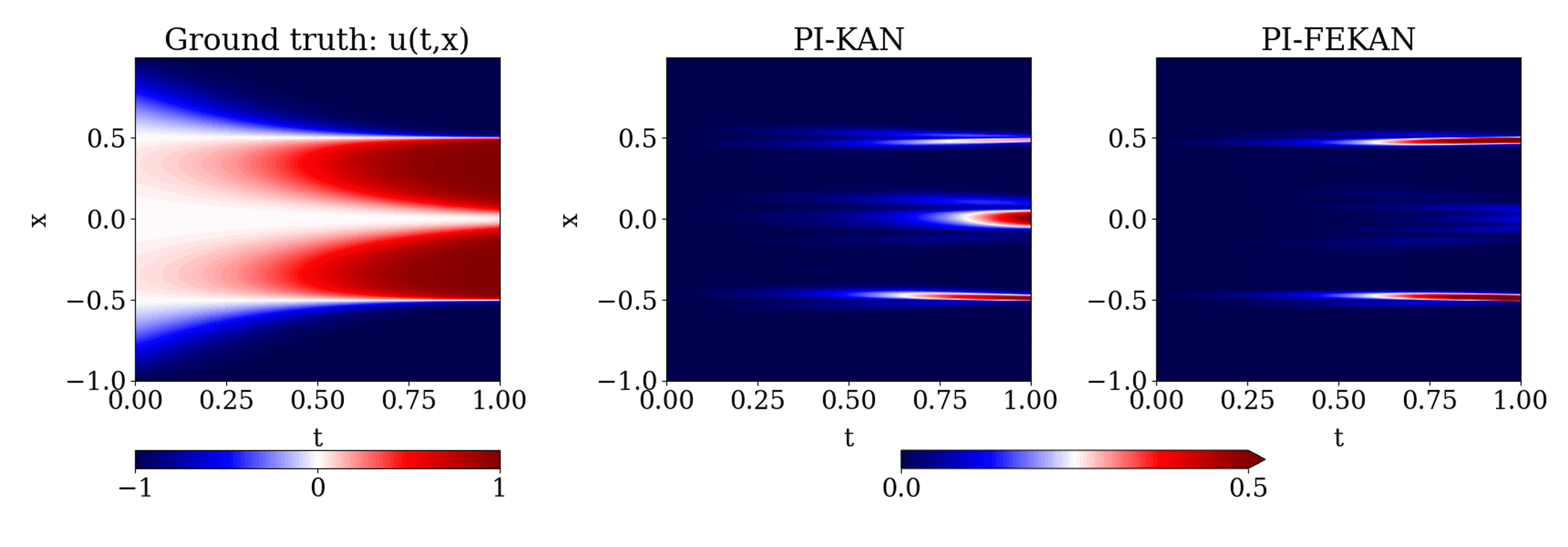}
}
\caption{(Left) Solution of the Allen-Cahn equation. (Center) Absolute error using PI-KAN, and (Right) PI-FEKAN.}
\label{fig:ac_spline_abserror}
\end{figure}
\begin{table}[h!]
\centering
\begin{tabular}{|c|c||c|c||}
\hline
\textbf{Architecture} & \textbf{$\#$ Coll.} & \textbf{Relative $L_2$ Error} & \textbf{Time (sec/iter)} \\
\hline\hline
PI-KAN & \multirow{2}{*}{6000} & 0.10593 $\pm$ 0.02699 & 0.02358 \\ \cline{1-1} \cline{3-4}
PI-FEKAN & & \textbf{0.07587 $\pm$ 0.01765} & 0.02124 \\ 
\hline\hline
PI-KAN & \multirow{2}{*}{10000} & 0.11256 $\pm$ 0.02732 & 0.04514 \\ \cline{1-1} \cline{3-4}
PI-FEKAN & & \textbf{0.06196 $\pm$ 0.00611} & 0.04978 \\ 
\hline\hline
PI-KAN & \multirow{2}{*}{15000} & 0.10434 $\pm$ 0.02991 & 0.03040 \\ \cline{1-1} \cline{3-4}
PI-FEKAN & & \textbf{0.06165 $\pm$ 0.01132} & 0.05586 \\ 
\hline
\end{tabular}
\caption{\textbf{Performance on Allen-Cahn Equation:} Accuracy and computational time for PI-KAN and PI-FEKAN using the spline basis with polynomial order $k=3$ and grid size $G=6$. The architecture has configuration $[n, 7, 7, 1]$, where $n$ is the number of input features. PI-FEKAN is enriched with a Fourier feature map of orthogonal $\sin(ax)$ and $\cos(ax)$ pairs at varying frequencies $a$.}
\label{tab:2dac_spline}
\end{table}
We examine the effect of increasing the number of collocation points from 6,000 to 15,000 on the performance of PI-KAN and PI-FEKAN. Table~\ref{tab:2dac_spline} shows that the relative $L_2$ error of PI-FEKAN consistently decreases with more collocation points, whereas for PI-KAN, the error remains nearly constant across all tested values. The comparison of the absolute error for PI-KAN and PI-FEKAN is shown in Figure \ref{fig:ac_spline_abserror}.
 

\begin{figure}[H]
\centering
{\label{fig:1}
\centering
\includegraphics[width=0.9\linewidth]{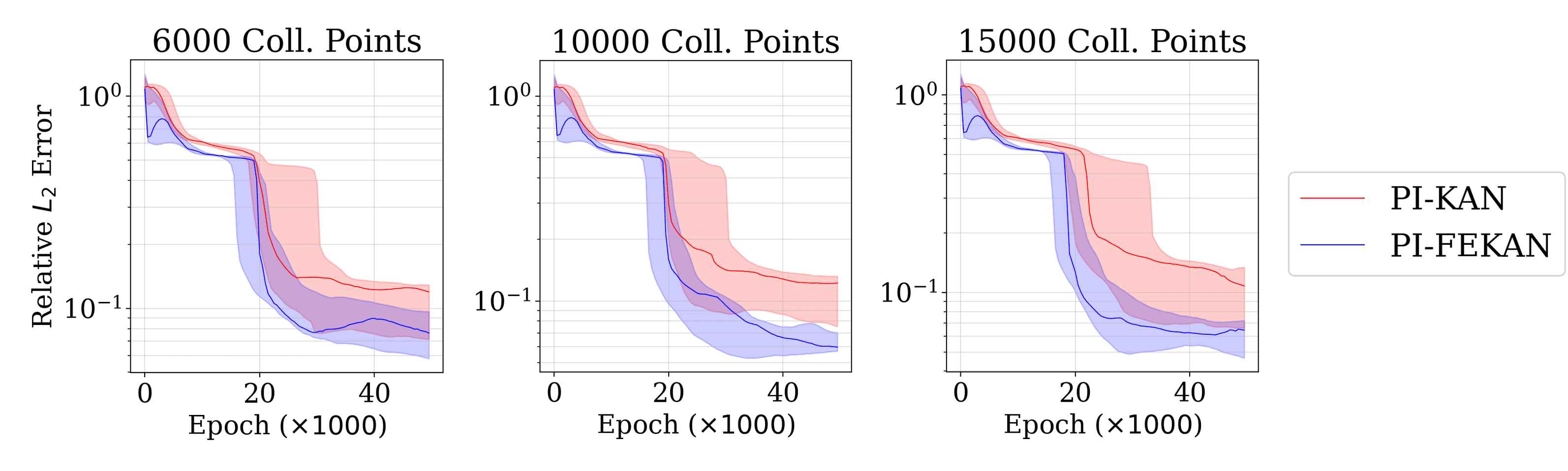}
}
\caption{Convergence for the Allen-Cahn equation with and without feature enrichment using the spline basis. The number of collocation points is varied to evaluate the consistency of performance improvement of PI-FEKAN over PI-KAN.
}
\label{fig:ac_converge_spline}
\end{figure}

Regarding computational cost, training times for 6,000, and 10,000 collocation points are comparable between PI-KAN and PI-FEKAN. However, an unexpected increase in training time is observed for PI-FEKAN at 15,000 points, which we tentatively attribute to hardware limitations. In terms of convergence, Figure~\ref{fig:ac_converge_spline} shows that PI-FEKAN consistently converges faster than PI-KAN across all collocation-point settings. While PI-KAN often becomes trapped in local minima, PI-FEKAN steadily reduces the relative $L_2$ error. This indicates that PI-FEKAN exhibits sample efficiency, achieving lower errors, and hence higher accuracy, than PI-KAN with 50–60\% fewer collocation points solely through feature enrichment.



\subsubsection{Test Case 3: Phase-Wise Data Introduction}
\label{subsec:pifekan_forgetfree}
Catastrophic forgetting is a well-known limitation of MLP architectures, arising from their globally supported activation functions that induce widespread parameter updates across tasks. In contrast, KANs mitigate this phenomenon through locally supported activation functions constructed from one-dimensional spline bases. The locality of these spline activations constrains parameter updates to restricted regions of the input domain, thereby promoting retention of previously acquired knowledge when learning new tasks. Although Liu et al.~\cite{liu2024kan} empirically demonstrated reduced forgetting in KANs for a simple regression setting, a rigorous theoretical explanation of the underlying \textit{forget-free} mechanism was not fully established. Addressing this gap, Rahman et al.~\cite{rahman2025catastrophic} developed a theoretical framework that characterizes catastrophic forgetting in KANs through two key factors: activation support overlap and the intrinsic dimensionality of the data manifold associated with each task. From a geometric perspective, if the local support regions corresponding to different tasks do not overlap, parameter interference is eliminated, resulting in perfect knowledge retention. Conversely, increasing overlap between support regions induces proportional interference and, consequently, forgetting. The intrinsic dimensionality of the task manifold further governs the degree of overlap. High-dimensional task manifolds tend to produce greater overlap among local activation supports, thereby increasing susceptibility to catastrophic forgetting. In contrast, low-dimensional manifolds yield exponentially diminishing overlap, facilitating stable long-term retention. Importantly, in high-dimensional settings, catastrophic forgetting can be mitigated by increasing support fragmentation. Within the spline-based formulation of KANs, this corresponds to refining the grid size~$G$, which reduces effective overlap between tasks and promotes improved retention across sequential learning scenarios.

In this section, we consider a boundary value problem (BVP) to demonstrate that FEKAN preserves the forget-free property and, moreover, enhances stable retention of information relative to the standard KAN architecture in the context of a BVP. To systematically evaluate retention behavior, we introduce the boundary data in a phase-wise manner across four distinct stages of training, as illustrated in Figure~\ref{fig:forget_free}. This staged training procedure enables a controlled assessment of knowledge preservation as new boundary conditions are sequentially incorporated.

\begin{figure}[H]
\centering
{\label{fig:1}
\centering
\includegraphics[width=0.9\linewidth]{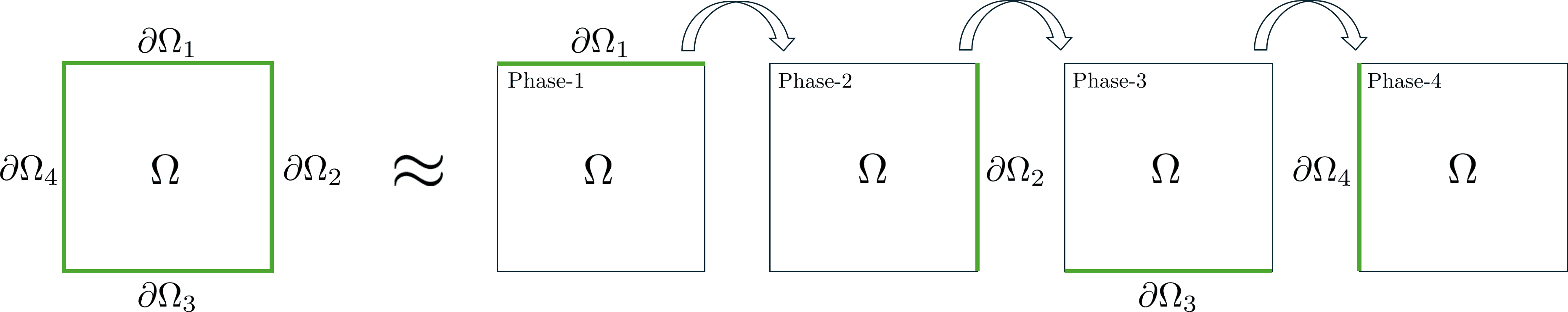}
}
\caption{The boundary value problem on domain $\Omega$ is solved in four independent phases, with boundary data fed in batches $\partial\Omega_i$ for each Phase-$i$.
}
\label{fig:forget_free}
\end{figure}
\begin{figure}[H]
\centering
{\label{fig:1}
\centering
\includegraphics[width=\linewidth, trim={0 3mm 0 0}, clip]{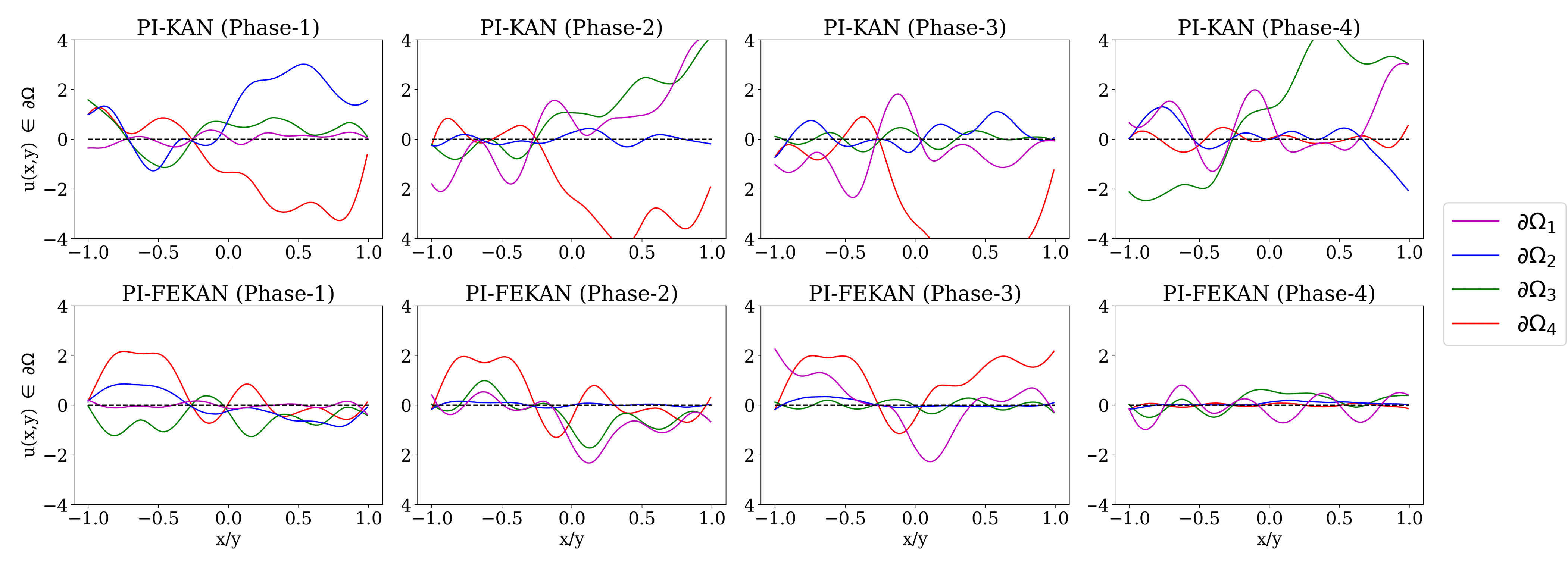}
}
\caption{Accuracy of (Top) PI-KAN and (Bottom) PI-FEKAN in satisfying the boundary conditions ($\partial\Omega_i = 0$) for the Helmholtz equation at different training phases using the spline basis with $G=3$. The exact boundary conditions are shown in black dashed lines.}
\label{fig:forget_free_boundary_G3}
\end{figure}
\begin{figure}[H]
\centering
{\label{fig:1}
\centering
\includegraphics[width=\linewidth, trim={0 3mm 0 0}, clip]{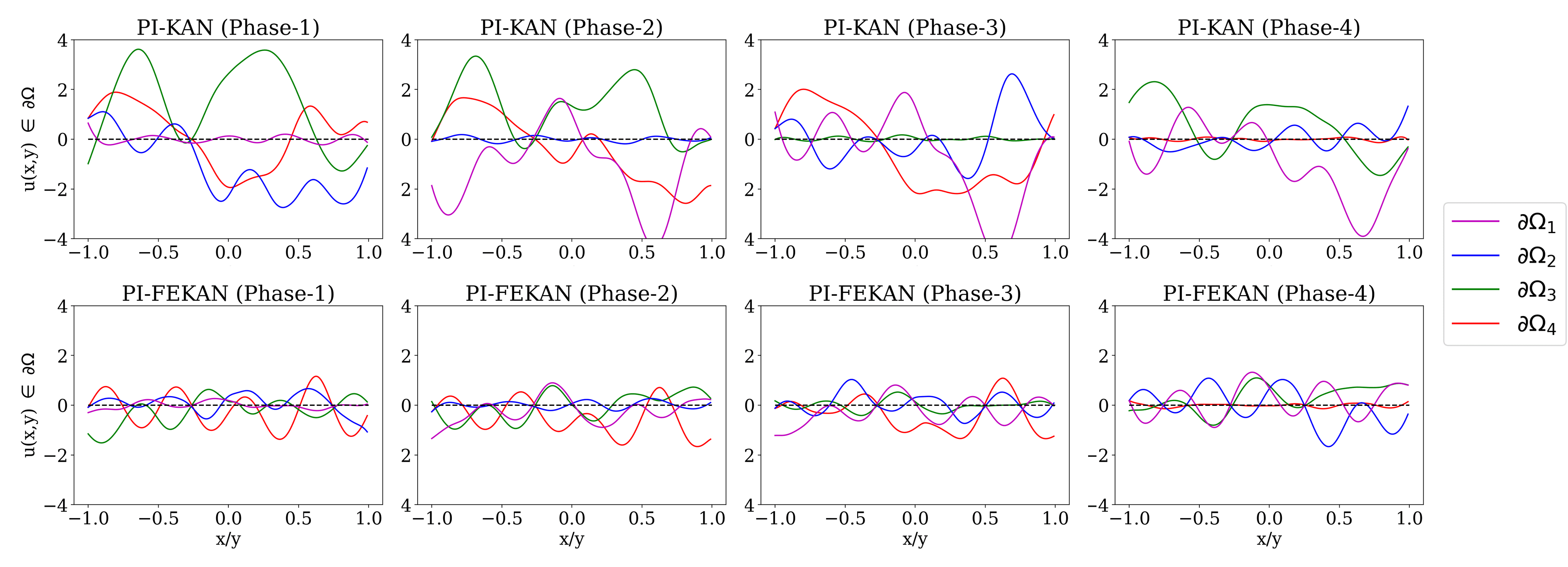}
}
\caption{Accuracy of (Top) PI-KAN and (Bottom) PI-FEKAN in satisfying the boundary conditions ($\partial\Omega_i = 0$) for the Helmholtz equation at different training phases using the spline basis with $G=6$. The exact boundary conditions are shown in black dashed lines.}
\label{fig:forget_free_boundary_G6}
\end{figure}
At first glance at Figure~\ref{fig:forget_free}, it may appear that the model is solving an ill-posed problem at each phase. For example, during Phase~1, when only $\partial\Omega_1$ is activated, the network attempts to solve the governing PDE as a boundary value problem constrained solely by data from $\partial\Omega_1$. Such a formulation is, in principle, ill-posed due to insufficient boundary information. To ensure that each training phase remains well-posed, albeit minimally constrained, we introduce a single boundary data point from each of the inactive boundaries. Specifically, in Phase~1, in addition to the full boundary data on $\partial\Omega_1$, the model receives one data point from each of $\partial\Omega_2$, $\partial\Omega_3$, and $\partial\Omega_4$. This strategy provides a minimal anchoring of the solution while preserving the sequential, phase-wise structure of the training process. Under this setup, both PI-KAN and PI-FEKAN are trained to solve the Helmholtz equation across independent phases of boundary data introduction. This experimental design offers insight into the manifestation of catastrophic forgetting in KAN-based architectures within the context of PDE-constrained learning. Furthermore, it enables an assessment of whether feature enrichment in FEKAN enhances information retention across sequential tasks, ideally resulting in minimal or negligible forgetting between phases.

Figures~\ref{fig:forget_free_boundary_G3} and \ref{fig:forget_free_boundary_G6} illustrate the boundary condition accuracy achieved by the models when employing spline bases with grid sizes $G=3$ and $G=6$, respectively. The final solution of the BVP at Phase~4 is presented in Figure~\ref{fig:forget_free_domain_phase4}. 
An important observation is that PI-FEKAN maintains proper boundedness of the solution along the boundaries throughout all training phases. In contrast, PI-KAN exhibits significant fluctuations in boundary values, often growing unbounded, as evident in Figures~\ref{fig:forget_free_boundary_G3} and \ref{fig:forget_free_boundary_G6}. This instability is further reflected in the Phase~4 domain solution shown in Figure~\ref{fig:forget_free_domain_phase4}, where deviations from the expected behavior are apparent for PI-KAN.
Moreover, PI-KAN fails to consistently retain boundary information across sequential phases. For example, the boundary condition on $\partial\Omega_1$, learned during Phase~1, is not preserved in Phase~2, as seen in the top rows of Figures~\ref{fig:forget_free_boundary_G3}–\ref{fig:forget_free_boundary_G6}. By contrast, PI-FEKAN demonstrates stable and well-bounded retention of boundary conditions across all phases, irrespective of the spline grid size. In particular, for $G=3$ at Phase~4 (bottom row of Figure~\ref{fig:forget_free_boundary_G3}), PI-FEKAN achieves near-perfect retention of previously learned boundary data, resulting in strong agreement between the predicted solution and the ground truth after exposure to all boundary conditions.

The results in this section demonstrate that feature enrichment positively influences the suppression of catastrophic forgetting in KAN-based architectures for boundary value problems. FEKAN, in particular, exhibits improved stability and retention across training phases. Moreover, the choice of grid size plays a crucial role, as an optimal spline resolution can significantly reduce forgetting and promote near-perfect knowledge retention.

\begin{figure}[H]
\centering
{\label{fig:1}
\centering
\includegraphics[width=0.7\linewidth]{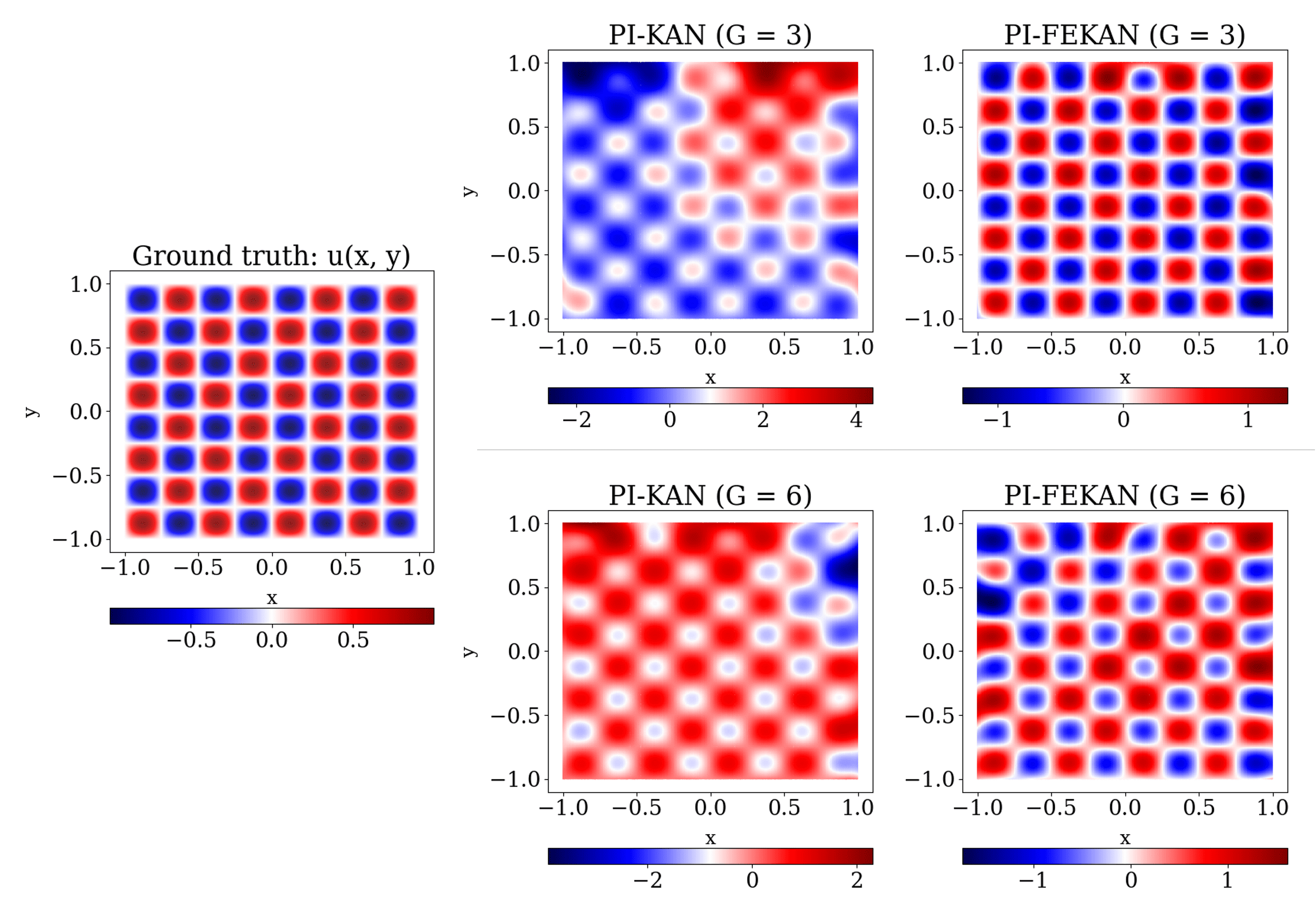}
}
\caption{Solution of the Helmholtz equation for PI-KAN and PI-FEKAN using the spline basis: (Top) $G = 3$ and (Bottom) $G = 6$ at Phase-4 of boundary data introduction. Phase-wise training plots are provided in Appendix~\ref{appx:helmholtz_forget}.}
\label{fig:forget_free_domain_phase4}
\end{figure}

\subsubsection{Test Case 4: Lorenz Equation}
\label{subsec:pifekan_lorenz}
Similar to the discussion in Section~\ref{subsec:dynamical}, we consider the Lorenz system (Equation~\ref{eq:lorenz1}-\ref{eq:lorenz3}) to demonstrate the effectiveness of the proposed approach in solving the governing equations of a dynamical system. The Lorenz equations are well known for their extreme sensitivity to initial conditions, where even small inaccuracies can accumulate over time and lead to significant divergence in the predicted trajectories.
It is important to distinguish the present setup from that in Section~\ref{subsec:dynamical}. There, the model was trained to approximate the time integration of the Lorenz system. In contrast, here the network directly learns to solve the coupled system of ordinary differential equations by mapping time to the state variables, as given by
\begin{equation}
    \{t\} \xrightarrow{u_\theta} \{x(t), y(t), z(t)\}.
\end{equation}
While enforcing the initial conditions exactly can promote stable training, in this study the initial condition is incorporated as a soft constraint within the loss function, allowing the model to balance data fidelity and dynamical consistency during optimization.
 
\begin{figure}[ht]
\centering
{
\centering
\includegraphics[width=0.9\linewidth, trim=0mm 6mm 0mm 5mm, clip]{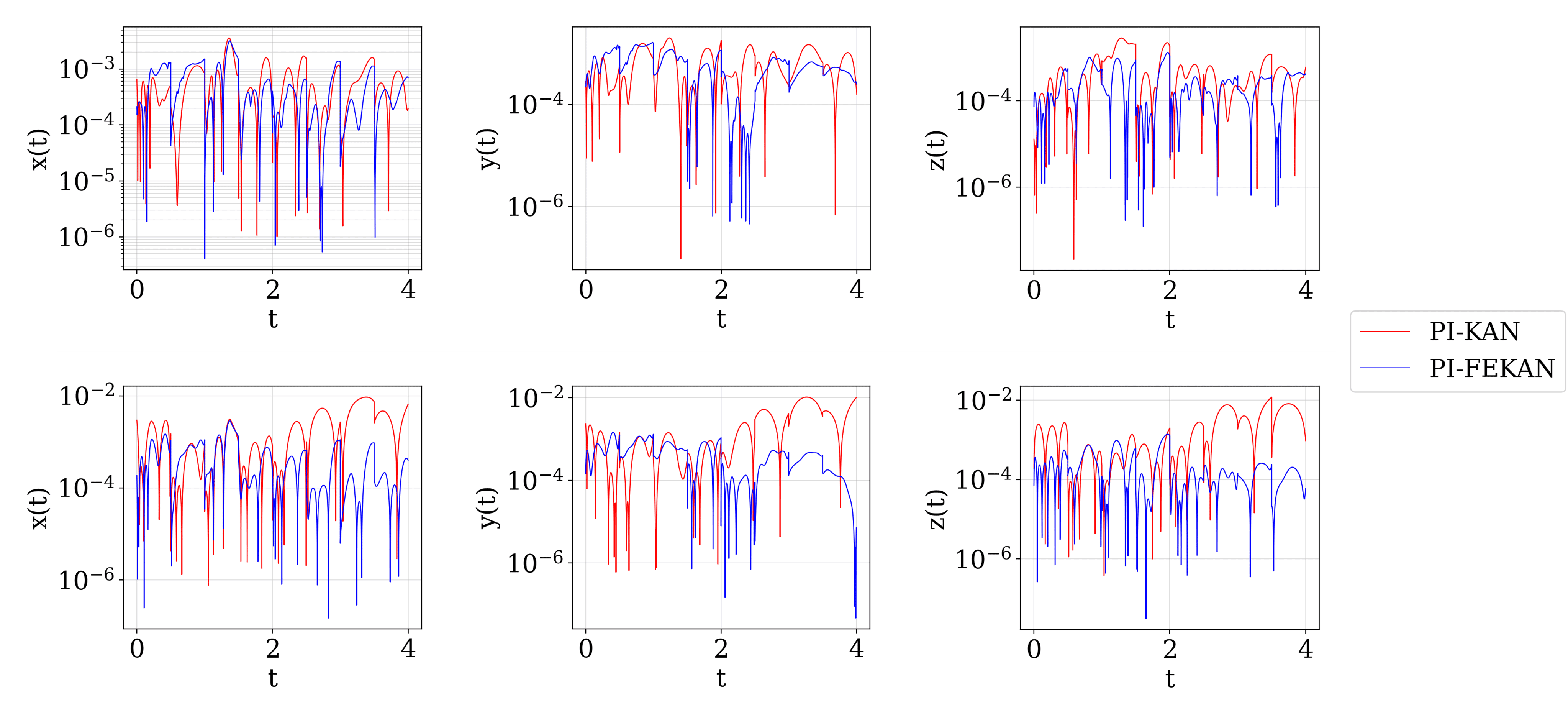}
}
\caption{Absolute error for the states $(x, y, z)$ in the Lorenz equation using PI-KAN and PI-FEKAN: (Top) Spline basis, (Bottom) Chebyshev basis.}
\label{fig:pi-lorenz_spline_cheby}
\end{figure}
\begin{figure}[ht]
\centering
{
\centering
\includegraphics[width=0.8\linewidth, trim=0mm 5mm 0mm 5mm, clip]{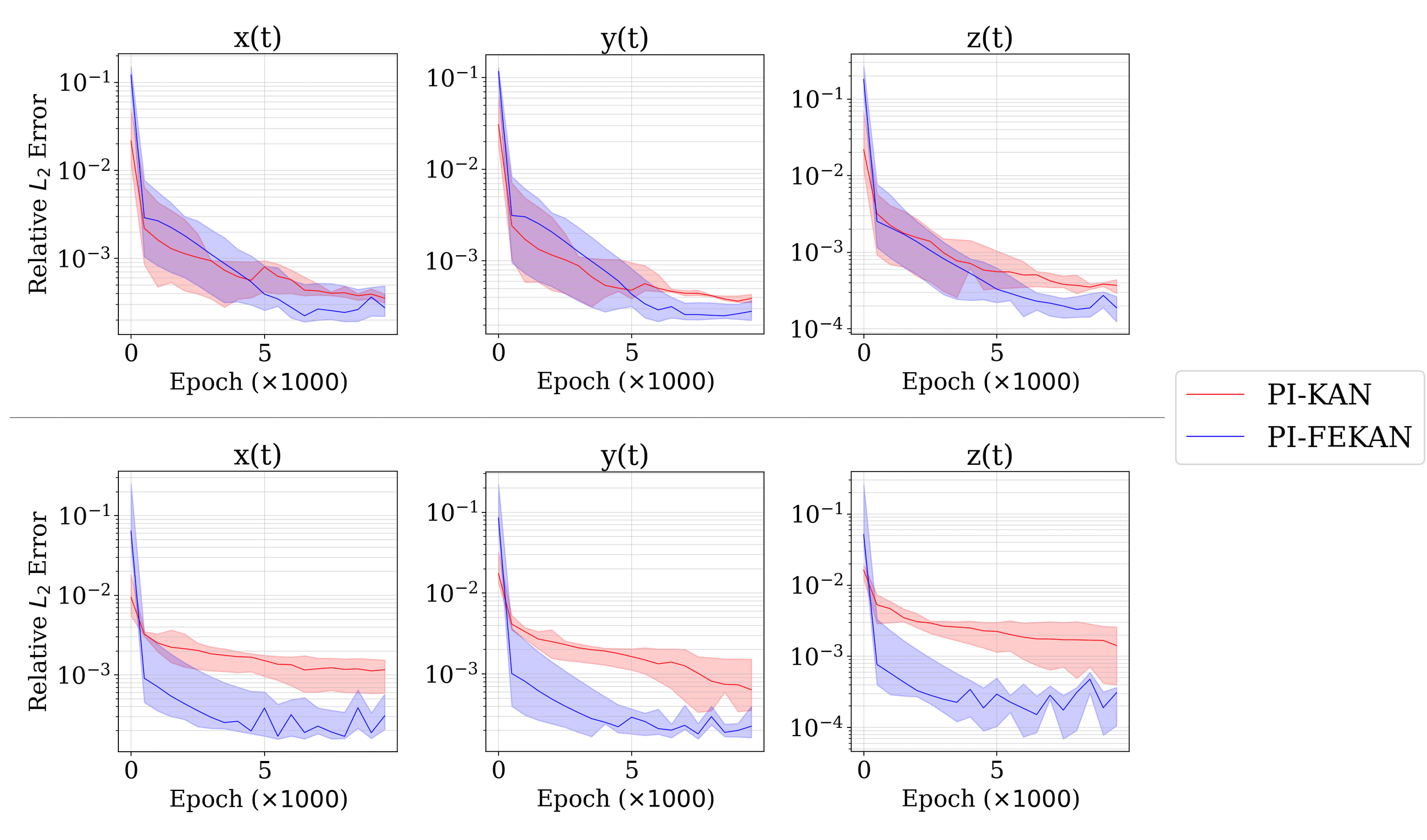}
}
\caption{Relative $L_2$ error for the states $(x, y, z)$ in the Lorenz equation using PI-KAN and PI-FEKAN: (Top) Spline basis, (Bottom) Chebyshev basis.}
\label{fig:pi-lorenz_spline_cheby_rl2}
\end{figure}

In this study, we consider the Lorenz system given by Equation~\ref{eq:lorenz1}-\ref{eq:lorenz3} subject to the initial condition $(x(0), y(0), z(0)) = (1, 1, 1)$. The temporal domain is defined as $t \in [0, 4]$, which is partitioned into uniform subdomains of the form $[0, \Delta t]$ with $\Delta t = 0.5$.
The Lorenz system constitutes a coupled set of nonlinear ordinary differential equations (Equation~\ref{eq:lorenz1}-\ref{eq:lorenz3}), which are incorporated into the training process as soft constraints for both PI-KAN and PI-FEKAN. In this setting, we systematically compare the performance of PI-KAN and PI-FEKAN using spline and Chebyshev basis functions to assess their respective capabilities in capturing the underlying chaotic dynamics.

While PI-FEKAN achieves modest performance enhancement using spline basis, we observe an order of magnitude improvement in the relative $L_2$ error compared to PI-KAN using Chebyshev basis as shown in Figure \ref{fig:pi-lorenz_spline_cheby_rl2}. This trend is consistently observed for all the states ($x(t), y(t), z(t)$) in the Lorenz equation. The improvement in the relative $L_2$ error also translates to the absolute error in Figure \ref{fig:pi-lorenz_spline_cheby} which also suggests an order of magnitude reduction using Chebyshev basis while the performance enhancement is negligible using the spline basis. We also note that the trend observed for spline basis may change for different values of the internal parameters (like the grid size ($G$)) and we leave this for future investigations.

\subsubsection{Test Case 5: Separable Physics-Informed FEKAN (SPI-FEKAN)} 
\label{subsec:spifekan_kg_helm}
In this section, we extend the FEKAN to the three-dimensional SPI-KAN architecture~\cite{jacob2025spikans} for solving steady and unsteady nonlinear PDEs. Recently, their extension to high-dimensional PDEs has been proposed in \cite{menon2025anant}. SPI-KAN is a separable formulation designed to achieve $\mathcal{O}(Nd)$ computational complexity for a $d$-dimensional problem, thereby offering improved scalability in higher-dimensional settings.
The SPI-KAN approximation is expressed as
\begin{equation}
    \hat{u}(x,t) = \sum_{j=1}^{r} \prod_{i=1}^{n} f_{j}^{\theta_i}(x_i),
\end{equation}
\noindent
where $r$ denotes the embedding dimension associated with each body network, $n$ represents the number of input dimensions (corresponding to the number of body networks), and $f_{j}^{\theta_i}$ is parameterized by $\theta_i$ for the $i^{\text{th}}$ body network, which takes $x_i$ as its input. 
Previous studies on SPI-KAN have been limited to spline basis functions. In the present work, we additionally investigate the use of Chebyshev basis functions in order to examine potential training instabilities, as reported in the context of cPIKAN~\cite{shukla2024comprehensive}. In the event of such instabilities, we evaluate the effectiveness of the proposed FEKAN strategy in stabilizing training with Chebyshev bases across different classes of PDEs.
Although Section~\ref{subsec:pifekan_helm} demonstrated that FEKAN enhances training stability for the PI-KAN architecture when employing Chebyshev bases, the present analysis extends this investigation to a distinct separable architecture. This broader evaluation serves to establish that the proposed method is not architecture-specific, but instead provides a general mechanism for improving stability and robustness.

Firstly, we consider the three-dimensional Helmholtz equation as a test case to demonstrate the performance gains achieved by SPI-FEKAN over SPI-KAN. Although the Helmholtz equation was solved in two dimensions in Section \ref{subsec:pifekan_helm}, here we exploit the computational efficiency of a separable architecture to extend the solution to all three spatial dimensions. The equation is given by
\begin{align}
    \Delta u + k^2 u & = q, \quad x \in \Omega,\\ 
    u(x) & = 0, \quad x \in \partial\Omega,
\end{align}
where $\Omega = [-1, 1]^3$ and the parameters are set to $k = 1$, $a_1 = a_2 = a_3 = 6$. The accuracy of the separable architecture depends on key hyperparameters, including the number of body networks and the size of the embedding layer, in addition to standard regularization parameters. Further details on the hyperparameter settings and the training procedure are provided in Appendix \ref{appx:spi-fekan}.
\begin{figure}[ht]
\centering
{\label{fig:1}
\centering
\includegraphics[width=1\linewidth, trim=15mm 15mm 15mm 15mm, clip]{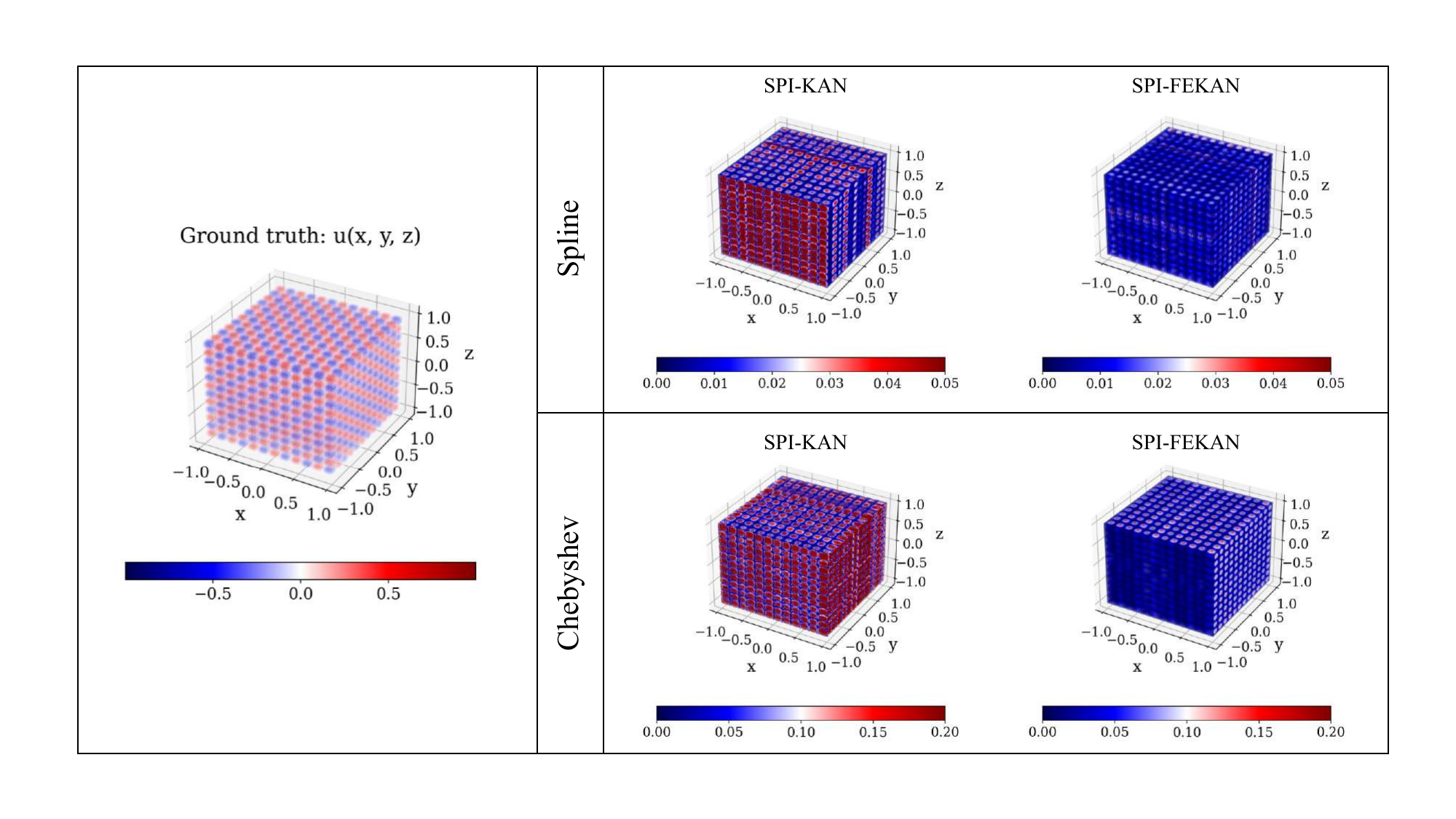}
}
\caption{(Left) 3D Helmholtz equation with $a_1=6$, $a_2=6$, $a_3=6$. (Right: Top) Absolute error using the spline basis, and (Right: Bottom) Chebyshev basis with and without feature enrichment.}
\label{fig:3dhelm_spline_cheby_abserror}
\end{figure}
Figure \ref{fig:3dhelm_spline_cheby_abserror} compares the absolute errors of SPI-KAN and SPI-FEKAN using spline and Chebyshev basis functions. The results clearly demonstrate that feature enrichment via SPI-FEKAN reduces the relative $L_2$ error by an order of magnitude compared to SPI-KAN, while maintaining a comparable training time, as summarized in Table \ref{tab:3dhelm_spline_cheby}.

\begin{table}[h!]
\centering
\begin{tabular}{|c|c||c|c||}
\hline
\textbf{Architecture} & \textbf{Basis} & \textbf{Relative $L_2$ Error} & \textbf{Time (sec/iter)}\\
\hline\hline
SPI-KAN & \multirow{2}{*}{Spline} & 0.19133 $\pm$ 0.14544 & 0.01556 \\ \cline{1-1} \cline{3-4}
SPI-FEKAN & & \textbf{0.06078 $\pm$ 0.01636} & 0.0159 \\ 
\hline\hline
SPI-KAN & \multirow{2}{*}{Chebyshev} & 0.58661 $\pm$ 0.21128 & 0.01074 \\ \cline{1-1} \cline{3-4}
SPI-FEKAN & & \textbf{0.07231 $\pm$ 0.02187} & 0.01338 \\ 
\hline
\end{tabular}
\caption{\textbf{Performance on 3D Helmholtz Equation:} Accuracy and computational time for KAN and FEKAN using the spline basis with polynomial order $k=3$ and grid size $G=3$. The Chebyshev basis uses polynomial order $k_c=4$. The architecture has configuration $[n, 5, 1]$, where $n$ is the number of input features. FEKAN is enriched with a Fourier feature map of orthogonal $\sin(ax)$ and $\cos(ax)$ pairs at varying frequencies $a$.}
\label{tab:3dhelm_spline_cheby}
\end{table}
SPI-FEKAN converges faster than SPI-KAN for both spline and Chebyshev bases. Training instabilities observed with Chebyshev functions (Fig.~\ref{fig:3dhelm_converge}) are fully suppressed by feature enrichment, yielding stable training and an order-of-magnitude reduction in relative $L_2$ error (Table~\ref{tab:3dhelm_spline_cheby}, 10 random trials). These results demonstrate that accuracy limits of separable architectures can be extended without additional cost, and that SPI-FEKAN’s performance gains are consistent across dimensionalities and basis choices, confirming its robustness within the KAN framework.

\begin{figure}[H]
\centering
{\label{fig:1}
\centering
\includegraphics[width=0.8\linewidth]{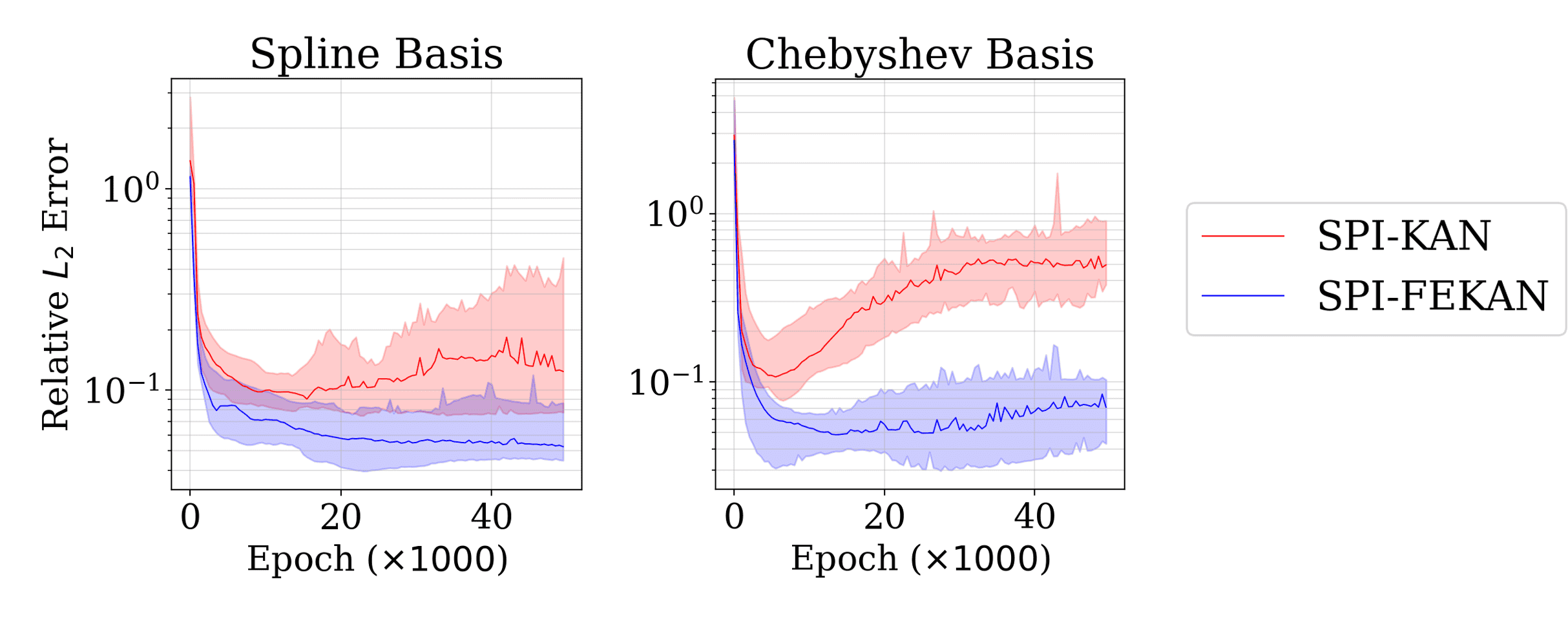}
}
\caption{Relative $L_2$ error for the 3D Helmholtz equation using KAN and FEKAN with spline and Chebyshev basis functions, averaged over 10 random seeds.}
\label{fig:3dhelm_converge}
\end{figure}

\noindent
While PI-FEKAN outperformed PI-KAN for both spline and Chebyshev bases in the steady-state Helmholtz equation, we further evaluated it on a nonlinear, time-dependent PDE. Specifically, we consider the Klein-Gordon equation, a nonlinear hyperbolic PDE modeling wave propagation:  
\begin{align}
    & \partial_{tt}u - \Delta u + u^2 = f, \quad x\in \Omega,~t\in\Gamma,\\ 
    & u(x, 0) = x_1 + x_2, \quad x\in \Omega,\\
    & u(x, t) = u_{bc}(x), \quad x\in \partial\Omega,~t\in\Gamma,
\end{align}
where $\Omega = [-1,1]^2$ and $t \in [0, 10]$. Training details are provided in Appendix~\ref{appx:spi-fekan}.

\begin{table}[H]
\centering
\begin{tabular}{|c|c||c|c||}
\hline
\textbf{Architecture} & \textbf{Basis} & \textbf{Relative $L_2$ Error} & \textbf{Time (sec/iter)}\\
\hline\hline
SPI-KAN & \multirow{2}{*}{Spline} & 0.15162 $\pm$ 0.10661 & 0.01312 \\ \cline{1-1} \cline{3-4}
SPI-FEKAN & & \textbf{0.00231 $\pm$ 0.00133} & 0.01332 \\ 
\hline\hline
SPI-KAN & \multirow{2}{*}{Chebyshev} & NaN & 0.0106 \\ \cline{1-1} \cline{3-4}
SPI-FEKAN & & \textbf{0.00401 $\pm$ 0.00050} & 0.01026 \\ 
\hline
\end{tabular}
\caption{\textbf{Performance on Klein-Gordon Equation:} Accuracy and computational time for KAN and FEKAN using the spline basis with polynomial order $k=3$ and grid size $G=3$. The Chebyshev basis uses polynomial order $k_c=4$. The architecture has configuration $[n, 5, 1]$, where $n$ is the number of input features. FEKAN is enriched with a Fourier feature map of orthogonal $\sin(ax)$ and $\cos(ax)$ pairs at varying frequencies $a$.}
\label{tab:3dkg_spline_cheby}
\end{table}

\begin{figure}[ht]
\centering
{\label{fig:1}
\centering
\includegraphics[width=1\linewidth, trim=15mm 15mm 15mm 15mm, clip]{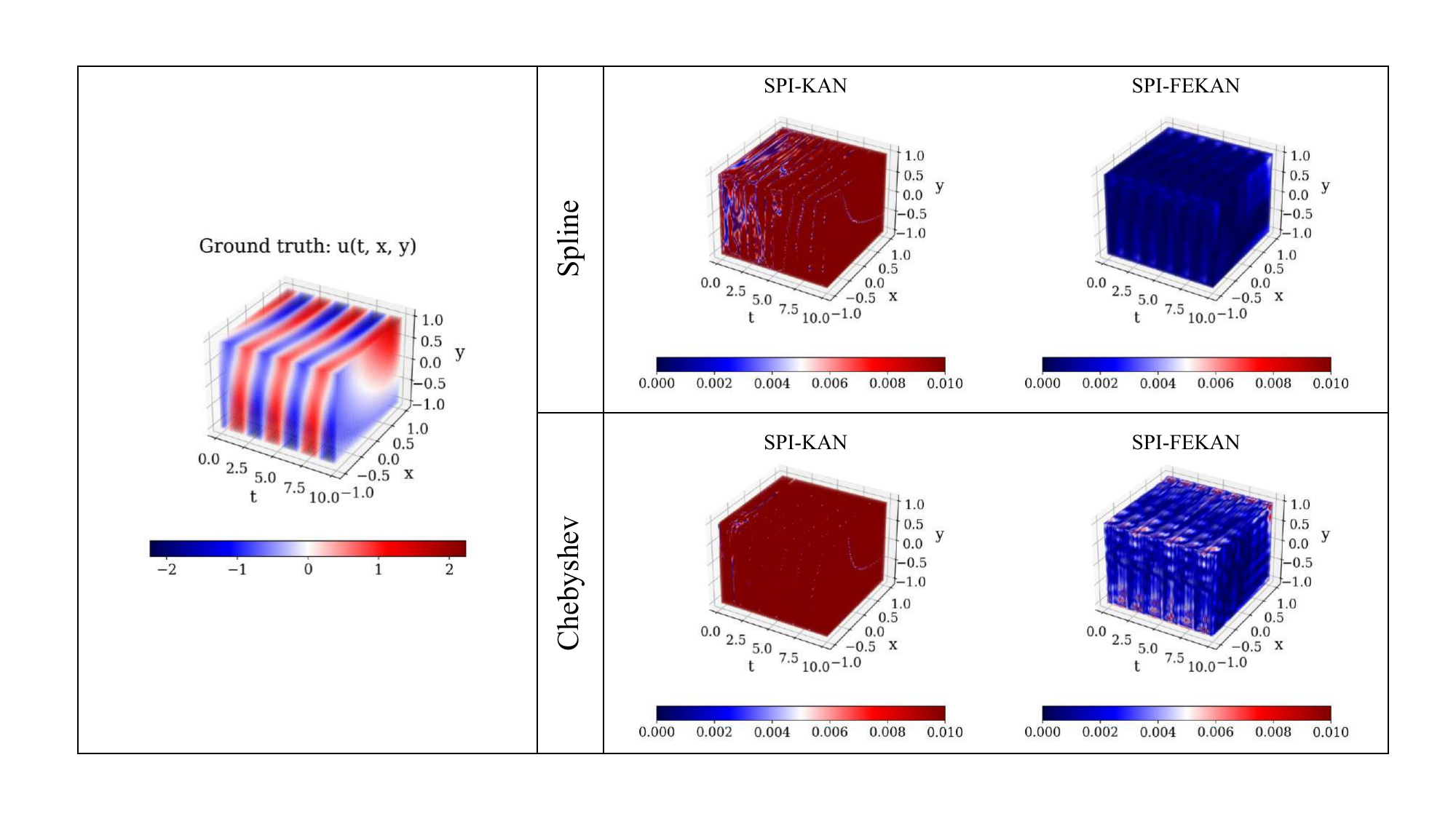}
}
\caption{(Left) Klein-Gordon equation. (Right: Top) Absolute error using the spline basis, and (Right: Bottom) Chebyshev basis with and without feature enrichment.}
\label{fig:3dkg_spline_cheby_abserror}
\end{figure}

\begin{figure}[H]
\centering
{\label{fig:1}
\centering
\includegraphics[width=0.8\linewidth]{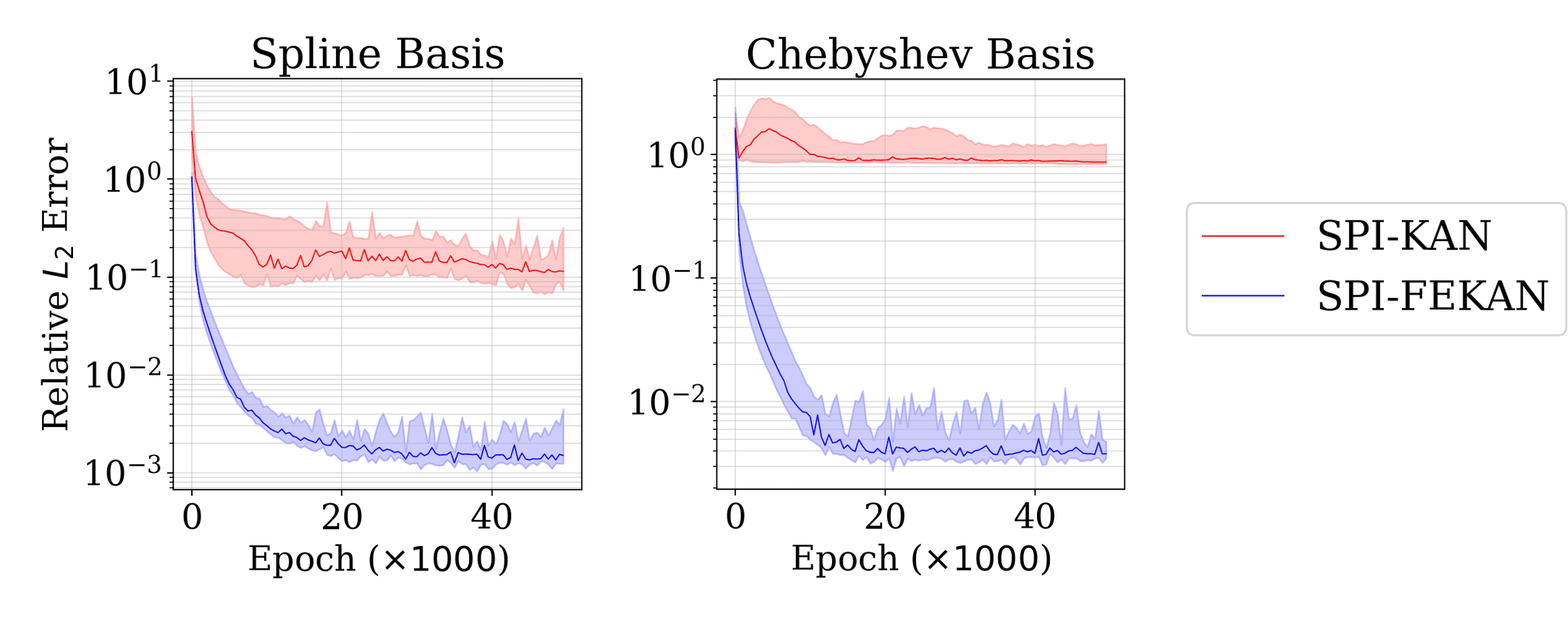}
}
\caption{Relative $L_2$ error for the Klein-Gordon equation using KAN and FEKAN with spline and Chebyshev basis functions, averaged over 10 random seeds.
}
\label{fig:3dkg_converge}
\end{figure}

Solving the Klein-Gordon equation reveals a performance trend similar to that for the Helmholtz equation, with improvements more clearly seen in Fig.~\ref{fig:3dkg_spline_cheby_abserror} and Table~\ref{tab:3dkg_spline_cheby}. Using the spline basis with SPI-FEKAN reduces the relative $L_2$ error by two orders of magnitude compared to SPI-KAN of equivalent capacity. With the Chebyshev basis, SPI-FEKAN not only achieves orders-of-magnitude lower $L_2$ error but also stabilizes training. In both bases, SPI-FEKAN enhances performance without additional overhead, requiring nearly the same training time as SPI-KAN.

Across the PDEs considered, SPI-FEKAN achieves orders-of-magnitude improvements in test accuracy (Tables~\ref{tab:3dhelm_spline_cheby} and \ref{tab:3dkg_spline_cheby}), with consistent gains across basis types. While SPI-KAN is stable with spline bases, training with Chebyshev bases often diverges (NaN), whereas SPI-FEKAN overcomes this instability and converges faster (Figs.~\ref{fig:3dhelm_converge}, \ref{fig:3dkg_converge}). For the Klein-Gordon equation, time is treated as an additional feature dimension during enrichment, though time-marching problems may omit this.


\subsection{Operator Learning using FEKAN}
Neural operators offer a transformative approach for learning mappings between function spaces, extending the capabilities of traditional neural networks beyond finite-dimensional inputs. By directly approximating operators, these models can efficiently capture the complex, often nonlinear relationships underlying PDEs and other functional transformations, achieving high accuracy across varying resolutions and boundary conditions. Their architectures naturally encode both local and global dependencies, allowing them to represent intricate structures that are challenging for conventional numerical methods or standard deep learning approaches. In addition to their expressivity, neural operators provide remarkable computational efficiency, enabling rapid evaluation on unseen inputs without retraining. This combination of generalization, scalability, and fidelity positions neural operators as a powerful, data-driven framework for modeling complex physical and dynamical systems, bridging the gap between mathematical theory and practical scientific computation.

One of the earliest work on Deep Operator Network (DeepONet) was proposed by Lu et al. \cite{lu2021learning} which laid the theoretical foundation for operator learning based on the universal approximation theorem for operators. Over the recent years many relevant works were proposed on the application of DeepONet to diverse fields such as modeling chemical reactions \cite{mao2020physics}, coupled-diffusion reaction systems \cite{Jin2022}, chemical kinetics \cite{goswami2024learning}, forecasting dynamics of solar-thermal energy systems \cite{osorio2022forecasting}, Shock waves \cite{peyvan2024riemannonets}, real-time monitoring of off-shore structures \cite{cao2024deep}, forecasting seismic-wave propagation \cite{yang2021seismic}, or even for modeling turbulent flows \cite{li2024transformer}. In the context of KAN, Abueidda \cite{abueidda2025deepokan} et al. proposed DeepOKAN which is an operator learning framework based on KAN using radial basis functions. With almost equivalent number of trainable parameters, DeepOKAN showcased exceptional accuracy better than DeepONet while making predictions across various mechanics problems with almost comparable computational cost. 

Recent work by Zhang et al.~\cite{Zhang2025BubbleOKAN} introduces a two-step DeepOKAN framework that extends the Deep Operator Network paradigm through a sequential training strategy designed to enhance interpretability and performance in high-frequency multiphysics dynamics. Unlike conventional single-stage training, the method optimizes the underlying KAN components in successive stages, explicitly enriching high-frequency representations to improve numerical stability and predictive accuracy. By incorporating spline and radial-basis function representations within the branch and trunk networks, the model alleviates spectral bias and resolves intricate oscillatory behaviour without relying on standard activation functions. This design proves particularly effective for strongly nonlinear regimes such as Rayleigh--Plesset and Keller--Miksis bubble dynamics. 
The physics-informed formulation embeds governing equations directly into the operator-learning framework, enforcing consistency with conservation laws while promoting generalization across varying initial conditions and frequency scales. Zhang et al.~\cite{Zhang2025BubbleOKAN} benchmarked the two-step DeepOKAN against leading neural operators, including the two-step DeepONet \cite{lee2024training}, the Fourier Neural Operator (FNO)~\cite{li2020fourier}, Wavelet Neural Operator (WNO)~\cite{tripura2023wavelet}, OFormer~\cite{li2022transformer}, and Convolutional Neural Operator (CNO)~\cite{raonic2023convolutional}. Across all test cases, two-step DeepOKAN demonstrates superior fidelity in capturing fine-scale temporal features, offering a scalable and interpretable surrogate model that integrates data-driven learning with principled physical modelling.
\begin{figure}
\centering
{\label{fig:1}
\centering
\includegraphics[width=0.75\linewidth]{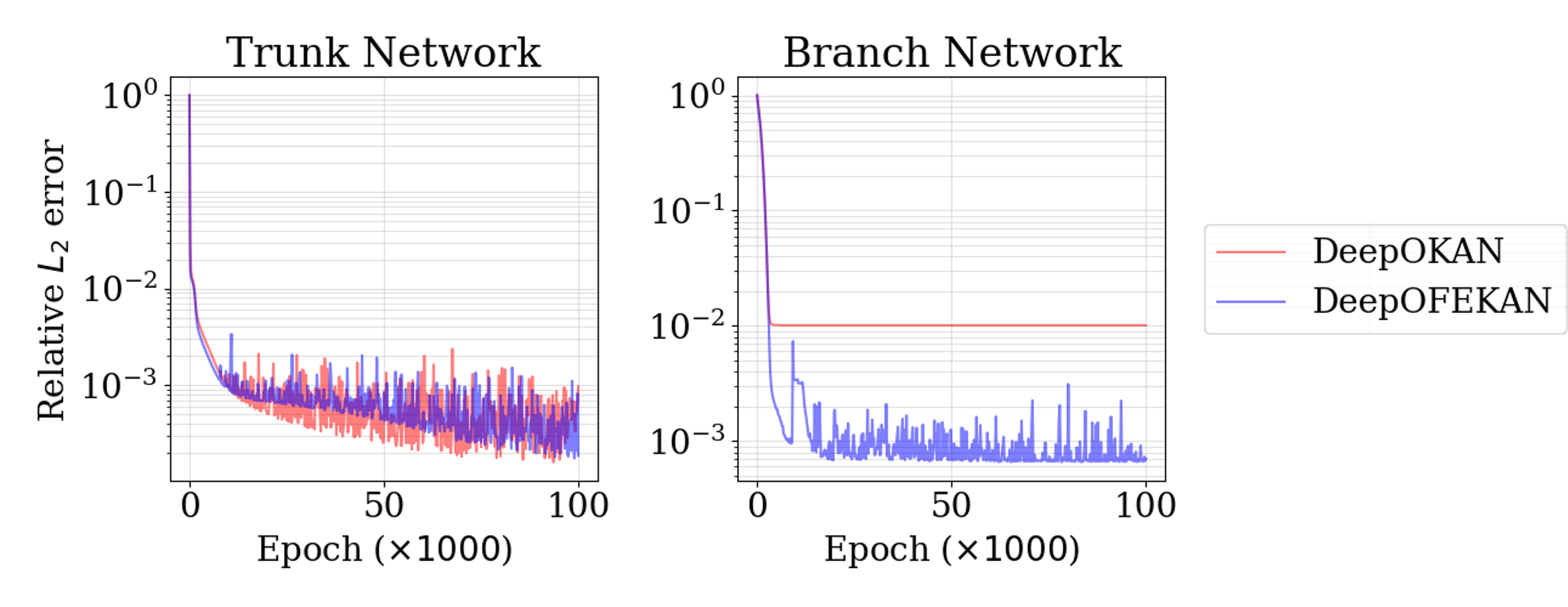}
}
\caption{Relative $L_2$ error for the high-frequency bubble dynamics learned using two-step DeepOKAN and two-step DeepOFEKAN with spline basis for trunk network and RBF for branch network.}
\label{fig:convergence_2stepdeepokan}
\end{figure}

In this section, we compare two-step DeepOKAN with two-step DeepOFEKAN to assess the impact of feature enrichment within the operator-learning framework. In the DeepOFEKAN architecture, both the trunk and branch networks are augmented using a Fourier feature map. The learning objective is the accurate approximation of high-frequency bubble dynamics in the range of 1.5--2~MHz. As shown in Fig.~\ref{fig:convergence_2stepdeepokan}, the trunk network exhibits comparable performance with and without feature enrichment, indicating limited sensitivity to Fourier feature mapping in this component. In contrast, enriching the branch network yields a substantial improvement, reducing the relative $L_2$ error by approximately one order of magnitude. These results highlight the important role of feature enrichment in the branch network for capturing high-frequency operator representations.

\begin{figure}[H]
\centering
{\label{fig:1}
\centering
\includegraphics[width=0.65\linewidth]{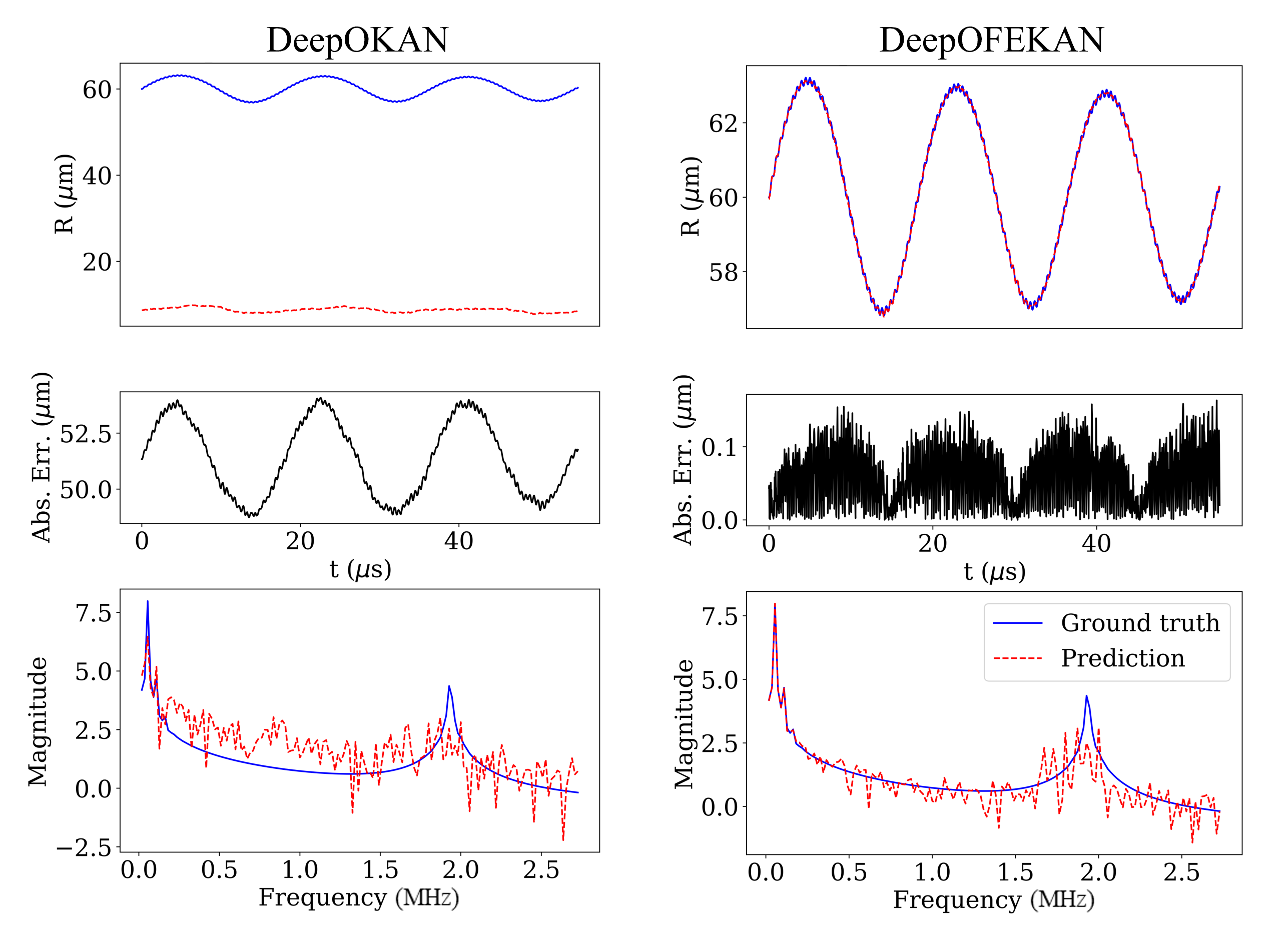}
}
\caption{Absolute error for two-step DeepOKAN and two-step DeepOFEKAN at pressure (p) = $3\times10^5$ Pa and frequency = 2 MHz.}
\label{fig:error_2stepdeepokan}
\end{figure}

Figure~\ref{fig:error_2stepdeepokan} shows that two-step DeepOKAN exhibits limited accuracy in resolving high-frequency bubble dynamics, whereas two-step DeepOFEKAN achieves orders-of-magnitude reductions in point-wise absolute error. The improvement can be attributed to feature enrichment, which expands the representational space while implicitly rescaling input features, thereby facilitating the learning of complex, high-frequency physical behaviour without explicit \textit{non-dimensionalization of the given data set}.  Although two-step DeepOKAN underperforms in accurately predicting the bubble radius, it nevertheless captures the dominant frequency distribution with reasonable fidelity, as evidenced in Fig.~\ref{fig:error_2stepdeepokan}. Further improvements are likely attainable through careful selection of basis functions and systematic hyperparameter optimization, potentially enabling resolution of even fine-scale features at higher temporal resolutions. 

\section{Conclusions}
We introduce \textit{Feature-Enriched Kolmogorov--Arnold Networks} (FEKAN), an efficient and accurate extension of KAN and its state-of-the-art variants. Building upon the interpretability and continual learning capabilities of KAN, FEKAN leverages a compositional basis function structure enhanced with feature enrichment, providing a simple yet powerful modification to the original architecture (Figure~\ref{fig:kan_or_fekan}). This design effectively mitigates spectral bias, yielding improved predictive accuracy and robust generalization across diverse data distributions, while maintaining stable dynamics that circumvent the convergence instabilities often encountered in conventional KAN implementations. Beyond these performance gains, FEKAN demonstrates superior parametric efficiency, enabling faster convergence and more efficient use of computational resources. Importantly, its continuous learning capability allows seamless adaptation to evolving datasets, avoiding the catastrophic forgetting associated with iterative training. These advancements establish FEKAN as a versatile and reliable framework that unifies accuracy, efficiency, and adaptive learning, providing a robust foundation for complex, long-term modeling in dynamic environments.
\begin{figure}[H]
\centering
{\label{fig:1}
\centering
\includegraphics[width=\linewidth]{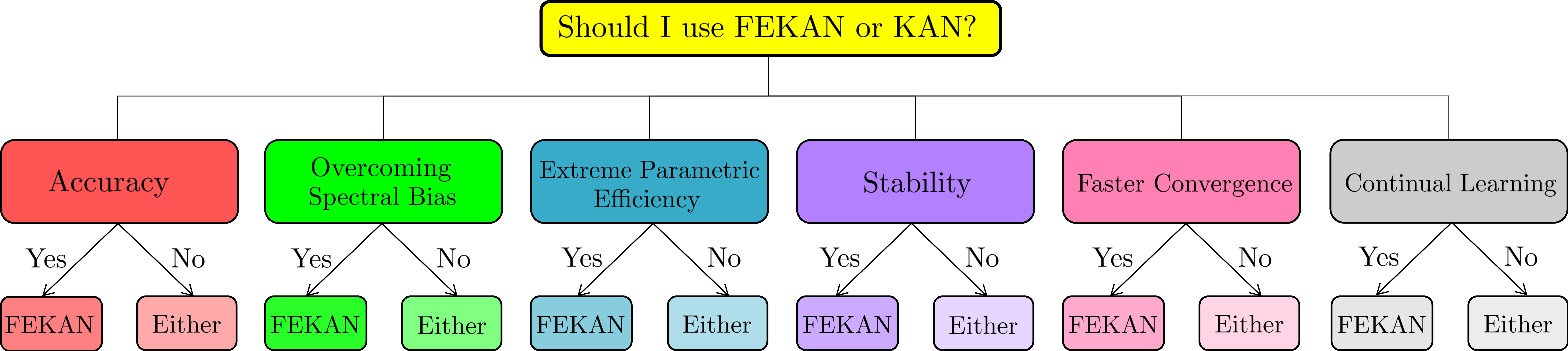}
}
\caption{Should I use FEKAN or KAN?}
\label{fig:kan_or_fekan}
\end{figure}
By leveraging the FEKAN framework, we addressed a range of problems, including function approximation, partial differential equations of various types as well as dimensions, and neural operators that map between input and output spaces. The principal outcomes of the proposed FEKAN approach can be summarized as follows.

\begin{itemize}
    \item While KAN~\cite{liu2024kan} already demonstrates notable parametric efficiency compared to MLPs, FEKAN achieves \textit{extreme parametric efficiency}, delivering higher accuracy even in low-parameter regimes. This efficiency also leads to faster training and improved computational performance relative to KAN with equivalent representational capacity.

    \item Empirical results show that FEKAN converges more rapidly than KAN, reaching higher accuracy in less training time for the same representational capacity. We evaluated FEKAN using several state-of-the-art basis functions recently proposed for KAN, including Chebyshev~\cite{ss2024chebyshev}, RBF~\cite{li2024kolmogorov}, ReLU~\cite{qiu2024relu}, HReLU~\cite{so2025higher}, and Wavelets~\cite{bozorgasl2405wav}, for both function approximation and differential equation solving.

    \item In Figure~\ref{fig:kan_or_fekan}, \textit{stability} refers to an architecture's ability to prevent divergence during training. For example, when employing Chebyshev polynomials~\cite{ss2024chebyshev} as basis functions, KAN frequently exhibits unstable training across multiple problem types, including different types of PDEs~\cite{shukla2024comprehensive}. In contrast, FEKAN promotes stable training with Chebyshev bases, effectively suppressing divergence throughout optimization.

    \item FEKAN alleviates spectral bias, a limitation inherent to both KAN and MLPs. Whereas prior work combined KAN with continual learning to partially address spectral bias in operator learning~\cite{Zhang2025BubbleOKAN}, we show that FEKAN can \textit{overcome spectral bias} directly by choosing an appropriate feature map, such as a Fourier feature map, without additional training strategies.

    \item KAN has been shown to mitigate catastrophic forgetting, a common issue in MLP-based architectures. FEKAN further suppresses forgetting and enhances \textit{continual learning}, particularly in the context of PDE solving.  

    \item Across a range of commonly used basis functions, FEKAN consistently outperforms KAN, yielding robust improvements in generalization.

    \item Feature enrichment via FEKAN provides a simple yet effective solution to the limitations of the vanilla KAN architecture. We observe consistent performance gains with FEKAN over KAN across different basis functions and architectures. For example, in PDE solving, separable PI-FEKAN exhibits enhanced stability and superior generalization compared to separable PI-KAN, particularly when Chebyshev polynomials serve as the underlying basis.
\end{itemize}
Despite its advantages, FEKAN has several important limitations that merit careful consideration. First, selecting or learning suitable feature enrichment mappings, such as Fourier or Chebyshev features, may require prior knowledge of the problem or careful hyperparameter tuning. In complex or high-dimensional settings, suboptimal choices can diminish the representational benefits of FEKAN. In this work, this challenge has been partially mitigated through the use of Random Fourier Features, where enriched term frequencies are sampled randomly to reduce the reliance on manual selection. A second limitation arises from the potential for overfitting. Expanding the feature space increases model capacity, which can lead to overfitting, especially when the number of enriched features is large relative to the available training data. Maintaining generalization therefore requires careful regularization strategies and judicious feature selection. This can also be mitigated by employing Random Fourier Features.

In summary, FEKAN offers a flexible and powerful framework for function approximation and operator learning, enabling enhanced accuracy, stability, and generalization across a range of tasks. Its feature enrichment mechanism provides a versatile approach to capturing complex patterns, and when combined with appropriate regularization, it consistently delivers robust performance even in challenging or high-dimensional settings. These qualities highlight FEKAN's potential as a highly effective tool for a wide array of computational problems.

\appendix
\section{Theorem 1: Feature-Enriched Kolmogorov Superposition Theorem}
\label{AppenA}

Let \(\mathbf{x} = (x_1, \dots, x_n) \in [0,1]^n\) be the original inputs, and let
\[
f: [0,1]^n \times \mathbb{R}^m \to \mathbb{R}
\] 
be a continuous function of both the original inputs and \(m\) additional continuous features
\[
u_1(\mathbf{x}), u_2(\mathbf{x}), \dots, u_m(\mathbf{x}),
\]
where each \(u_j: [0,1]^n \to \mathbb{R}\) is continuous (for example, \(u_j(\mathbf{x}) = \sin(x_i), \cos(x_i), x_i^p\), etc.). Then there exist continuous functions
\[
\phi_i: \mathbb{R} \to \mathbb{R}, \quad i = 1, \dots, 2(n+m)+1
\]
and continuous functions
\[
\psi_q: \mathbb{R} \to \mathbb{R}, \quad q = 1, \dots, n+m
\]
such that
\begin{equation}
\label{eq:kolmogorov-features-explicit}
\boxed{
f\big(\mathbf{x}, u_1(\mathbf{x}), \dots, u_m(\mathbf{x})\big) 
= \sum_{i=1}^{2(n+m)+1} 
\phi_i \Bigg(
\underbrace{\sum_{q=1}^{n} \psi_q(x_q)}_{\text{original inputs}} \;+\; 
\underbrace{\sum_{j=1}^{m} \psi_{n+j}(u_j(\mathbf{x}))}_{\text{additional features}}
\Bigg)
}
\end{equation}
\\
for all \(\mathbf{x} \in [0,1]^n\).
\vspace{0.2cm}

\noindent \textbf{Remark}: The left-hand side of the feature-enriched Kolmogorov representation explicitly shows that the function \(f\) depends on both the original inputs and the additional features. Any continuous feature \(u_j(\mathbf{x})\), such as \(\sin(x_i)\), \(\cos(x_i)\), or \(x_i^p\), can be included, and the universal representation remains valid. The inner 1-dimensional functions \(\psi_q\) now act separately on the original inputs and the features, which can simplify the approximation by pre-encoding nonlinearities. By lifting the function into this augmented feature space, the representation becomes more flexible and can capture complex behaviors more efficiently.

\proof{
\noindent  
    Let the original inputs be \(\mathbf{x} = (x_1, \dots, x_n) \in [0,1]^n\) and the additional features be
    \[
    u_1(\mathbf{x}), \dots, u_m(\mathbf{x}),
    \] 
    where each \(u_j: [0,1]^n \to \mathbb{R}\) is continuous.  
    We define the \textit{augmented input vector} 
    \[
    v := (\mathbf{x}, u_1(\mathbf{x}), \dots, u_m(\mathbf{x})) \in \mathbb{R}^{\,n+m}.
    \]  
    By construction, each entry of \(v\) is continuous, and since \(f\) is continuous in all its arguments, the composition
    \[
    F(v) := f(\mathbf{x}, u_1(\mathbf{x}), \dots, u_m(\mathbf{x}))
    \] 
    is a continuous function on \([0,1]^{\,n+m}\).

  This transforms \(f\) into a function on the higher-dimensional augmented space, explicitly incorporating the features. The features enrich the input space by pre-encoding nonlinearities or transformations that may appear in \(f\), which can simplify the subsequent approximation.

\vspace{0.1cm}
\noindent  
    The classical Kolmogorov Superposition Theorem guarantees that any continuous function on a finite-dimensional cube can be expressed as a finite sum of continuous 1-dimensional functions of sums of continuous 1-dimensional functions. Applying it to \(F\), there exist continuous functions \(\phi_i: \mathbb{R} \to \mathbb{R}\) (\(i = 1, \dots, 2(n+m)+1\)) and \(\psi_q: \mathbb{R} \to \mathbb{R}\) (\(q = 1, \dots, n+m\)) such that
    \[
    F(v) = \sum_{i=1}^{2(n+m)+1} \phi_i \Bigg( \sum_{q=1}^{n+m} \psi_q(v_q) \Bigg).
    \]  
  
    This step reduces the original multi-dimensional function into a sum of 1-dimensional functions. By including features in the augmented input, the \(\psi_q\) functions acting on the original inputs can be simpler because some of the nonlinearities are already captured by the features. This improves the efficiency of representation and reduces the complexity of the inner functions.

\vspace{0.1cm}
\noindent  
    We can explicitly separate the contributions of the original inputs \(\mathbf{x}\) and the additional features \(u_1(\mathbf{x}), \dots, u_m(\mathbf{x})\):
    \[
    \sum_{q=1}^{n+m} \psi_q(v_q) = \sum_{q=1}^{n} \psi_q(x_q) + \sum_{j=1}^{m} \psi_{n+j}(u_j(\mathbf{x})).
    \]  
    Substituting this into the previous expression yields the feature-enriched Kolmogorov representation:
    \[
    f(\mathbf{x}, u_1(\mathbf{x}), \dots, u_m(\mathbf{x})) = \sum_{i=1}^{2(n+m)+1} 
    \phi_i \Bigg( \sum_{q=1}^{n} \psi_q(x_q) + \sum_{j=1}^{m} \psi_{n+j}(u_j(\mathbf{x})) \Bigg).
    \]
 
    This final form explicitly distinguishes the contributions of the original inputs and the additional features. The representation shows that any continuous function of the inputs and features can be expressed as a finite sum of 1-dimensional functions of linear combinations of both. Including features makes it possible to capture complex nonlinear behaviors more efficiently, effectively \textit{lifting} the function into a higher-dimensional feature space where the inner 1-D transformations \(\psi_q\) become simpler. This is particularly useful in practice for constructing neural networks or other function approximators that benefit from feature engineering.
}

\section{Representation Capacity Gains via Feature-Enriched Kolmogorov Superposition}
\label{th3:representation_capactiy_gains}
We formalize two complementary ways in which adding continuous feature maps 
\(u_1,\dots,u_m : [0,1]^n \to \mathbb{R}\) enhances the representation capacity 
of Kolmogorov-style architectures: 
(1) enlarging the class of representable functions, and 
(2) reducing the approximation complexity for certain target functions.

\subsection{Function Classes}

Let
\[
\mathcal{R}_{n}
= 
\Big\{
F:[0,1]^n \to \mathbb{R}
\,\big|\,
F(\mathbf{x})=\sum_{i=1}^{K}
\phi_i\!\Big(\sum_{q=1}^{n}\psi_q(x_q)\Big),
\ \phi_i,\psi_q\in C(\mathbb{R})
\Big\},
\]
be the family of functions representable using the classical
Kolmogorov form with \(K\) outer functions.

Let \(u_1,\dots,u_m \in C([0,1]^n)\) be continuous feature maps and define
\[
\mathcal{R}_{n,m}
=
\Big\{
F:[0,1]^n \to \mathbb{R}
\,\big|\,
F(\mathbf{x})=\sum_{i=1}^{K}
\phi_i\!\Big(
\sum_{q=1}^{n}\psi_q(x_q)
+
\sum_{j=1}^{m}\psi_{n+j}(u_j(\mathbf{x}))
\Big),
\ \phi_i,\psi_q \in C(\mathbb{R})
\Big\}.
\]

\subsection{Approximation Complexity}

For \(\varepsilon>0\), define the complexity measures
\[
\mathrm{Comp}_{n}(F,\varepsilon)
=
\min\Big\{
K :
\exists\, G\in\mathcal{R}_{n}
\text{ such that } 
\|F-G\|_{\infty}\le\varepsilon
\Big\},
\]
\[
\mathrm{Comp}_{n,m}(F,\varepsilon)
=
\min\Big\{
K :
\exists\, G\in\mathcal{R}_{n,m}
\text{ such that } 
\|F-G\|_{\infty}\le\varepsilon
\Big\}.
\]

\subsection{Lemma 1: Enlargement of the Representable Family}

\begin{lemma}
\label{lem:enlargement}
For any collection of continuous nonconstant features
\(u_1,\dots,u_m\),
\[
\mathcal{R}_{n} \subsetneq \mathcal{R}_{n,m}.
\]
\end{lemma}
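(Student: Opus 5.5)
The inclusion $\mathcal{R}_{n}\subseteq\mathcal{R}_{n,m}$ follows by setting the feature inner functions to zero. Strictness comes from the structure of the classical Kolmogorov form as it is written here, and my plan is to establish it under a non-degeneracy reading of ``nonconstant'' that I state explicitly below.

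\textbf{Step 1: inclusion.} Given $F\in\mathcal{R}_n$ with data $(\phi_i,\psi_q)$, I choose $\psi_{n+j}\equiv 0$ for $j=1,\dots,m$. Since the zero function lies in $C(\mathbb{R})$, the same $\phi_i$ and $\psi_q$ exhibit $F\in\mathcal{R}_{n,m}$.

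\textbf{Step 2: rewriting both classes.} In the definitions, a single inner sum $S(\mathbf{x})=\sum_q\psi_q(x_q)$ is shared by all $K$ outer functions. Hence $\sum_{i=1}^K\phi_i(S(\mathbf{x}))=\Phi(S(\mathbf{x}))$ with $\Phi=\sum_i\phi_i\in C(\mathbb{R})$, and conversely any such $\Phi$ is obtained by taking $\phi_1=\Phi$ and $\phi_i=0$ for $i\ge 2$. Therefore
\[
\mathcal{R}_n=\{\Phi\circ S:\ S \text{ additively separable}\},
\]
\[
\mathcal{R}_{n,m}=\Big\{\Phi\circ\Big(S+\sum_j\psi_{n+j}\circ u_j\Big)\Big\},
\]
and both are independent of $K$. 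The witness for strictness will be $G=\psi_{n+1}\circ u_1$, obtained with $\Phi=\mathrm{id}$ and all other inner functions zero. I will try $\psi_{n+1}=\mathrm{id}$ first, so that $G=u_1$, and otherwise choose $\psi_{n+1}$ strictly monotone so that $G$ has the same level sets as $u_1$.

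\textbf{Step 3: $G\notin\mathcal{R}_n$ via level sets (the main obstacle).} Suppose $G=\Phi\circ S$. Then every level set of $S$ lies inside a level set of $G$. I will take four points $\mathbf{a},\mathbf{b},\mathbf{a}',\mathbf{b}'$ forming a ``rectangle'': they agree outside coordinates $p,q$, and
\[
(a_p,a_q)=(s,t),\quad (b_p,b_q)=(s',t'),\quad (a'_p,a'_q)=(s,t'),\quad (b'_p,b'_q)=(s',t).
\]
Additivity then gives $S(\mathbf{a})+S(\mathbf{b})=S(\mathbf{a}')+S(\mathbf{b}')$. From this I aim to derive a functional constraint on $u_1$, namely that $u_1$ is a function of an additively separable sum. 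A contradiction follows whenever $u_1$ violates that constraint on some such rectangle. The delicate part is making the implication from ``$\Phi\circ S$ equals $\psi_{n+1}\circ u_1$'' to a Thomsen/Reidemeister-type closure condition on the level sets of $u_1$ rigorous for merely continuous $\Phi$, which may be non-injective. I would first handle this on a small box where $u_1$ is strictly monotone in some direction, so that the relevant level sets are well behaved.

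\textbf{Caveat on the hypothesis.} This step cannot succeed for every nonconstant feature. If $u_1=x_1$, or more generally $u_1=h(\sum_q\chi_q(x_q))$, then every $G\in\mathcal{R}_{n,1}$ is already in $\mathcal{R}_n$ and the inclusion is an equality. So I will prove strictness for features that are non-degenerate in this sense, meaning not of the form $h\circ(\text{additively separable})$. Products such as $x_1x_2+x_2$ and typical coupled Fourier or polynomial features satisfy this. For the conclusion to hold as stated, the lemma's hypothesis ``nonconstant'' has to be read as this non-degeneracy condition.
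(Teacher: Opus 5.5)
You are right that the lemma is false as stated, and right that the paper's proof does not establish strictness. Your Step 1 is exactly the paper's inclusion argument. For strictness, the paper only asserts that $\mathbf{x}\mapsto(\mathbf{x},u_1(\mathbf{x}),\dots,u_m(\mathbf{x}))$ ``separates more points'' than $\mathbf{x}\mapsto\mathbf{x}$, which cannot be true, since the identity is already injective. Your Step 2 (both classes are independent of $K$ because the inner sum is shared) is correct, and so is your counterexample $u_1=x_1$. More generally, equality holds whenever every $u_j$ depends on a single coordinate. That covers the paper's own examples $\sin(x_i)$, $\cos(x_i)$, $x_i^p$ and the per-coordinate Fourier maps used in its experiments. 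It also holds for all features when $n=1$, where $\mathcal{R}_1=C([0,1])$.

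The gap is in your repair: your claim that every feature $h(\sum_q\chi_q(x_q))$ collapses the inclusion is false. You only examined the witness $\psi_{n+1}\circ u_1$ with $S\equiv0$. A nonlinear $\psi_{n+1}$ combined with a nonzero $S$ can leave $\mathcal{R}_n$ even when $u_1\in\mathcal{R}_n$. For $n=2$ and $u_1=x_1+x_2$, take $\psi_1=\mathrm{id}$, $\psi_2=0$, $\psi_3(t)=t^2$, $\Phi=\mathrm{id}$; this gives $G=x_1+(x_1+x_2)^2\in\mathcal{R}_{2,1}$. Because $G$ is strictly increasing in each variable on $[0,1]^2$, any representation $G=\Phi(\psi_1(x_1)+\psi_2(x_2))$ has $\psi_1,\psi_2$ strictly monotone and $\Phi$ strictly monotone on the range of the inner sum. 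Hence $G$ must satisfy the Thomsen condition: $G(a,e)=G(b,d)$ and $G(b,f)=G(c,e)$ imply $G(a,f)=G(c,d)$. Take $a=d=0$, $b=0.2$, $c=0.3$, $e=\sqrt{0.24}$, and $f\approx0.651$ chosen so that $G(b,f)=G(c,e)$. Both hypotheses hold, yet $G(a,f)=f^2\approx0.424\neq0.39=G(c,d)$. So the inclusion is strict for a feature your criterion declares degenerate. Plane-wave features $\sin(\mathbf{a}\cdot\mathbf{x})$ are also of your excluded form, including the paper's Lemma~2 feature $\sin(N(x_1+\cdots+x_n))$, contrary to your remark about coupled Fourier features. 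Your hypothesis is therefore far stronger than needed, and your claim that ``nonconstant'' must be read that way is incorrect.

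Moreover, under the hypothesis you adopt, Step 3 is vacuous rather than delicate. ``Not of the form $h\circ(\text{additively separable})$'' simply says $u_1\notin\mathcal{R}_n$. With $\psi_{n+1}=\mathrm{id}$ (or any strictly monotone $\psi_{n+1}$), the assumption $G=\Phi\circ S$ gives $u_1\in\mathcal{R}_n$ at once, so there is nothing to derive. A level-set argument is needed only to \emph{verify} non-degeneracy of concrete features, and there the small-box localization fails on your own example. On every box with $x_2>0$ we have $x_1x_2+x_2=\exp(\log(1+x_1)+\log x_2)$, so it is locally of the excluded form. It lies outside $\mathcal{R}_2$ only because of the face $x_2=0$. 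There it vanishes identically, which forces $\Phi$ to be constant on $\psi_1([0,1])+\psi_2(0)$. Meanwhile, strict monotonicity in $x_1$ for $x_2>0$ makes $\psi_1$ strictly monotone, and hence $\Phi$ strictly monotone on the nearby translates $\psi_1([0,1])+\psi_2(x_2)$. What is missing is threefold: (i) mixed witnesses $\Phi\circ(S+\psi_{n+1}\circ u_j)$, which is where the enlargement actually comes from; (ii) a global level-set argument of the kind above applied to those witnesses; and (iii) a hypothesis built around the mechanism that really causes collapse (for instance, each $u_j$ depending on a single coordinate), rather than around whether $u_j$ itself lies in $\mathcal{R}_n$.
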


\begin{proof}
Inclusion follows by taking 
\(\psi_{n+1}=\dots=\psi_{n+m} \equiv 0\).
To see strictness, note that the map
\[
\mathbf{x} \mapsto \big(\mathbf{x}, u_1(\mathbf{x}),\dots,u_m(\mathbf{x})\big)
\]
separates more points in the domain than 
\(\mathbf{x} \mapsto \mathbf{x}\).
Thus, the set of continuous functions expressible as sums of one-dimensional transforms
of these coordinates is strictly larger.
\end{proof}

\subsection{Lemma 2: Reduction of Approximation Complexity}

\begin{lemma}
\label{lem:complexity-reduction}
There exist continuous features \(u_j\) and a continuous target 
\(F:[0,1]^n \to \mathbb{R}\) such that, for some \(\varepsilon>0\),
\[
\mathrm{Comp}_{n,m}(F,\varepsilon)
\ <\
\mathrm{Comp}_{n}(F,\varepsilon).
\]
\end{lemma}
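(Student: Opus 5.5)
The plan is to make the enrichment do all the work by letting the target itself be one of the features. Take $n=2$, $m=1$, and fix a continuous target $F:[0,1]^2\to\mathbb{R}$ to be chosen below. Set $u_1:=F$. In $\mathcal{R}_{2,1}$ take $K=1$, $\phi_1=\mathrm{id}$, $\psi_1=\psi_2\equiv 0$ and $\psi_3=\mathrm{id}$. This gives $G(\mathbf{x})=\phi_1(\psi_3(u_1(\mathbf{x})))=F(\mathbf{x})$ exactly, so $\mathrm{Comp}_{2,1}(F,\varepsilon)\le 1$ for every $\varepsilon>0$. The lemma then reduces to exhibiting $F$ and $\varepsilon>0$ with $\mathrm{Comp}_{2}(F,\varepsilon)\ge 2$. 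In other words, $F$ must be at uniform distance greater than $\varepsilon$ from every function of the form $\phi(\psi_1(x)+\psi_2(y))$ with $\phi,\psi_1,\psi_2$ continuous. The case $n=1$ is useless here, since $\phi\circ\psi$ already realises every continuous function of one variable.

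For the target I would take the saddle $F(x,y)=(x-\tfrac12)(y-\tfrac12)$. The key structural fact is that $s(x,y):=\psi_1(x)+\psi_2(y)$ satisfies the rectangle identity
\[
s(x_1,y_1)+s(x_2,y_2)=s(x_1,y_2)+s(x_2,y_1)
\]
for all corners. Fix $\varepsilon<1/8$ and suppose $\|F-\phi\circ s\|_\infty\le\varepsilon$. Then $\phi>1/8$ at the $s$-values of $(0,0)$ and $(1,1)$, and $\phi<-1/8$ at the $s$-values of $(0,1)$ and $(1,0)$. Moreover, $|\phi|\le\varepsilon$ on the $s$-image of the two midlines $\{x=\tfrac12\}\cup\{y=\tfrac12\}$, because $F$ vanishes there. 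By continuity of $s$ along the edges of the square, these constraints place the corner values $\alpha,\beta$ (diagonal) and $\gamma,\delta$ (antidiagonal) in distinct connected components of $\{|\phi|>\varepsilon\}$. Those components are separated by the interval of $s$-values swept out by the midlines, which contains $s(\tfrac12,\tfrac12)$. I would then combine $\alpha+\beta=\gamma+\delta$ with the monotone structure forced on $\psi_1$ and $\psi_2$ by these separations (each edge path must cross the midline band exactly where $F$ changes sign) to reach a contradiction. The contradiction should be that both diagonal values end up on one side of the band and both antidiagonal values on the other, which is incompatible with equal sums.

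The main obstacle is this last step: turning the qualitative rectangle identity into a robust, $\varepsilon$-quantitative obstruction when $\phi$ is an arbitrary non-monotone continuous function. The component and separation argument must be carried out with care, because the midline band need not be a single interval and $\phi$ could in principle revisit large values on both sides of it. If the saddle turns out not to give a clean contradiction, I would replace it by a target with more sign changes, for instance a checkerboard of saddles. That forces an ordering pattern of $s$ across a $3\times3$ grid that is inconsistent with additivity, via a pigeonhole argument on the grid values. The fallback is to cite the classical fact that $K=1$ Kolmogorov forms are nowhere dense in $C([0,1]^2)$, and pick $F$ outside the closure of $\mathcal{R}_2$ with $K=1$.

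Finally, if $K=0$ is admitted in the definition of $\mathrm{Comp}$, I would also require $\varepsilon<\|F\|_\infty=1/4$, which the choice $\varepsilon<1/8$ already ensures. Then $\mathrm{Comp}_{2,1}(F,\varepsilon)=1<2\le\mathrm{Comp}_2(F,\varepsilon)$. For general $n\ge2$ and $m\ge1$, I would let $F$ depend on $x_1,x_2$ only, take $u_1=F$ and the remaining features arbitrary with $\psi_{n+j}\equiv0$ for $j\ge2$, and restrict the lower-bound argument to a two-dimensional face of the cube.
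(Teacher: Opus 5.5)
Your reduction is sound, and at the decisive step it departs from the paper for the better. Taking $u_1=F$ gives $\mathrm{Comp}_{2,1}(F,\varepsilon)\le 1$, so everything rests on finding a target at positive uniform distance from all single-term forms $\phi(\psi_1(x)+\psi_2(y))$. The paper instead takes $F=g(\sin(N(x_1+\cdots+x_n)))$ and argues by counting oscillations. But that $F$ equals $\phi_1(x_1+\cdots+x_n)$ with $\phi_1=g\circ\sin(N\,\cdot)$ and $\psi_q=\mathrm{id}$, so it already lies in $\mathcal{R}_n$ with $K=1$. A genuinely non-nomographic target such as your saddle is what is needed. (The paper's $\psi_q$ do not depend on $i$, so $\sum_i\phi_i$ collapses to a single outer function. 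Your bound therefore gives $\mathrm{Comp}_2(F,\varepsilon)=\min\emptyset$, and you should state the convention that this is $+\infty$.)

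The gap is that this lower bound is the whole content of the lemma, and it is not proved; the band argument you sketch does not deliver it. For $\varepsilon>0$ nothing forces $\psi_1,\psi_2$ to be monotone. Also, $\phi$ may vanish at points outside the midline band $J$, so $J$ need not separate diagonal from antidiagonal corner values. Finally, nothing excludes $\alpha$ and $\beta$ lying on opposite sides of $J$ (the diagonal passes through the centre, where $s\in J$). So the configuration ``diagonal values on one side, antidiagonal values on the other'' is never established. Your worry that $J$ might not be an interval is unfounded, though: $J$ is the union of $\psi_1(\tfrac12)+\psi_2([0,1])$ and $\psi_1([0,1])+\psi_2(\tfrac12)$, which share the point $s(\tfrac12,\tfrac12)$.

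The missing idea uses only your rectangle identity. Set $\alpha=s(0,0)$, $\beta=s(1,1)$, $\gamma=s(0,1)$, $\delta=s(1,0)$. Since $\alpha+\beta=\gamma+\delta$, the intervals spanned by $\{\alpha,\beta\}$ and by $\{\gamma,\delta\}$ have a common midpoint, so one contains the other.

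\emph{Case 1: $\gamma$ lies between $\alpha$ and $\beta$.} The intermediate value theorem applied to $t\mapsto s(t,t)$ gives $t^*$ with $s(t^*,t^*)=\gamma$. The single number $\phi(\gamma)$ must then lie within $\varepsilon$ of both $F(t^*,t^*)=(t^*-\tfrac12)^2\ge 0$ and $F(0,1)=-\tfrac14$, which forces $\varepsilon\ge\tfrac18$.

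\emph{Case 2: $\alpha$ lies between $\gamma$ and $\delta$.} Run the same argument along $t\mapsto(t,1-t)$, where $F=-(t-\tfrac12)^2\le 0$, against $F(0,0)=\tfrac14$.

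Hence $\|F-\phi(\psi_1(x)+\psi_2(y))\|_\infty\ge\tfrac18$ for all $\phi,\psi_1,\psi_2$; continuity of $\phi$ is not even used. The choice $\psi_1(x)=x$, $\psi_2(y)=y$ shows the constant is sharp, so your threshold $\varepsilon<\tfrac18$ is exactly right. With this step inserted, your $K=0$ remark and the restriction to a two-dimensional face for general $n\ge 2$ go through unchanged. Neither the checkerboard fallback nor the nowhere-density citation (Buck's theorem on nomographic functions) is then needed.
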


\begin{proof}
We construct a function that is extremely difficult to approximate without the
added nonlinear feature, but becomes trivially representable once that feature 
is included.

\paragraph{A highly oscillatory hidden feature:}
Define
\[
u_1(\mathbf{x})=\sin\!\big(N(x_1+\cdots+x_n)\big), \qquad N\gg 1,
\]
and let the target function be
\[
F(\mathbf{x})=g(u_1(\mathbf{x})),
\]
where $g$ is any continuous function.  
The key point is that $u_1$ oscillates $O(N)$ times along the direction 
$(1,1,\dots,1)$; hence so does $F$.

\paragraph{ Representation when the nonlinear feature is available:}
In the feature-augmented model, we allow an additional feature $\psi_{n+1}$ 
depending on all coordinates.  Define
\[
\psi_{n+1}(\mathbf{x}) = u_1(\mathbf{x}), 
\qquad
\psi_1 = \cdots = \psi_n = 0,
\]
and choose the outer functions
\[
\phi_1 = g,
\qquad 
\phi_2=\cdots=\phi_K=0.
\]
Then the composed representation satisfies
\[
\phi_1\big(\psi_{n+1}(\mathbf{x})\big) = g(u_1(\mathbf{x})) = F(\mathbf{x}).
\]
Thus the function is represented \emph{exactly} with a single outer term:
\[
\mathrm{Comp}_{n,m}(F,0) = 1.
\]

\paragraph{Difficulty when the nonlinear feature is unavailable:}
Now restrict to the diagonal
\[
(x_1,\dots,x_n)=(t,\dots,t).
\]
Along this line,
\[
F(t,\dots,t)=\sin(N n t),
\]
which oscillates $O(N)$ times as $t$ varies.

However, any function in the class $\mathcal{R}_n$ (the model without added
features) is a sum of $K$ univariate components applied to the individual
coordinates.  Along the diagonal, each such component reduces to a function of
\emph{one variable, but with no access to the high-frequency combination}
$t \mapsto Nt$ except through $t$ itself.  
A sum of only $K\ll N$ univariate functions cannot approximate a function that
oscillates $O(N)$ times; hence any such approximant incurs uniform error at 
least some fixed $\varepsilon>0$.  Therefore,
\[
\mathrm{Comp}_n(F,\varepsilon) \;\ge\; C > 1,
\]
where $C$ depends on $N$ but is uniformly bounded away from $1$ for 
$K\ll N$.

\paragraph{Conclusion:}
The function $F$ requires many components to approximate without the nonlinear
feature, but becomes representable \emph{exactly} with a single term once the
feature $u_1$ is added.  Hence representation complexity decreases strictly when 
nonlinear features are included.
\end{proof}


\subsection{Theorem 2: Complementary Gains in Representation Capacity}
\label{AppenB_Th2}
Feature enrichment increases representation capacity in two distinct ways:

\begin{enumerate}
    \item \textbf{Structural enlargement:}
    \[
    \mathcal{R}_{n} \subsetneq \mathcal{R}_{n,m}.
    \]

    \item \textbf{Approximation efficiency:}
    There exist continuous targets \(F\) and continuous features \(u_j\)
    such that, for some \(\varepsilon>0\),
    \[
    \mathrm{Comp}_{n,m}(F,\varepsilon)
    <
    \mathrm{Comp}_{n}(F,\varepsilon).
    \]
\end{enumerate}
\noindent
Thus, adding continuous features both enlarges the expressible family
of functions and reduces the complexity required to approximate
certain target functions.

\subsection{Theorem 3: Rademacher Complexity Under Feature Augmentation}
\label{AppenB_Th3}
Let $\mathcal{F}_d$ be a class of real-valued functions defined on 
$\mathcal{X} \subset \mathbb{R}^d$, and let $\hat{\mathfrak{R}}_n(\mathcal{F}_d)$ 
denote its empirical Rademacher complexity over samples 
$\{x_i\}_{i=1}^n$. Consider the augmented input space 
$\mathcal{X}' = \mathcal{X} \times \mathbb{R}^m$, and define
\[
\mathcal{F}_{d+m}
=
\left\{
f(x,x') = g(x) + h(x') 
\;\middle|\;
g \in \mathcal{F}_d,\;
h \in \mathcal{H}
\right\},
\]
where $\mathcal{H}$ is a function class over the additional $m$ features 
and contains the zero function, i.e., $0 \in \mathcal{H}$.

Then the following hold:

\begin{enumerate}
    \item (Monotonicity)
    \[
    \hat{\mathfrak{R}}_n(\mathcal{F}_d)
    \le
    \hat{\mathfrak{R}}_n(\mathcal{F}_{d+m}).
    \]

    \item (Additive Upper Bound)
    \[
    \hat{\mathfrak{R}}_n(\mathcal{F}_{d+m})
    \le
    \hat{\mathfrak{R}}_n(\mathcal{F}_d)
    +
    \hat{\mathfrak{R}}_n(\mathcal{H}).
    \]
\end{enumerate}

\begin{proof}
Let $\sigma_1, \dots, \sigma_n$ be independent Rademacher random variables.

\paragraph{(1) Monotonicity.}
Since $0 \in \mathcal{H}$, for every $g \in \mathcal{F}_d$ the function
\[
f(x,x') = g(x) + 0
\]
belongs to $\mathcal{F}_{d+m}$. Hence
\[
\mathcal{F}_d \subseteq \mathcal{F}_{d+m}.
\]
Empirical Rademacher complexity is monotone under set inclusion, therefore
\[
\hat{\mathfrak{R}}_n(\mathcal{F}_d)
\le
\hat{\mathfrak{R}}_n(\mathcal{F}_{d+m}).
\]

\paragraph{(2) Additive Upper Bound.}
By definition,
\[
\hat{\mathfrak{R}}_n(\mathcal{F}_{d+m})
=
\mathbb{E}_\sigma
\left[
\sup_{g \in \mathcal{F}_d,\, h \in \mathcal{H}}
\frac{1}{n}
\sum_{i=1}^n
\sigma_i
\big(g(x_i) + h(x_i')\big)
\right].
\]

Using linearity of the sum,
\[
=
\mathbb{E}_\sigma
\left[
\sup_{g,h}
\left(
\frac{1}{n} \sum_{i=1}^n \sigma_i g(x_i)
+
\frac{1}{n} \sum_{i=1}^n \sigma_i h(x_i')
\right)
\right].
\]

Applying the inequality $\sup_{u,v}(u+v) \le \sup_u u + \sup_v v$, we obtain
\[
\le
\mathbb{E}_\sigma
\left[
\sup_{g \in \mathcal{F}_d}
\frac{1}{n} \sum_{i=1}^n \sigma_i g(x_i)
+
\sup_{h \in \mathcal{H}}
\frac{1}{n} \sum_{i=1}^n \sigma_i h(x_i')
\right].
\]

Splitting the expectation yields
\[
=
\hat{\mathfrak{R}}_n(\mathcal{F}_d)
+
\hat{\mathfrak{R}}_n(\mathcal{H}),
\]
which completes the proof.
\end{proof}

\section{Function Approximation}
\label{appx:func_approx}
Here we report the results for the function-approximation experiments described in Section~\ref{subsec:disc_func_approx}. All models were trained for 50{,}000 epochs using the Adam optimizer. We consider architectures consisting of a single hidden layer with six activation units for both KAN and FEKAN. In addition, FEKAN incorporates a feature-enrichment layer comprising nine trigonometric basis functions defined as
\begin{align}
\gamma(x) = \bigl[
1,\;
\cos(\pi x),\;
\sin(\pi x),\;
\cos(2\pi x),\;
\sin(2\pi x),\;
\cos(3\pi x),\;
\sin(3\pi x),\;
\cos(4\pi x),\;
\sin(4\pi x)
\bigr].
\label{eq:feature_enrich_funapprox}
\end{align}
\noindent
Furthermore, both KAN and FEKAN are evaluated using spline, Fourier, radial basis function (RBF)~\cite{li2024kolmogorov}, Chebyshev~\cite{ss2024chebyshev}, ReLU~\cite{qiu2024relu}, HReLU~\cite{so2025higher}, and wavelet~\cite{bozorgasl2405wav} bases. The specific internal parametric configurations associated with each basis function are summarized in Table~\ref{tab:funfit_compare}. A comparison of the resulting absolute errors is presented in the following subsections.

\noindent \textbf{B-splines:}
\begin{figure}[H]
\centering
{
\centering
\includegraphics[width=0.4\linewidth]{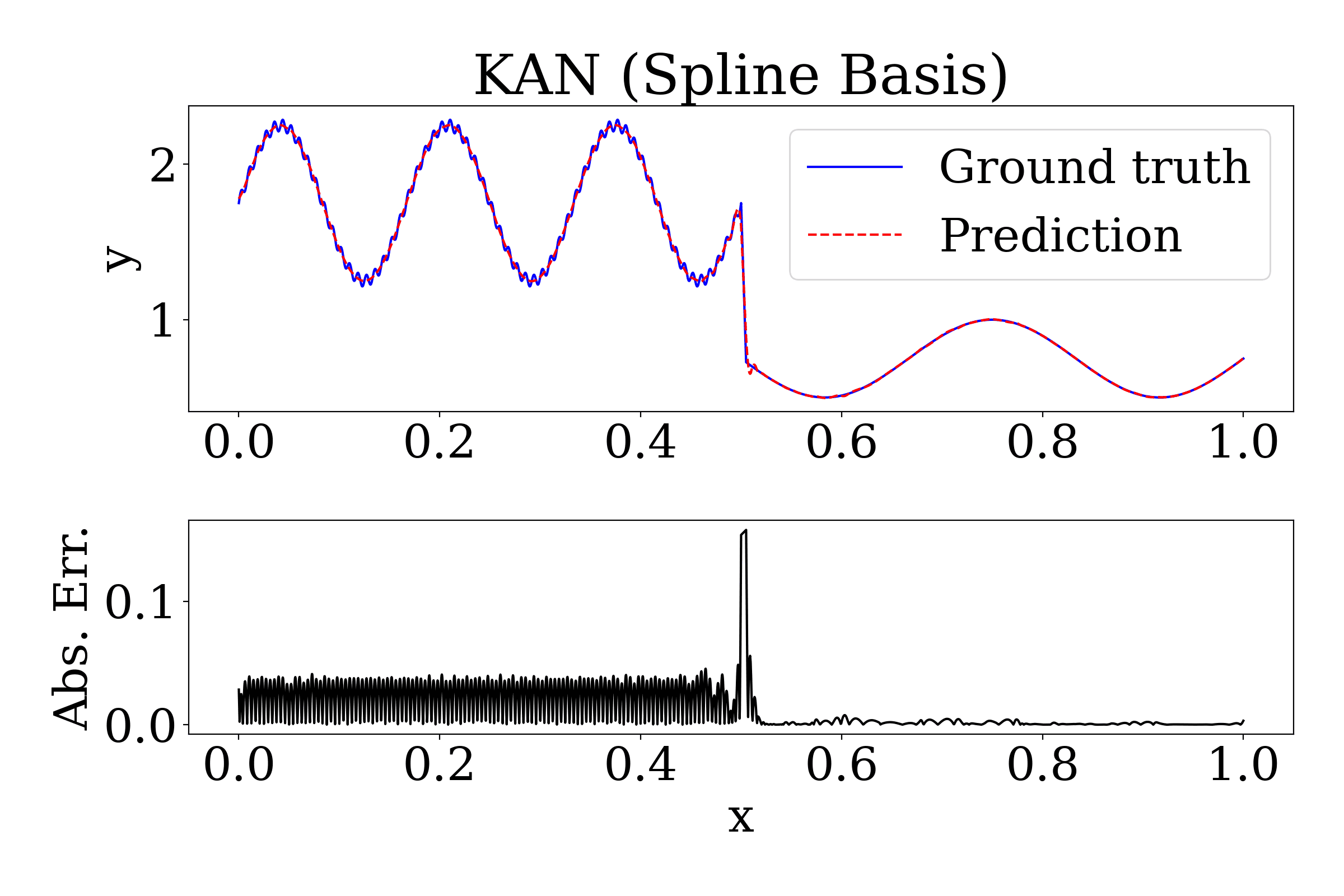}
}
\hspace{0.001cm}
\centering
{
\centering
\includegraphics[width=0.4\linewidth]{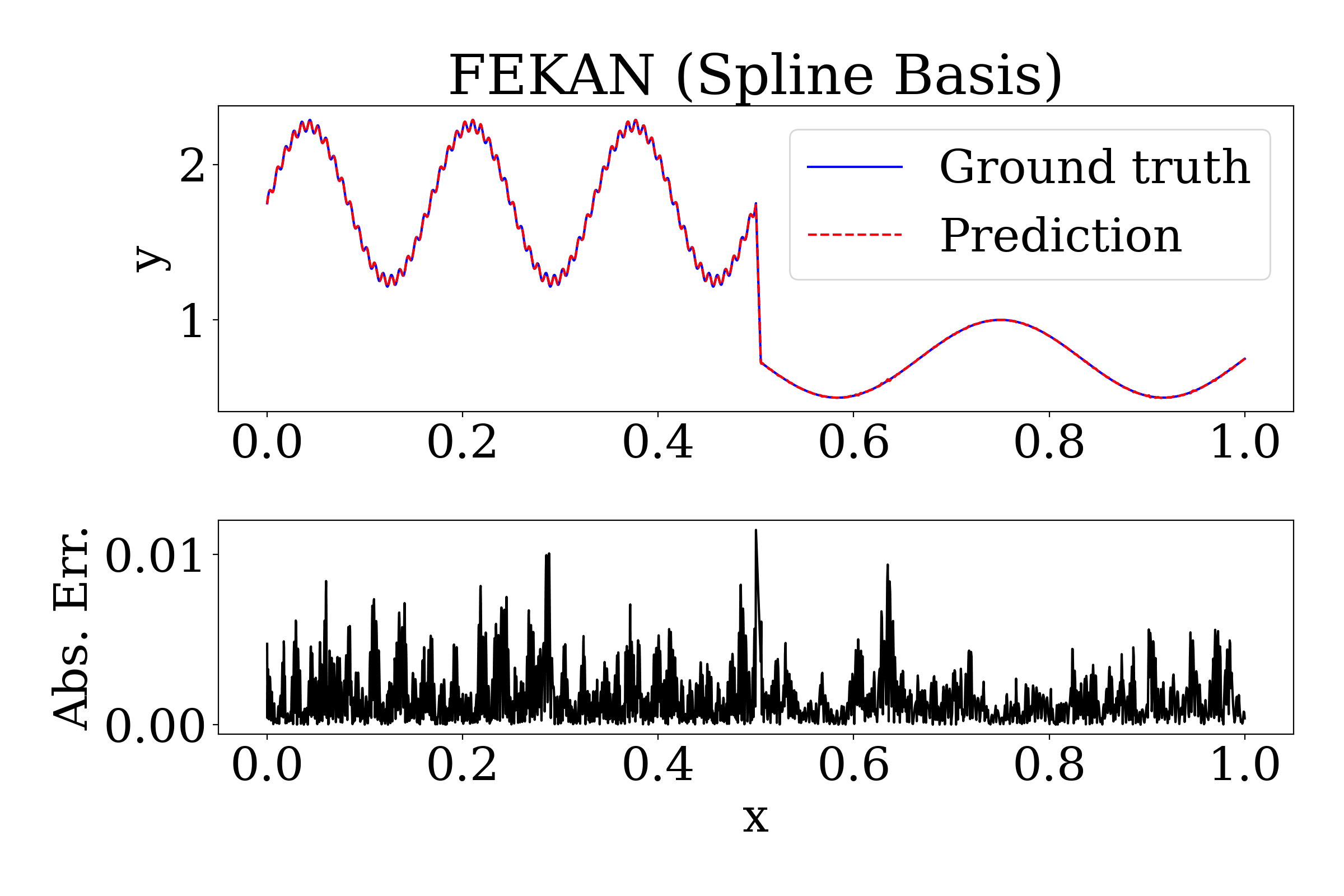}
}
\caption{Absolute error for a high-frequency test function using KAN and FEKAN with the spline basis ($k=2$, $G=15$).}
\label{fig:funfit_spline_compare}
\end{figure}

\noindent \textbf{Fourier:}
\label{appx:fourier}
\begin{figure}[H]
\centering
{\label{fig:1}
\centering
\includegraphics[width=0.4\linewidth]{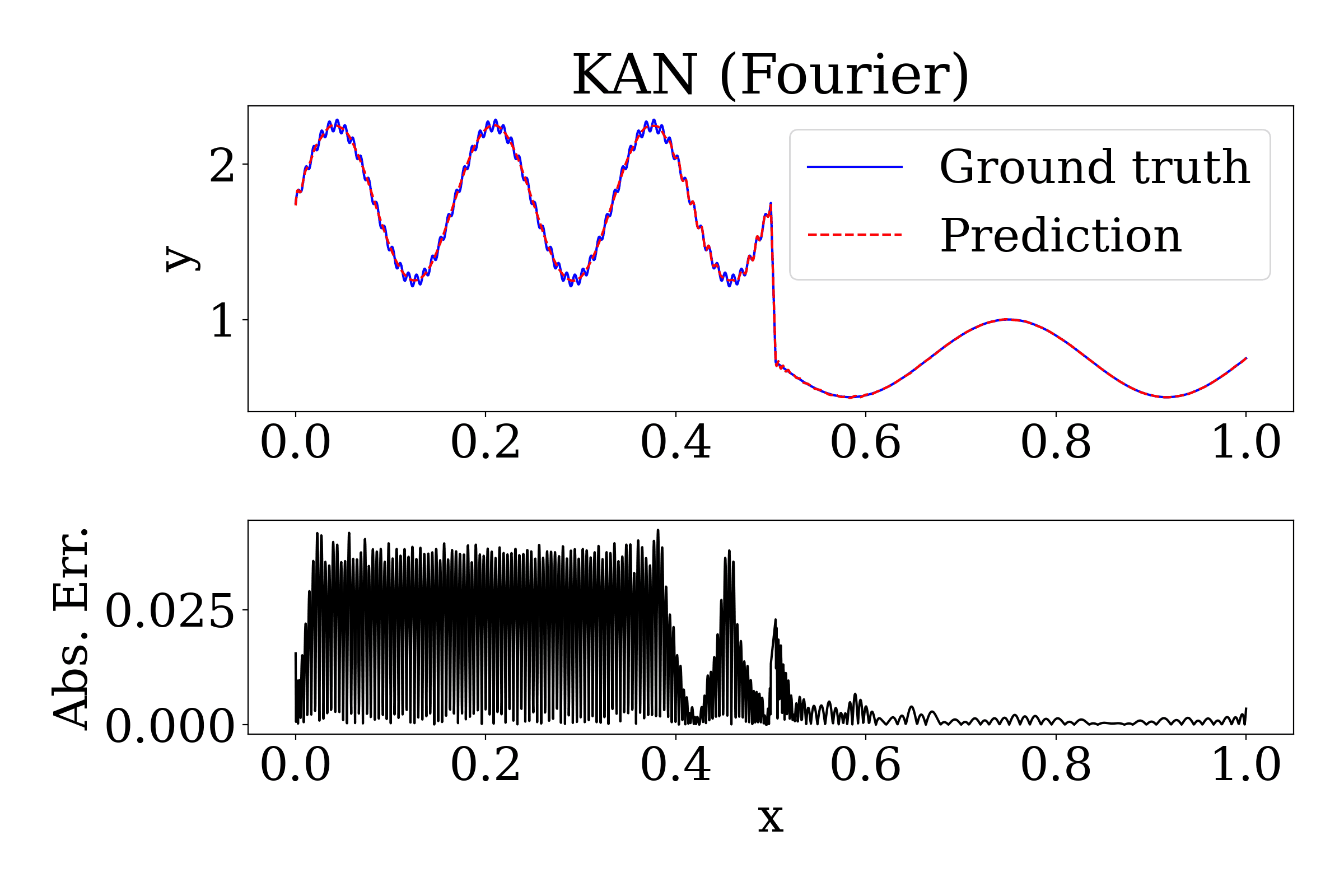}
}
\hspace{0.001cm}
\centering
{\label{fig:1}
\centering
\includegraphics[width=0.4\linewidth]{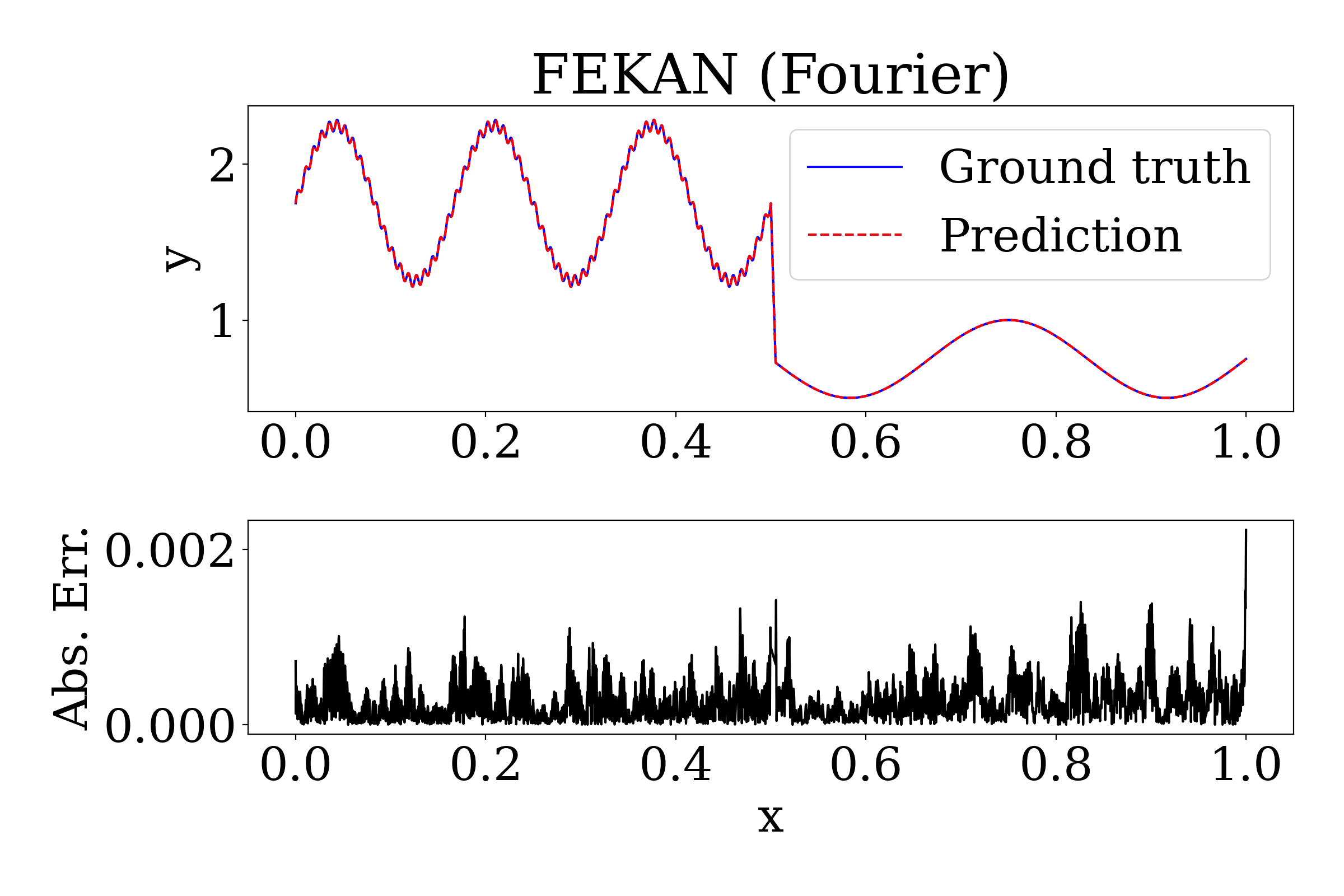}
}
\caption{Absolute error for a high-frequency test function using KAN and FEKAN with the Fourier basis ($N=50$).}
\label{fig:funfit_fourier_compare}
\end{figure}

\noindent \textbf{ChebyKAN:}
\label{appx:cheby}
\begin{figure}[H]
\centering
{\label{fig:1}
\centering
\includegraphics[width=0.4\linewidth]{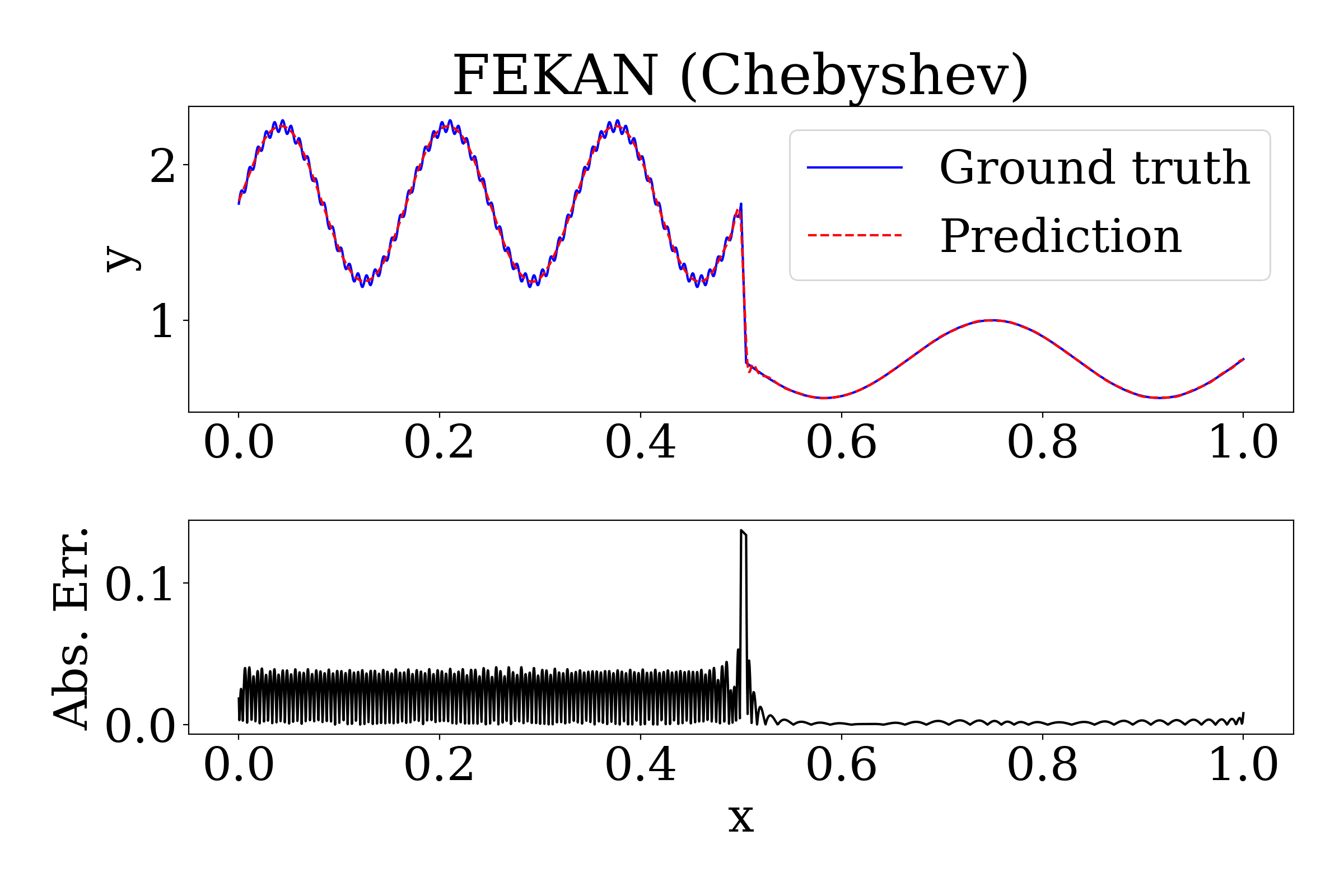}
}
\caption{Absolute error for a high-frequency test function using FEKAN with Chebyshev polynomials ($k=4$). Without feature enrichment, KAN diverged to NaN before completing training.}
\label{fig:funfit_cheby_compare}
\end{figure}

\noindent \textbf{FastKAN:}
\label{appx:fast}
\begin{figure}[H]
\centering
{\label{fig:1}
\centering
\includegraphics[width=0.4\linewidth]{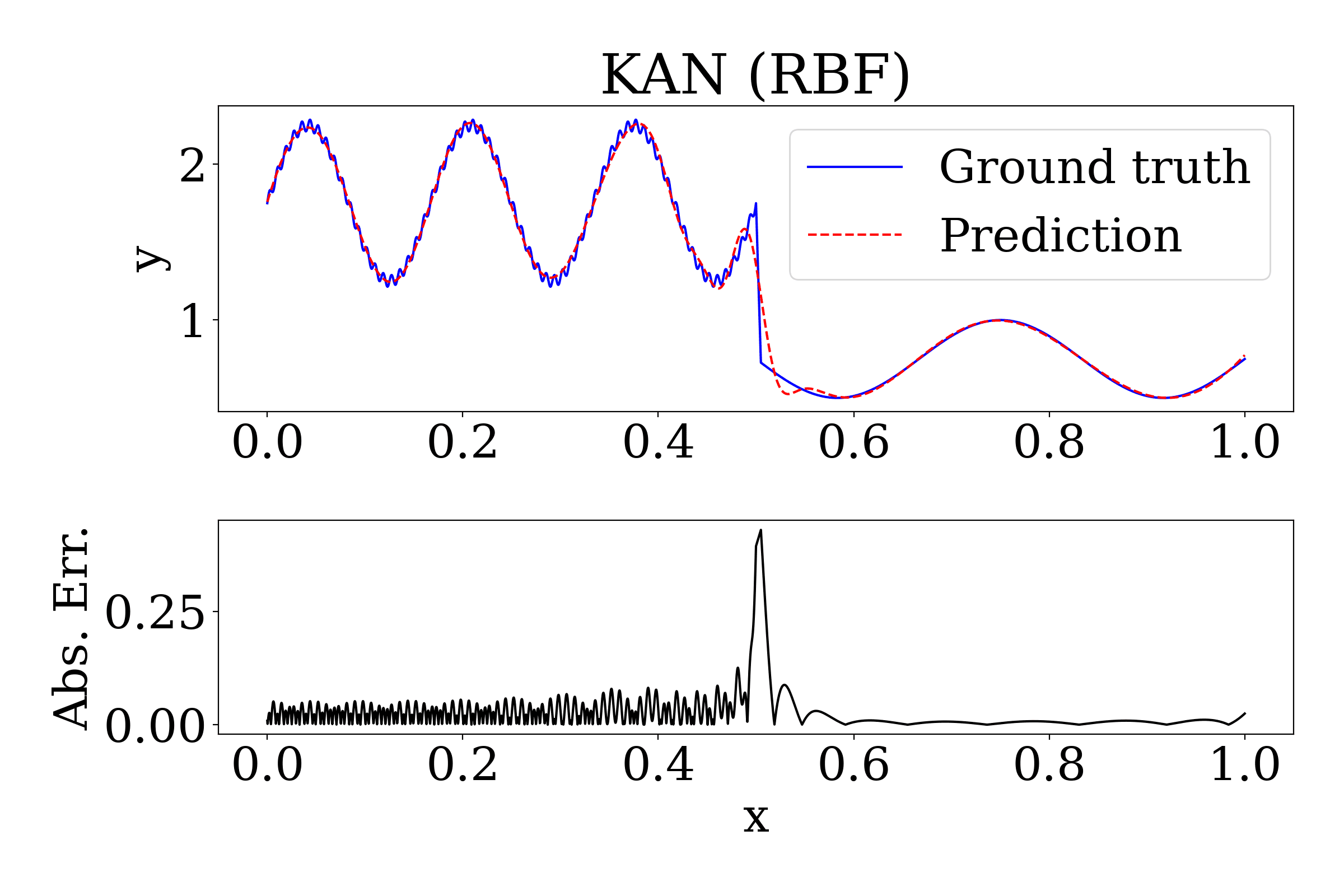}
}
\hspace{0.001cm}
\centering
{\label{fig:1}
\centering
\includegraphics[width=0.4\linewidth]{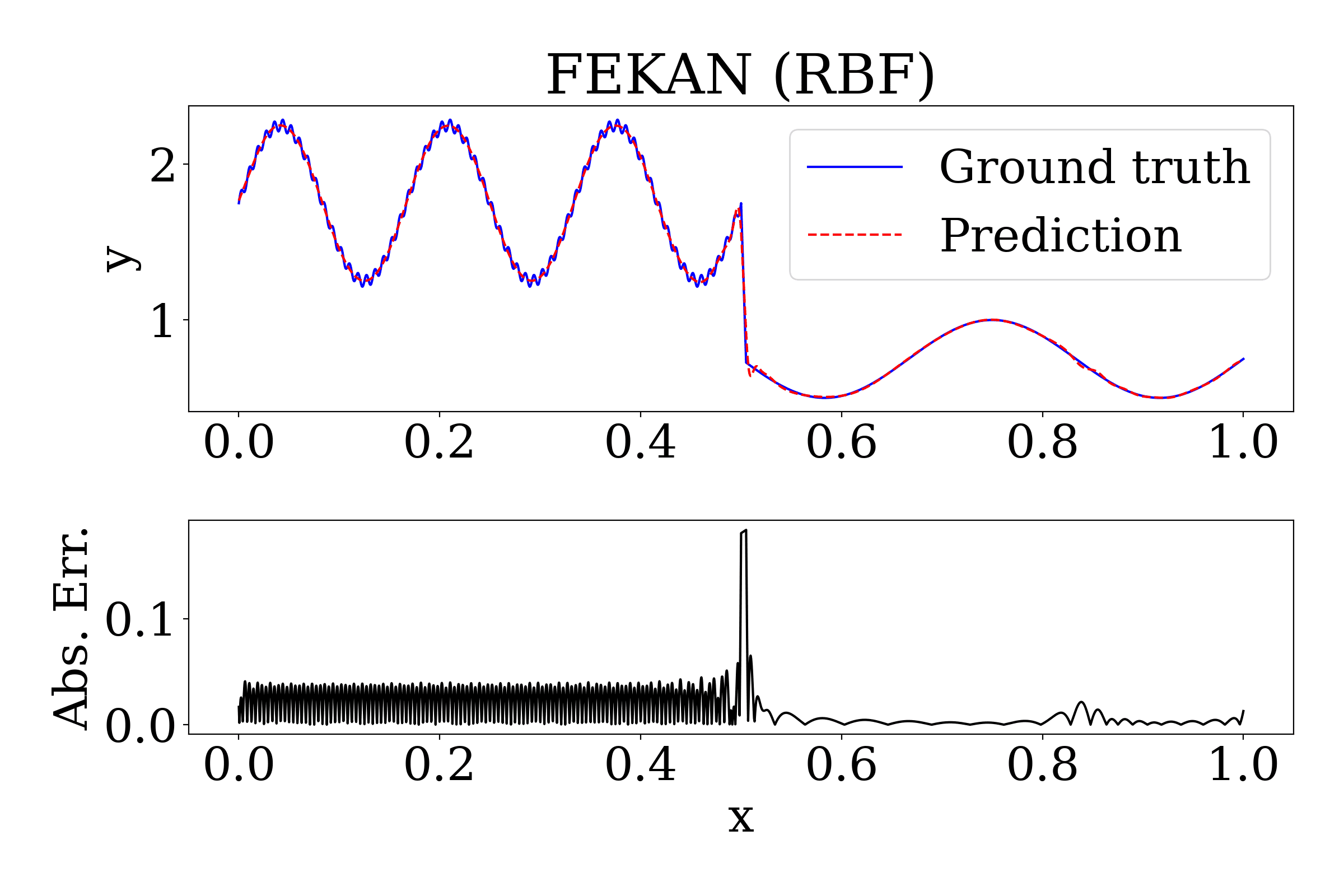}
}
\caption{Absolute error for a high-frequency test function using KAN and FEKAN with the RBF basis ($N_f=50$).}
\label{fig:funfit_rbf_compare}
\end{figure}

\noindent \textbf{ReLUKAN:}
\label{appx:relu}
\begin{figure}[H]
\centering
{\label{fig:1}
\centering
\includegraphics[width=0.4\linewidth]{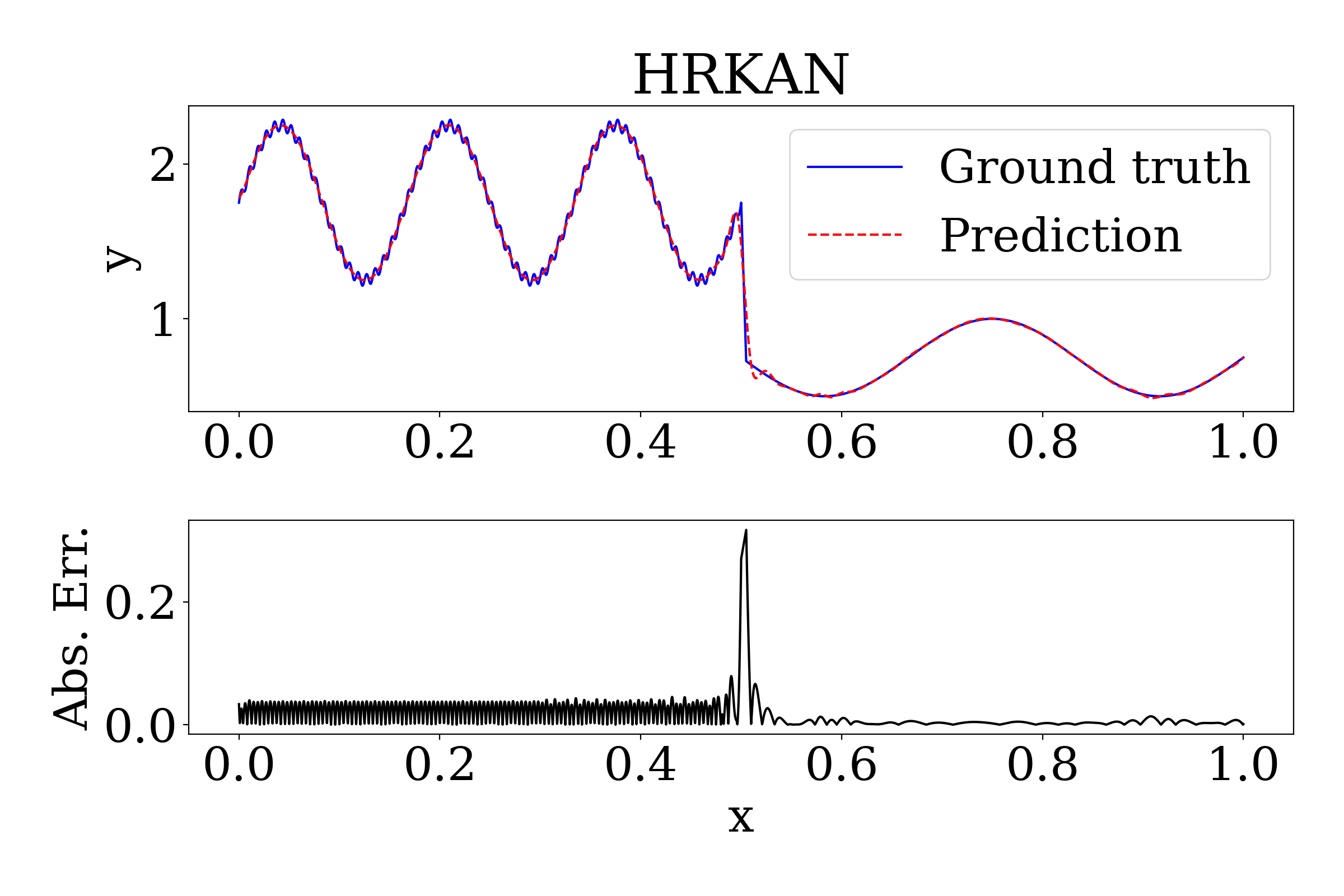}
}
\hspace{0.001cm}
\centering
{\label{fig:1}
\centering
\includegraphics[width=0.4\linewidth]{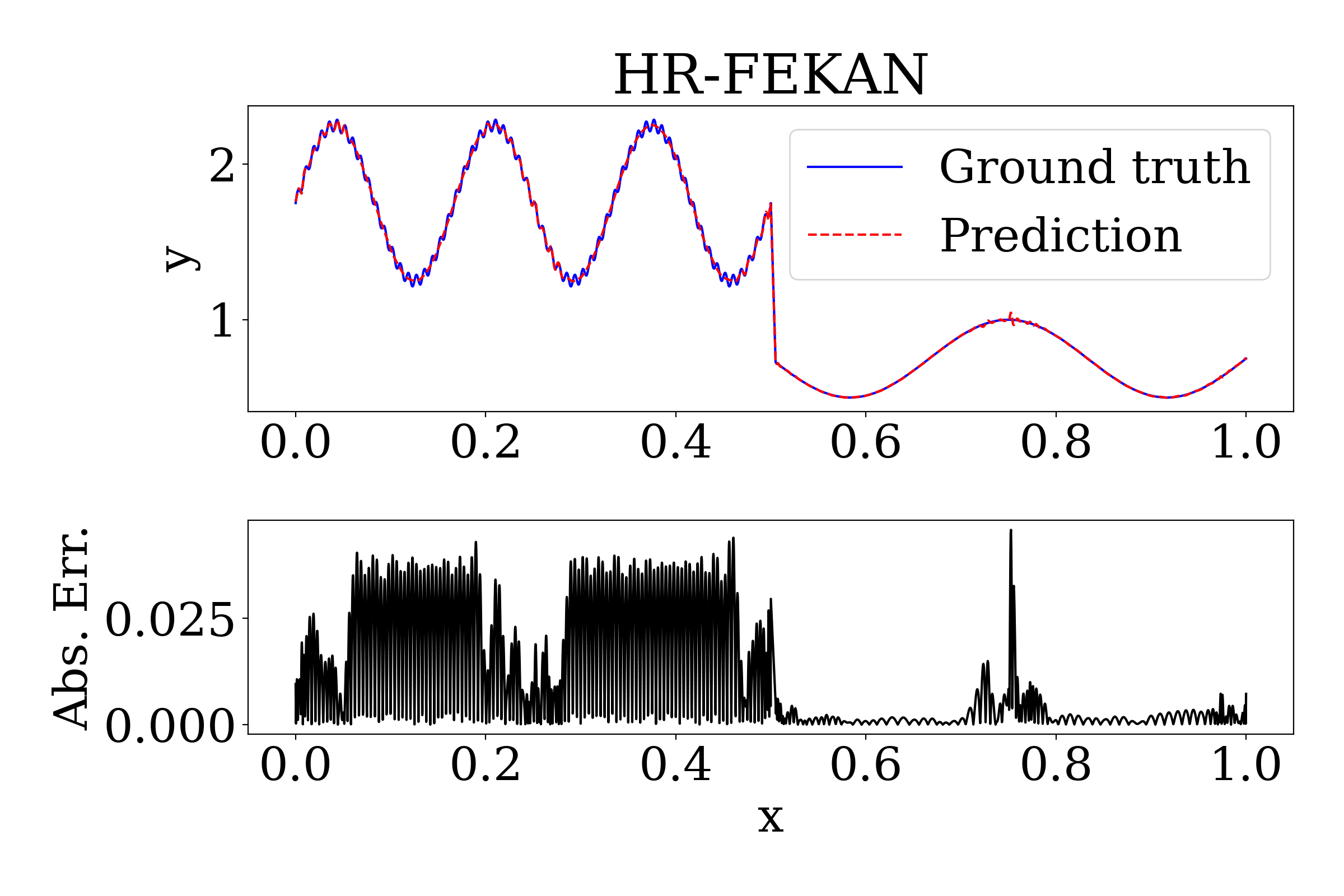}
}
\caption{Absolute error for a high-frequency test function using KAN and FEKAN with the ReLU basis ($k=2$, $G=15$, $n=2$).
}
\label{fig:funfit_relukan_compare}
\end{figure}

\noindent \textbf{HRKAN:}
\label{appx:hr}
\begin{figure}[H]
\centering
{\label{fig:1}
\centering
\includegraphics[width=0.4\linewidth]{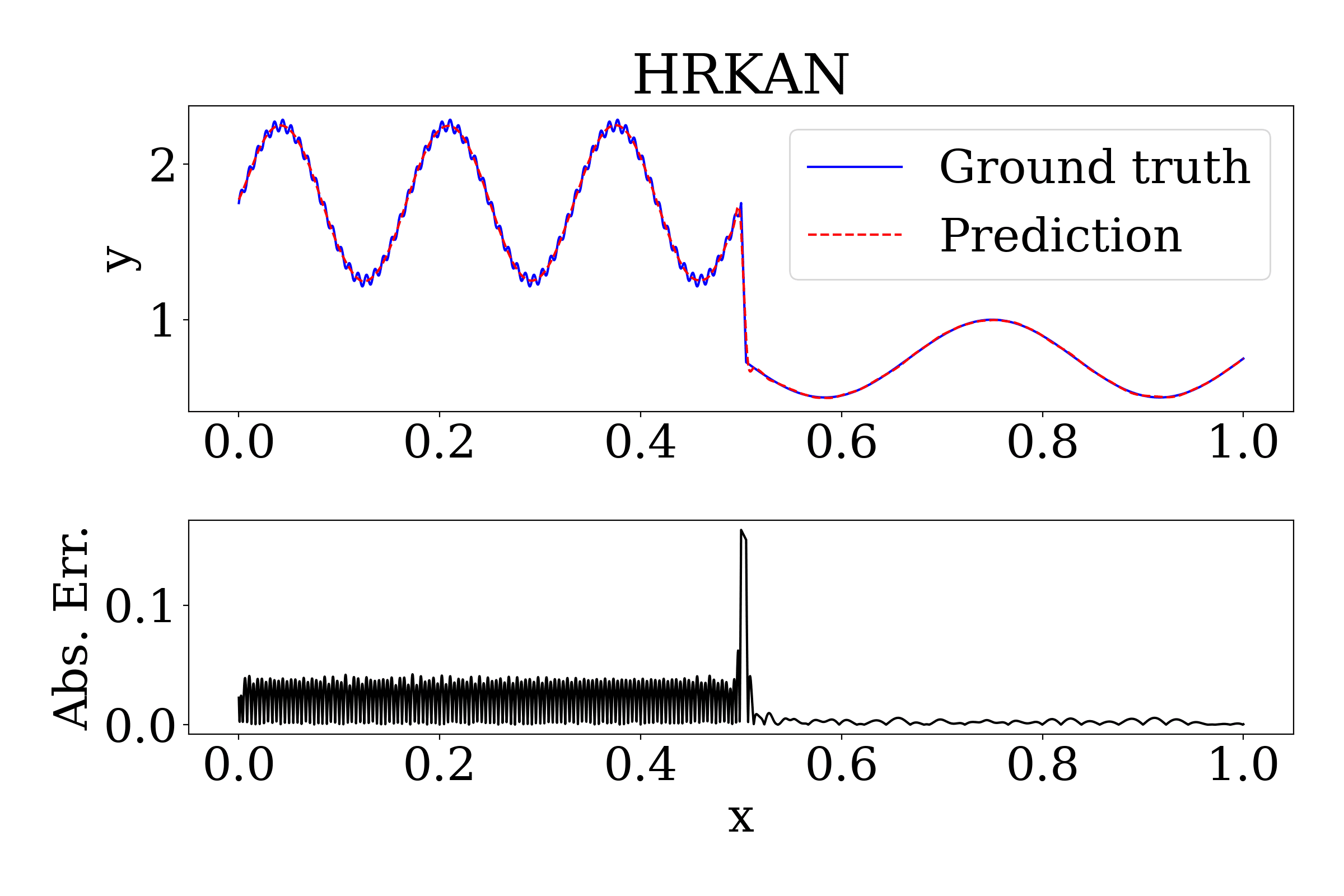}
}
\hspace{0.001cm}
\centering
{\label{fig:1}
\centering
\includegraphics[width=0.4\linewidth]{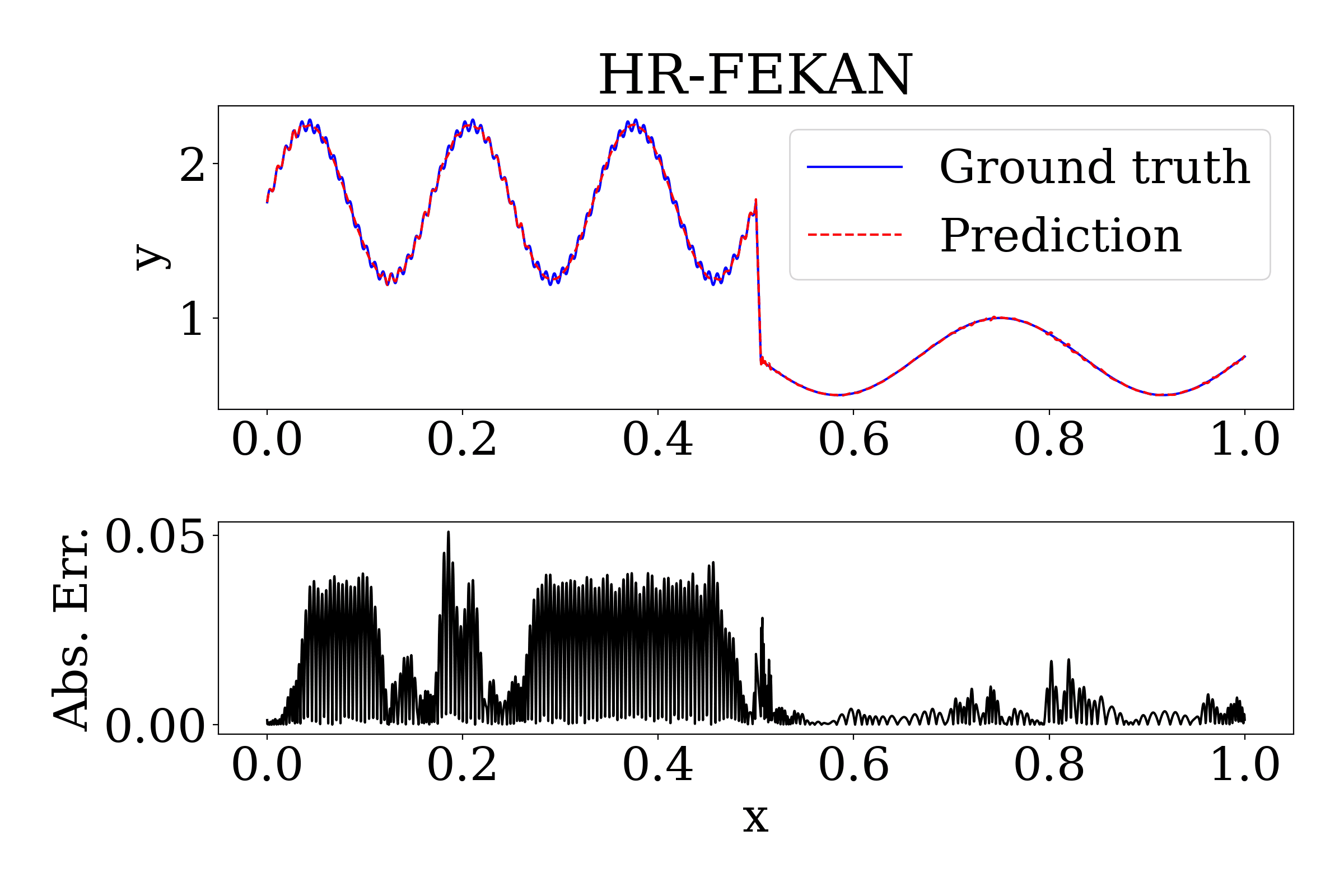}
}
\caption{Absolute error for a high-frequency test function using KAN and FEKAN with the HReLU basis ($k=2$, $G=15$, $n=3$).}
\label{fig:funfit_hrkan_compare}
\end{figure}

\noindent \textbf{WavKAN:}
\label{appx:wave}
\begin{figure}[H]
\centering
{\label{fig:1}
\centering
\includegraphics[width=0.4\linewidth]{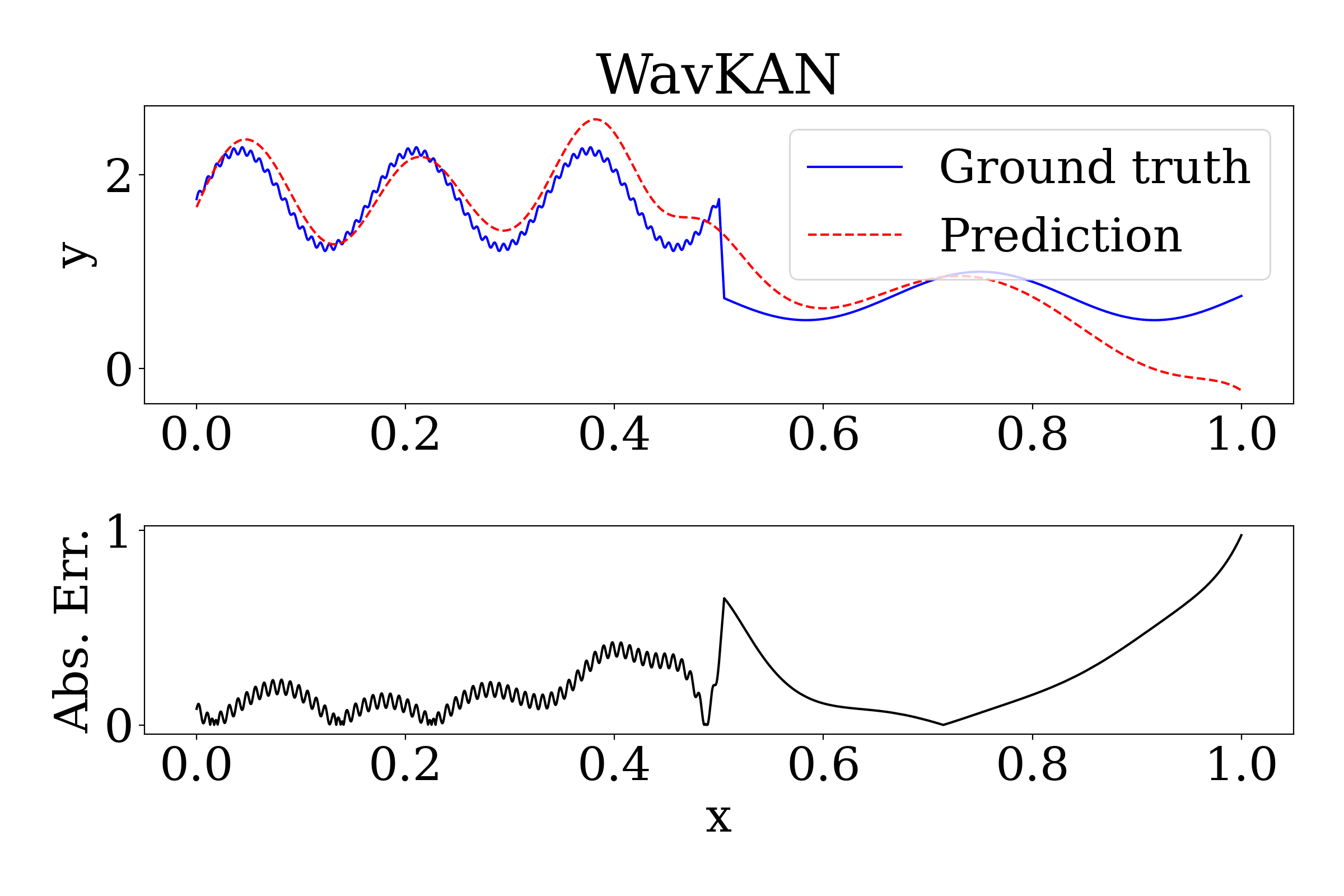}
}
\hspace{0.001cm}
\centering
{\label{fig:1}
\centering
\includegraphics[width=0.4\linewidth]{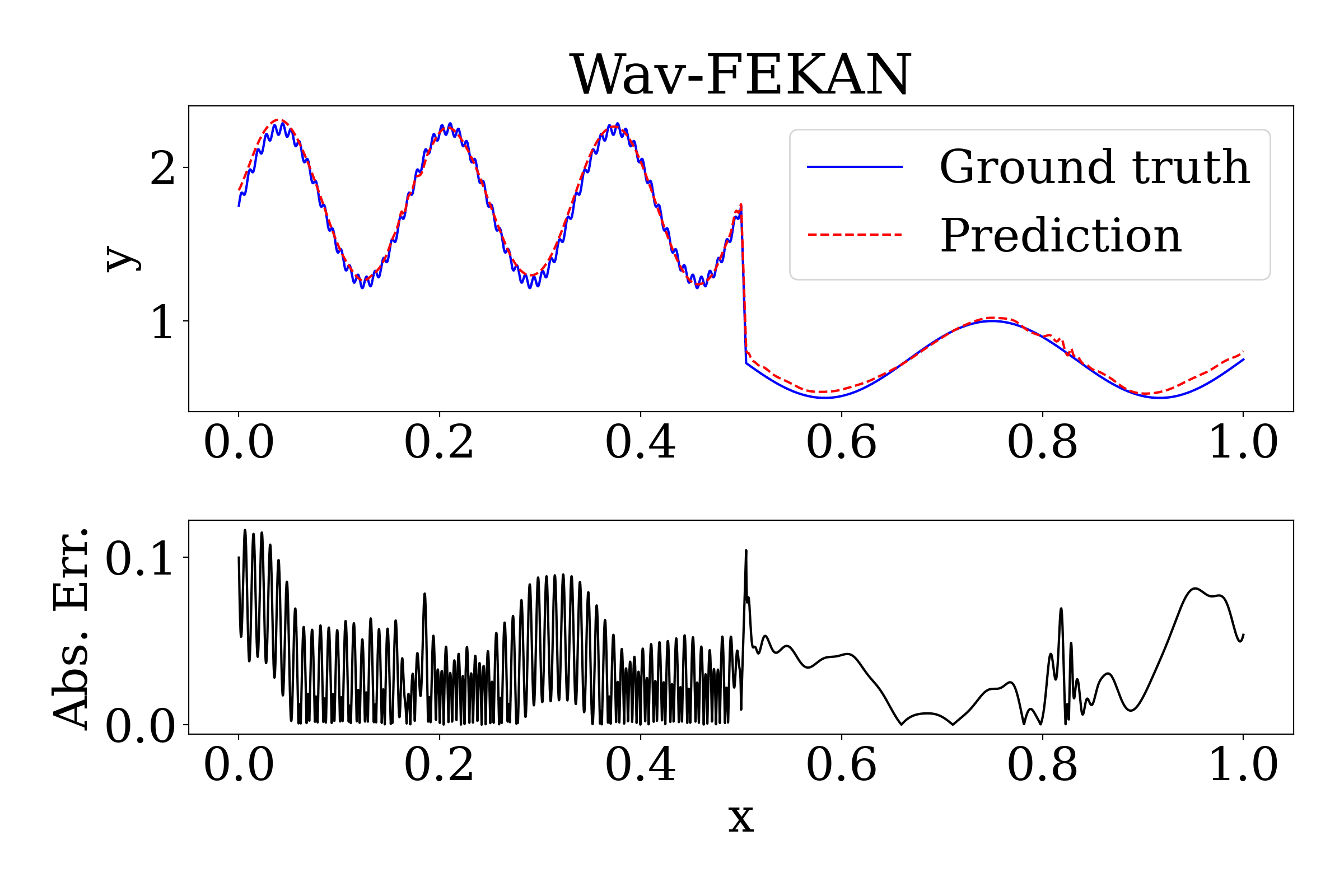}
}
\caption{Absolute error for a high-frequency test function using KAN and FEKAN with the wavelet basis in the form of a Derivative of Gaussian (DoG).}
\label{fig:funfit_wave_compare}
\end{figure}

\section{Helmholtz Equation}
\label{appx:helmholtz}

\subsection{Performance comparison for other basis functions:}
We report the function-approximation results for the Helmholtz problem described in Section~\ref{subsec:pifekan_helm}. All models were trained for 100{,}000 epochs using the Adam optimizer. Both KAN and FEKAN employ two hidden layers with seven activation units per layer. FEKAN further incorporates a feature-enrichment layer consisting of seven trigonometric terms per spatial dimension (14 in total), defined as
\begin{align}
\gamma(x,y) = \bigl[
\{1,\cos x,\sin x,\cos(2x),\sin(2x),\cos(3x),\sin(3x)\},\;
\{1,\cos y,\sin y,\cos(2y),\sin(2y),\cos(3y),\sin(3y)\}
\bigr].
\label{eq:feature_enrich_helm_main}
\end{align}
Both architectures are evaluated using spline, Fourier, radial basis function (RBF)~\cite{li2024kolmogorov}, Chebyshev~\cite{ss2024chebyshev}, ReLU~\cite{qiu2024relu}, HReLU~\cite{so2025higher}, and wavelet~\cite{bozorgasl2405wav} bases. The corresponding internal parametric configurations are summarized in Table~\ref{tab:helm_compare}, and absolute-error comparisons are presented below.

\begin{figure}[H]
\centering
{
\centering
\includegraphics[width=0.8\linewidth]{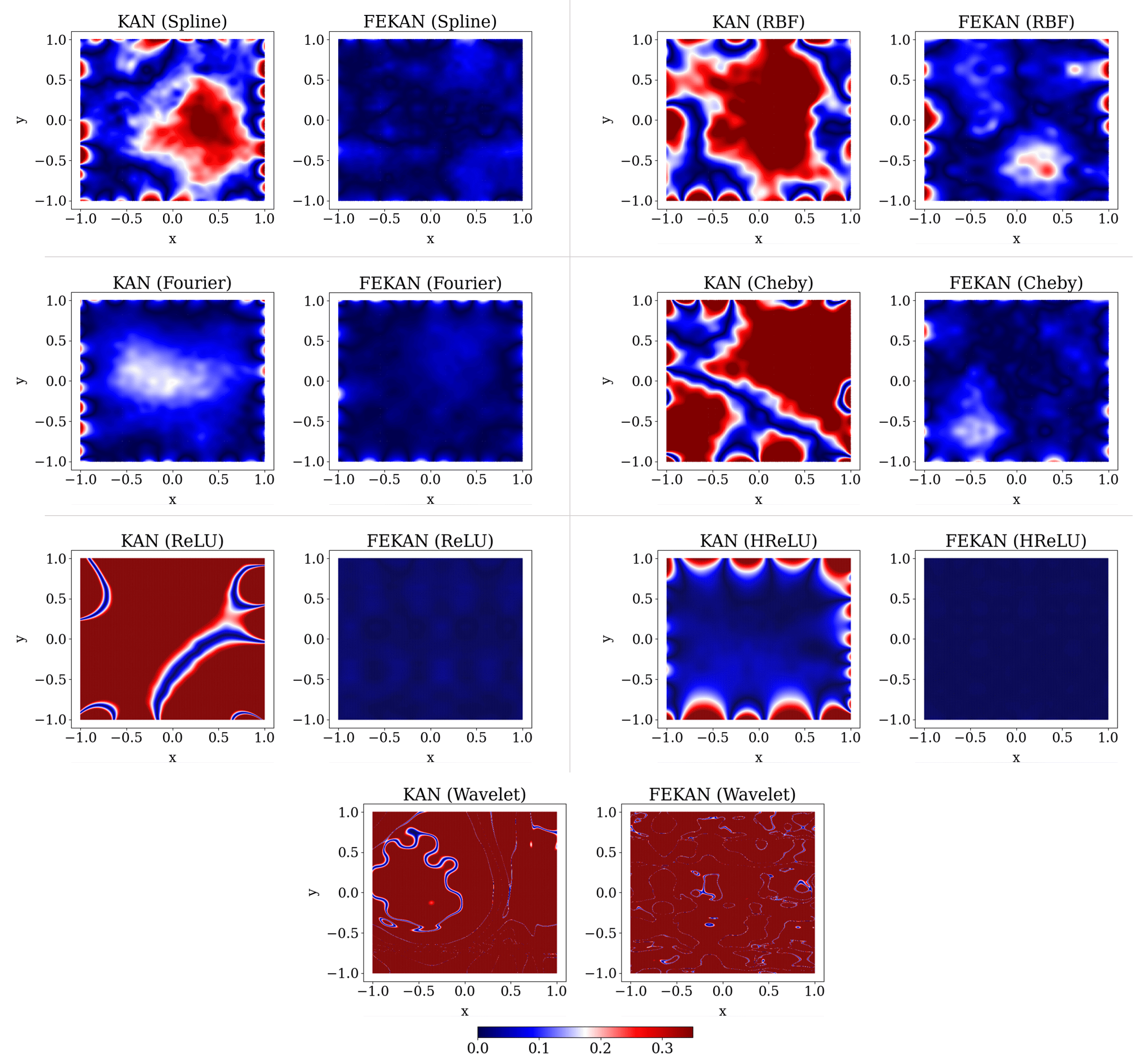}
}
\caption{Absolute error for the Helmholtz equation: (Left) PI-KAN and (Right) PI-FEKAN using (a) spline, (b) RBF, (c) Fourier, (d) Chebyshev, (e) ReLU, (f) HReLU, and (g) wavelet bases.}
\label{fig:helm_compare}
\end{figure}

\subsection{Random Feature Enrichment:}

\begin{figure}[H]
\centering
{
\centering
\includegraphics[width=0.9\linewidth]{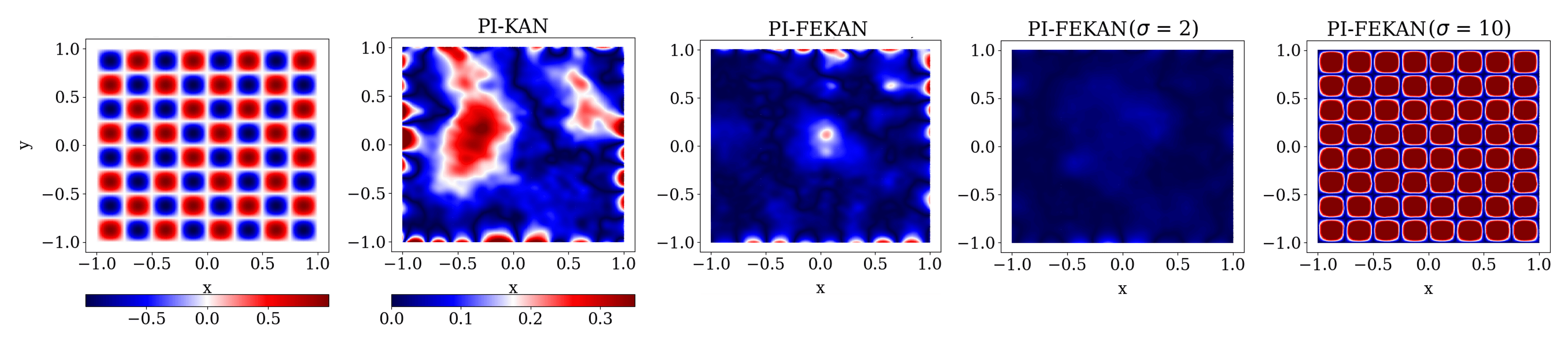}
}
\caption{Absolute error for the Helmholtz equation: (From Left to Right) PI-KAN, PI-FEKAN, PI-FEKAN ($\sigma = 2$), PI-FEKAN ($\sigma = 10$) using spline basis function. Here, $\sigma^2$ is the variance of the isotropic Gaussian distribution ($\mathcal{N}(0,\sigma^2)$) from which the frequencies for the feature enrichment terms were sampled.}
\label{fig:helm_spline_abs_rff}
\end{figure}

\begin{figure}[H]
\centering
{
\centering
\includegraphics[width=0.9\linewidth]{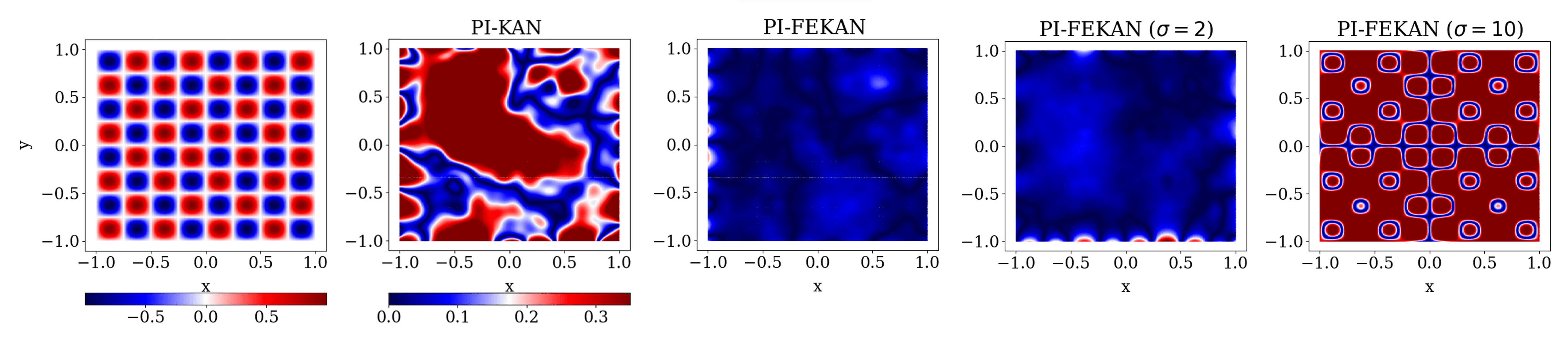}
}
\caption{Absolute error for the Helmholtz equation: (From Left to Right) PI-KAN, PI-FEKAN, PI-FEKAN ($\sigma = 2$), PI-FEKAN ($\sigma = 10$) using Chebyshev basis function. Here, $\sigma^2$ is the variance of the isotropic Gaussian distribution ($\mathcal{N}(0,\sigma^2)$) from which the frequencies for the feature enrichment terms were sampled.}
\label{fig:helm_cheby_abs_rff}
\end{figure}

\section{Phase-Wise Data Introduction}
\label{appx:helmholtz_forget}
We report the function-approximation results described in Section~\ref{subsec:pifekan_forgetfree}. Phases~1--3 are each trained for 20{,}000 epochs, whereas Phase~4 is trained for 45{,}000 epochs, all using the Adam optimizer. The feature-enrichment layer comprises seven trigonometric terms, as defined in Equation~\ref{eq:feature_enrich_helm_main}.

\begin{figure}[H]
\centering
{\label{fig:1}
\centering
\includegraphics[width=0.8\linewidth]{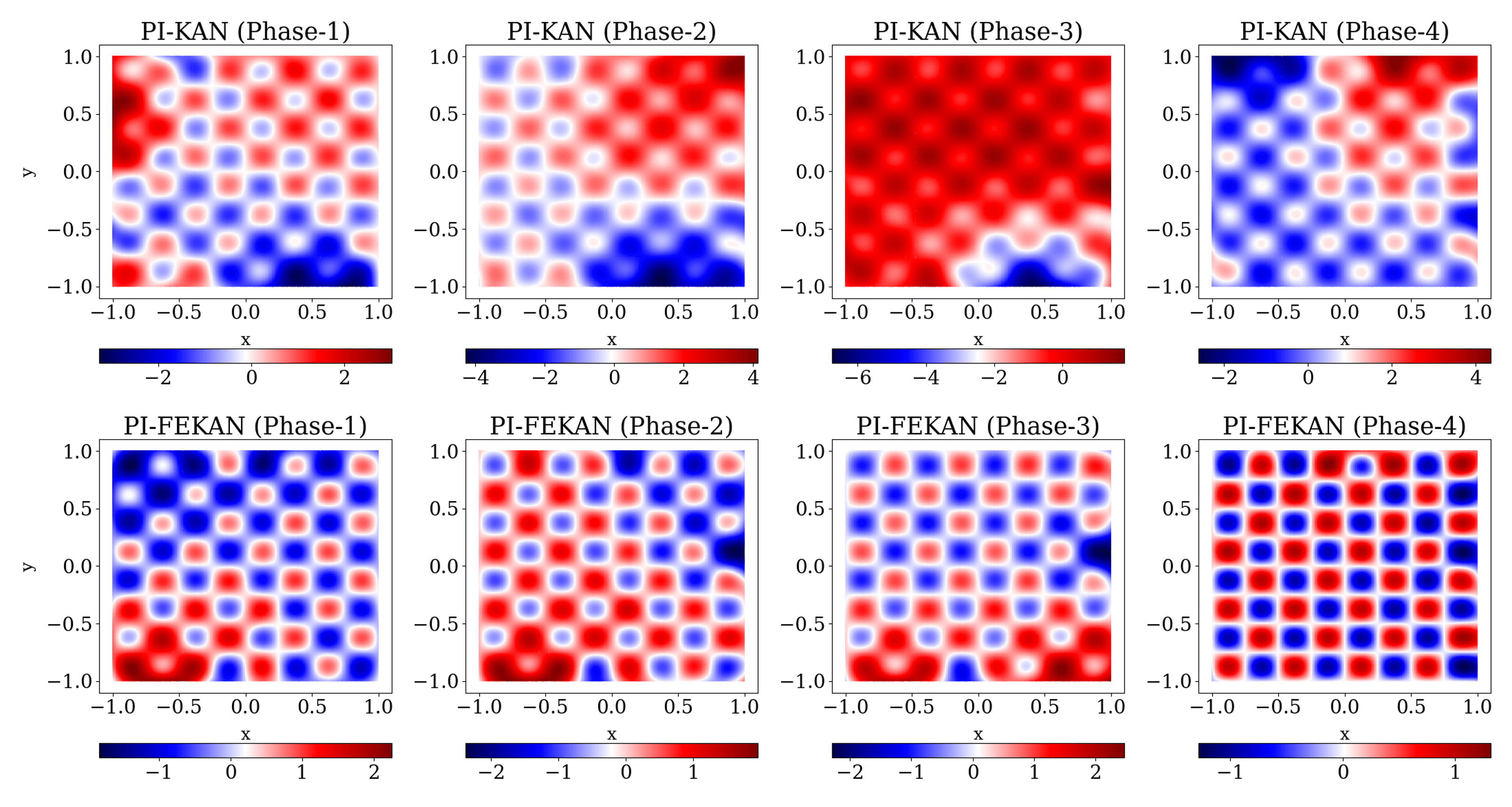}
}
\caption{Solution of the Helmholtz equation for (Top) PI-KAN and (Bottom) PI-FEKAN using the spline basis ($G=3$) at different phases of boundary data introduction.}
\label{fig:forget_free_domain_G3}
\end{figure}

\begin{figure}[H]
\centering
{\label{fig:1}
\centering
\includegraphics[width=0.8\linewidth]{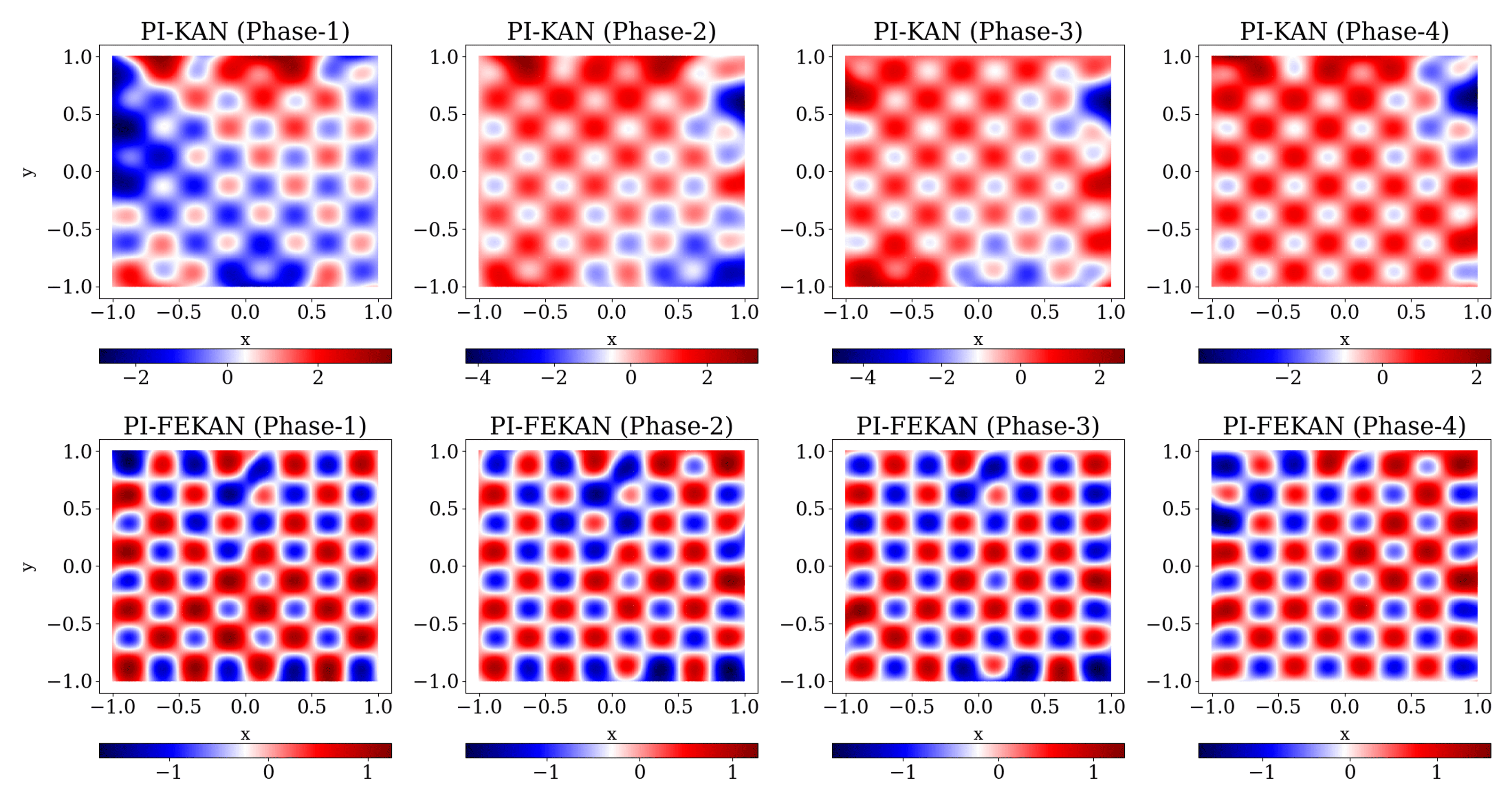}
}
\caption{Solution of the Helmholtz equation for (Top) PI-KAN and (Bottom) PI-FEKAN using the spline basis ($G=6$) at different phases of boundary data introduction.}
\label{fig:forget_free_domain_G6}
\end{figure}

\section{Separable Physics-Informed FEKAN}
\label{appx:spi-fekan}

We report the training procedure and hyperparameter settings for the Helmholtz equation and Klein-Gordon equation described in Section~\ref{subsec:spifekan_kg_helm}. All models were trained for 50{,}000 epochs using the Adam optimizer. Both SPI-KAN and SPI-FEKAN employ 3 body networks where each body network constitutes 1 hidden layer with 5 activation units per layer and 10 embedding layer activation units representing the output layer of each body network. SPI-FEKAN further incorporates a feature-enrichment layer consisting 5 trigonometric terms per spatial dimension (or per body network), defined as

\begin{align*}
\gamma_{1}(x) = \bigl[1,\cos x,\sin x,\cos(2x),\sin(2x)\bigr]~,\\
\gamma_{2}(y) = \bigl[1,\cos y,\sin y,\cos(2y),\sin(2y)\bigr]~,\\
\gamma_{3}(z) = \bigl[1,\cos z,\sin z,\cos(2z),\sin(2z)\bigr]~.
\label{eq:feature_enrich_spifekan}
\end{align*}

Here, $\gamma_1(x)$, $\gamma_2(y)$, $\gamma_3(z)$ are the feature enrichment for body network-1, body network-2 and body network-3 respectively. Furthermore, the aforementioned is used for solving Helmholtz equation as well as the Klein-Gordon equation in Section \ref{subsec:spifekan_kg_helm}.

\bibliographystyle{elsarticle-num} 
\bibliography{reference}
\end{document}